\newcommand{\RB}{\mathbb{R}}
\newcommand{\PB}{\mathbb{P}}
\newcommand{\NB}{\mathbb{N}}
\newcommand{\EB}{\mathbb{E}}
\newcommand{\supp}{\normalfont\text{supp}}
\theoremstyle{plain}
\newtheorem{theorem}{Theorem}[section]
\newtheorem{lemma}[theorem]{Lemma}
\newtheorem{ansatz}[theorem]{Ansatz}
\theoremstyle{definition}
\newtheorem{definition}[theorem]{Definition}
\theoremstyle{remark}
\title{On progressive sharpening, flat minima and generalisation}
\author{%
  Lachlan E. MacDonald\thanks{lemacdonald@protonmail.com} \\
  Mathematical Institute for Data Science\\
  Johns Hopkins University\\
  Baltimore, MD 21218, USA \\
  \and
Jack Valmadre \& Simon Lucey \\
Australian Institute for Machine Learning \\
University of Adelaide \\
Adelaide, SA 5000, Australia
}
\begin{document}

\maketitle

\begin{abstract}
    We present a new approach to understanding the relationship between loss curvature and input-output model behaviour in deep learning.  Specifically, we use existing empirical analyses of the spectrum of deep network loss Hessians to ground an ansatz tying together the loss Hessian and the input-output Jacobian over training samples during the training of deep neural networks. We then prove a series of theoretical results which quantify the degree to which the input-output Jacobian of a model approximates its Lipschitz norm over a data distribution, and deduce a novel generalisation bound in terms of the empirical Jacobian. We use our ansatz, together with our theoretical results, to give a new account of the recently observed progressive sharpening phenomenon, as well as the generalisation properties of flat minima. Experimental evidence is provided to validate our claims.
\end{abstract}

\section{Introduction}

In this paper, we attempt to clarify how the curvature of the loss landscape of a deep neural network is related to the input-output behaviour of the model. Via a single mechanism, our Ansatz \ref{ansatz}, we offer new explanations of both the as-yet unexplained progressive sharpening phenomenon \cite{coheneos} observed in the training of deep neural networks, and the long-speculative relationship between loss curvature and generalisation \cite{hochreiter, keskar, chaudhari, foret}.

The mechanism we propose in Ansatz \ref{ansatz} to mediate the relationship between loss curvature and input-output model behaviour is the input-output Jacobian of the model over the training sample, which is already understood to play a role in determining generalisation \cite{drucker, bartlett2, neyshabur, hoffman, bubeck, gouk, novak2018sensitivity, maetal}.  Our proposal is based on empirical observations made of the eigenspectrum of the Hessian of deep neural networks \cite{papyan1, papyan2, ghorbani}, whose outliers can be attributed to a summand of the Hessian known as the Gauss-Newton matrix. Crucially, the Gauss-Newton matrix is second-order only in the cost function, and solely first-order in the network layers.

The Gauss-Newton matrix is a Gram matrix (i.e. a product $A^TA$ for some matrix $A$), and its conjugate $AA^T$, closely related to the tangent kernel identified in \cite{jacot}, has the same nonzero eigenvalues. Expanding $AA^T$ reveals that it is determined in part by composite input-output layer Jacobians. Thus, insofar as the outlying Hessian eigenvalues are determined by those of the Gauss-Newton matrix, and insofar as input-output layer Jacobians determine the spectrum of the Gauss-Newton matrix via its isospectrality to its conjugate, one expects the largest singular values of the loss Hessian to be closely related to those of the model's input-output Jacobian.

Our contributions in this paper are as follows.
\begin{enumerate}
    \item Based on previous empirical work identifying outlying loss Hessian eigenvalues with those of the Gauss-Newton matrix, we propose an ansatz: that under certain conditions, the largest eigenvalues of the loss Hessian control the growth of the largest singular values of the model's input-output Jacobian.  We focus in particular on the \emph{largest} eigenvalue of the Hessian (the \emph{sharpness}) and the \emph{largest} singular value of the Jacobian (its \emph{spectral norm}, which we abbreviate to simply \emph{norm} in what follows).
    \item In Section \ref{sec:theory} we provide theorems which quantify the extent to which the maximum input-output Jacobian spectral norm of a model over a training set will approximate the Lipschitz norm of the model over the underlying data distribution during training for datasets of practical relevance, including those generated by a generative adversarial network (GAN) or implicit neural function.
    \item In Section \ref{sec:progressivesharpening}, we provide a theorem which gives a data-dependent, high probability lower bound on the empirical Jacobian norm of a model that grows during any effective training procedure. We combine this result with Ansatz \ref{ansatz} to give a new account of progressive sharpening, which enables us to change the severity of sharpening by scaling inputs and outputs. We report on classification experiments that validate our account.
    \item In Section \ref{sec:generalisation}, we provide a novel bound on the generalisation gap of the model in terms of the empirical Jacobian norm. Synthesising this result with Ansatz \ref{ansatz}, we argue that low loss curvature implies good generalisation only insofar as low loss curvature implies small empirical Jacobian norm. We report on experiments measuring the effect of hyperparameters such as learning rate and batch size on loss sharpness, Jacobian norm and generalisation, revealing the validity of our explanation. Finally, we present the results of experiments measuring the loss sharpness, Jacobian norm and generalisation gap of networks trained with a variety of regularisation measures, revealing in all cases the superior correlation of Jacobian norm and generalisation when compared with loss sharpness, whether or not the regularisation technique targets loss sharpness.
\end{enumerate}

\section{Related work}


\textbf{Flatness, Jacobians and generalisation:} The position that flatter minima generalise better dates back to \cite{hochreiter}. It has since become a staple concept in the toolbox of deep learning practitioners \cite{keskar, chaudhari, foret, chensam}, with its incorporation into training schemes yielding state-of-the-art performance in many tasks. Its effectiveness has motivated the study of the effect that learning hyperparameters such as learning rate and batch size have on the sharpness of minima to which the algorithm will converge \cite{wudynamicalstability,zhusgd,mulayoff}. The hypothesis has received substantial criticism. In \cite{dinh} it is shown that flatness is not \emph{necessary} for good generalisation in ReLU models, since such models are invariant to scaling symmetries of the parameters which arbitrarily sharpen their loss landscapes. This observation has motivated the consideration of scale-invariant measures of loss sharpness \cite{lyu1, jangetal}. In \cite{granziol}, the \emph{sufficiency} of loss flatness for good generalisation is disputed by showing that models trained with the cross-entropy cost generalise better with weight decay than without, despite the former ending up in sharper minima. Sufficiency of flatness for generalisation is further challenged for Gaussian-activated networks on regression tasks in \cite{sameera}.

Relatively little work has looked into the mechanism underlying the relationship between loss curvature and generalisation. PAC-Bayes bounds \cite{dziugaite, foret, fhe1} provide some theoretical support for the hypothesis that loss flatness is sufficient for good generalisation, however this hypothesis, taken unconditionally, is known to be false empirically \cite{granziol, sameera}.  The same is true of \cite{petzka2021relative}, which cleverly bounds the generalisation gap by a rescaled loss Hessian trace. Despite advocating for the empirical Jacobian as the link between loss curvature and generalisation as we do, the insightful papers \cite{maetal, gambaetal} consider the relationship only at a critical point of the square cost, and in particular cannot offer explanations for progressive sharpening or the generalisation benefits of training with an only an \emph{initially} large learning rate that is gradually decayed, as our account can.  \cite{leeetal} advocates empirically for the parameter-output Jacobian of the model as the mediating link between loss sharpness and generalisation, but provide no theoretical indication for why this should be the case. We provide evidence in Appendix \ref{app:leeetal} indicating that our proposal may be able to fill this gap in \cite{leeetal}.

Finally, we note that among other generalisation studies in the literature, ours is most closely related to \cite{maetal} which exploits properties of the data distribution to prove a generalisation bound. Our bound is also related to \cite{bartlett2, bubeck, sulam}, which recognise sensitivity to input perturbations as key to generalisation. Like \cite{weima}, our work in addition formalises the intuitive idea that the empirical Jacobian norm regulates generalisation \cite{drucker, hoffman, novak2018sensitivity}. Further study of our ansatz in relation to the edge of stability phenomenon \cite{coheneos} may be of utility to algorithmic stability approaches to generalisation \cite{bousquet, hardt, charles, kuzborskij, bassily}.

\textbf{Progressive sharpening and edge of stability:} The effect of learning rate $\eta$ on loss sharpness has been understood to some extent for several years \cite{wudynamicalstability}. In \cite{coheneos}, this relationship was attended to for deep networks with a rigorous empirical study which showed that, after an initial period of sharpness increase during training called \emph{progressive sharpening}, the sharpness increase halted at around $2/\eta$ and oscillated there while loss continued to decrease non-monotonically, a phase called \emph{edge of stability}.  These phenomena show that the typical assumptions made in theoretical work, namely that the learning rate is always smaller than twice the reciprocal of the sharpness \cite{zhu, du1, du2, macdonald}, do not hold in practice.  Significant work has since been conducted to understand these phenomena \cite{lyu1, leeeos, aroraeos, wangeos, zhueos, ahneos}, with \cite{damianeos} showing in particular that edge of stability is a universal phenomenon arising from a third-order stabilising effect that must be taken into account when the sharpness grows above $2/\eta$. In contrast, progressive sharpening is not universal; it is primarily observed only in genuinely deep neural networks. Although it has been correlated to growth in the norm of the output layer \cite{wangeos}, the cause of progressive sharpening has so far remained mysterious. The mechanism we propose is the first account of which we are aware for a \emph{cause} of progressive sharpening.

\section{Background and paper outline}

\subsection{Outliers in the spectrum}

We follow the formal framework introduced in \cite{macdonald}, which allows us to treat all neural network layers on a common footing.  Specifically, we consider a multilayer parameterised system $\{f_i:\RB^{p_i}\times\RB^{d_{i-1}}\rightarrow\RB^{d_i}\}_{i=1}^{L}$ (the layers), with a data matrix $X\in\RB^{d_0\times N}$ consisting of $N$ data vectors in $\RB^{d_0}$.  We denote by $F:\RB^{p_1+\dots+p_L}\rightarrow\RB^{d_{L}\times N}$ the associated parameter-function map defined by
\begin{equation}            F_X(\theta_1,\dots,\theta_L)_i:=f_L(\theta_L)\circ\cdots\circ f_1(\theta_1)(X)_i\in\RB^{d_L}\,\qquad i=1,\dots,N.
\end{equation}
Given a convex cost function $c:\RB^{d_L}\times\RB^{d_L}\rightarrow\RB$ and a target matrix $Y\in\RB^{d_L\times N}$, we consider the associated loss
\begin{equation}
    \ell(\vec{\theta}):=\gamma_Y\circ F_X(\vec{\theta}) = \frac{1}{N}\sum_{i=1}^{N}c\big(F_X(\vec{\theta})_i,Y_i\big),
\end{equation}
where $\gamma_Y:\RB^{d_L\times N}\rightarrow\RB$ is defined by $\gamma_Y(Z):=N^{-1}\sum_{i=1}^{N}c(Z_i,Y_i)$.

Using the chain rule and the product rule, one observes that the Hessian $D^2\ell$ of $\ell$ admits the decomposition
\begin{equation}\label{eq:hessian}
    D^2\ell = DF_X^T\,D^2\gamma_Y\,DF_X + D\gamma_Y\,D^2F_X.
\end{equation}
The first of these terms, often called the \emph{Gauss-Newton matrix}, is positive-semidefinite by convexity of $\gamma_Y$.  It has been demonstrated empirically in a vast number of practical settings that the largest, outlying eigenvalues of the Hessian throughout training correlate closely with those of the Gauss-Newton matrix \cite{papyan1, papyan2, coheneos}.  In attempting to understand the relationship between loss sharpness and model behaviour, therefore, empirical evidence invites us to devote special attention to the Gauss-Newton matrix. \textbf{In what follows we assume based on this evidence that the outlying eigenvalues of the Gauss-Newton matrix \emph{determine} those of the loss Hessian}.

\subsection{The Gauss-Newton matrix and input-output Jacobians}

Letting $C$ denote the square root $(D^2\gamma_Y)^{\frac{1}{2}}$, we see that the Gauss-Newton matrix $DF_X^T\,C^2\,DF_X$ has the same nonzero eigenvalues as its conjugate matrix $C\,DF_X\,DF_X^T\,C^T$, which is closely related to the \emph{tangent kernel} $DF_X\,DF_X^T$ identified in \cite{jacot}.  Letting $Jf_l$ and $Df_l$ denote the input-output and parameter derivatives of a layer $f_l:\RB^p\times\RB^{d_{l-1}}\rightarrow\RB^{d_{l}}$ respectively, one sees that
\begin{equation}\label{eq:tangentkernel}
    C\,DF_X\,DF_X^T\,C^T = C\,\bigg(\sum_{l=1}^{L}\bigg(Jf_L\cdots Jf_{l+1}\,Df_l\,Df_l^T\,Jf_{l+1}^T\cdots Jf_L^T\bigg)\bigg)C^T
\end{equation}
is a sum of positive-semidefinite matrices.  Each summand is determined in large part by the composites of input-output layer Jacobians. Note, however, that the input-output Jacobian of the \emph{first} layer does not appear; only its parameter derivative does.

It is clear then that insofar as the largest eigenvalues of the Hessian are determined by the Gauss-Newton matrix, these eigenvalues are determined in part by the input-output Jacobians of all layers following the first.  We judge the following ansatz to be intuitively clear from careful consideration of Equation \eqref{eq:tangentkernel}.

\begin{ansatz}\label{ansatz}
Under certain conditions, an increase in the magnitude of the input-output Jacobian of a deep neural network will cause an increase in the loss sharpness.  Conversely, a decrease in the sharpness will cause a decrease in the magnitude of the input-output Jacobian.
\end{ansatz}

Note that \cite{maetal, gambaetal} already contain a rigorous proofs of Ansatz \ref{ansatz} under special conditions: namely at critical points of the square cost. This restricted setting is not sufficient for our purposes. To understand the relationship between loss curvature and model behaviour adequately, we must invoke Ansatz \ref{ansatz} \emph{throughout training} and most frequently with the \emph{cross-entropy cost function}, where \cite{maetal, gambaetal} do not apply.

Importantly, \emph{we make no claim that Ansatz \ref{ansatz} holds unconditionally}. Generally, the relationship proposed in Ansatz \ref{ansatz} is \emph{necessarily} mediated by the following factors present in Equation \eqref{eq:tangentkernel}: (1) the square root $C$ of the second derivative of the cost function; (2) the presence of the parameter derivatives $Df_l$; (3) the complete absence of the Jacobian of the first layer. These mediating factors prevent a simple upper bound of input-output Jacobian by loss Hessian, and we spent a significant amount of time and computational power exposing the importance of these mediating factors empirically (Figure \ref{fig:weightdecay}, Appendix \ref{app:mediating}). In all cases where Ansatz \ref{ansatz} appears not to apply, we are able to account for its inadequacy in terms of these mediating factors using Equation \eqref{eq:tangentkernel} and further measurements. Moreover, we are able to to explain all known exceptions to the ``rules" of the superior generalisation of flat minima \cite{granziol, sameera} and of progressive sharpening \cite{coheneos} as consequences of unusual behaviour in these mediating factors. Ours is the only account we are aware of for these phenomena which can make this claim.

\section{General theory}\label{sec:theory}

All of what follows is motivated by the following idea: the Lipschitz norm of a differentiable function is upper-bounded by the supremum, over the relevant data distribution, of the spectral norm of its Jacobian. Intuitively, this supremum will itself be approximated by the maximum Jacobian norm over a finite sample of points from the distribution, which by Ansatz \ref{ansatz} can be expected to relate to the loss Hessian of the model.  It is this intuition we seek to formalise and, and whose practical relevance we seek to justify, in this section.


In what follows, we will use $\RB_{>0}$ to denote the positive real numbers. We will use $\PB$ to denote a probability measure on $\RB^d$, whose support $\supp(\PB)$ we assume to be a metric space with metric inherited from $\RB^d$. Given a Lipschitz function $g:\RB^d\rightarrow\RB^{d'}$, we use $\|g\|_{Lip,\PB}$ to denote the Lipschitz norm of $g|_{\supp(\PB)}$. Note the dual meaning of $\|g\|_2$: when $g$ is \emph{vector}-valued, it refers to the Euclidean norm, while when $g$ is \emph{operator}-valued (e.g., when $g = Jf$ is a Jacobian), $\|g\|_2$ refers to the spectral norm. We will frequently invoke pointwise evaluation of the derivative of Lipschitz functions, and in doing so always rely on the fact that our evaluations are probabilistic and Lipschitz functions are differentiable almost everywhere. We will use $B(x,\delta)\subset\RB^d$ to denote the \emph{closed} Euclidean ball of radius $\delta$ centered $x$. Finally, given a locally bounded (but not necessarily continuous) function $g:\RB^d\rightarrow\RB^{d'}$ and a compact set $S\subset\RB^d$, we define the \emph{local variation of $g$ over $S$} by
\begin{equation}\label{variation}
    V_{S}(g):=\sup_{x,y\in S}\|g(x)-g(y)\|_2.
\end{equation}
Thus, for instance, if $g$ is Lipschitz and $S$ is a ball of radius $\delta$, then $V_S(g)\leq2\delta\|g|_{S}\|_{Lip}$. All proofs are deferred to the appendix.

We begin by specifying the data distributions with which we will be concerned.

\begin{definition}\label{def:good}
    Let $\delta:\NB\times(0,1)\rightarrow\RB_{>0}$ be a function such that for each $\epsilon\in(0,1)$, the function $N\mapsto\delta(N,\epsilon)$ is decreasing and vanishes as $N\rightarrow\infty$. We say that a distribution $\PB$ is $\delta$-\textbf{good} if for every $N\in\NB$ and $\epsilon\in(0,1)$, with probability at least $1-\epsilon$ over i.i.d. samples $(x_1,\dots,x_N)$ from $\PB$, one has $\supp(\PB)\subset\bigcup_{i=1}^{N}B(x_i,\delta(N,\epsilon))$.
\end{definition}

A similar assumption on the data distribution is adopted in \cite{maetal}, however no \emph{proof} is given in \cite{maetal} that such an assumption is characteristic of data distributions of practical interest. Our first theorem, derived from \cite[Theorem 2.1]{reznikov}, is that many data distributions of practical interest in deep learning are indeed examples of Definition \ref{def:good}.

\begin{theorem}\label{thm:satisfy}
    Suppose that $\PB$ is the normalised Riemannian volume measure of a compact, connected, embedded, $d$-dimensional submanifold of Euclidean space (possibly with boundary and corners). Then there exists $\delta:\NB\times(0,1)\rightarrow\RB_{>0}$, with $\delta(N,\epsilon) = O\big((\log(N\epsilon^{-1})N^{-1})^{\frac{1}{d}}\big)$, such that $\PB$ is $\delta$-good.

    In particular, the uniform distribution on the unit hypercube $[0,1]^d$ in $\RB^d$ is $\delta_{[0,1]^d}$-good with
    \begin{equation}
        \delta_{[0,1]^d}(N,\epsilon)=\frac{4\,\Gamma(\frac{d}{2}+1)^{\frac{1}{d}}}{\sqrt{\pi}}\bigg(\frac{\log(N\epsilon^{-1})}{N}\bigg)^{\frac{1}{d}},
    \end{equation}
    and, for $0<\gamma<1$ and $N$ sufficiently large, the uniform distribution on the $d$-dimensional unit hypersphere $S^{d}\subset\RB^{d+1}$ is $\delta_{S^{d}}$-good with
    \begin{equation}
        \delta_{S^d}(N,\epsilon)=2^{1+\frac{1}{d}}(1-\gamma)^{-\frac{1}{2}}\bigg(\frac{\log(N\epsilon^{-1})}{N}\bigg)^{\frac{1}{d}}.
    \end{equation}
    
    Moreover, if $\PB$ is any $\delta$-good distribution, then the pushforward of $\PB$ by any Lipschitz function $g$ is $\|g\|_{Lip,\PB}\delta$-good.
\end{theorem}

Theorem \ref{thm:satisfy} says in particular that the distribution generated by any GAN whose latent space is the uniform distribution on a hypercube or on a sphere is an example of a good distribution, as is any distribution generated by an implicit neural function \cite{siren}.  Since a high dimensional isotropic Gaussian is close to a uniform distribution on a sphere, which is good by Theorem \ref{thm:satisfy}, we believe it likely that our theory can be strengthened to include Gaussian distributions, however we have not attempted to prove this and leave it to future work.

Our next theorem formalises our intuition that the maximum Jacobian is an approximate upper bound on the Lipschitz constant of a model throughout training. Its proof is implicit in the proofs of both of our following theorems which characterise progressive sharpening and generalisation in terms of the empirical Jacobian norm.

\begin{theorem}\label{thm:jacobianmaximum}
Suppose that $\PB$ is $\delta$-good, and let $N\in\NB$ and $0<\epsilon<1$. Then with probability at least $1-\epsilon$ over i.i.d. samples $(x_1,\dots,x_N)$ from $\PB$, for any Lipschitz function $f:\RB^d\rightarrow\RB^{d'}$, one has:\normalfont
\begin{equation}\label{eq:jacobianmaximum}
\|f\|_{Lip,\PB}\leq \max_i\big(\|Jf(x_i)\|_2+V_{B(x_i,\delta(N,\epsilon))}(Jf)\big).
\end{equation}
\end{theorem}

\section{Progressive sharpening}\label{sec:progressivesharpening}

In this section we give our account of progressive sharpening. We will assume that training of a model $f:\RB^d\rightarrow\RB^{d'}$ is undertaken using a cost function $c:\RB^{d'}\times\RB^{d'}\rightarrow\RB$, whose global minimum  we assume to be zero, and for which we assume that there is $\alpha>0$ such that $c(z_1,z_2)\geq \alpha\|z_1-z_2\|^2_2$ for all $z_1,z_2\in\RB^{d'}$. This is clearly always the case for the square cost; by Pinsker's inequality, it also holds for the Kullback-Leibler divergence on softmax outputs, and hence also for the cross-entropy cost provided one subtracts label entropy.

\begin{theorem}\label{thm:progressivesharpening}
    Suppose that $\PB$ is $\delta$-good, and let $N\in\NB$ and $0<\epsilon<1$. Let $f^*:\supp(\PB)\rightarrow\RB^{d'}$ be a target function and fix a real number $\ell>0$. Then with probability at least $1-\epsilon$ over i.i.d. samples $(x_1,\dots,x_N)$ drawn from $\PB$, for any Lipschitz function $f:\RB^d\rightarrow\RB^{d'}$ satisfying $c(f(x_i),f^*(x_i))\leq \ell^2\alpha$ for all $i$, one has
    \begin{equation}\label{eq:lipschitz}
        \max_i\|Jf(x_i)\|_2\geq \max_{i\neq j}\frac{\|f^*(x_i)-f^*(x_j)\|_2 - 2\ell}{\|x_i-x_j\|_2}  - \max_i V_{B(x_i,\delta(N,\epsilon))}(Jf).
    \end{equation}
\end{theorem}

\textbf{Progressive sharpening:} Theorem \ref{thm:jacobianmaximum} tells us that any training procedure that reduces the loss over all data points will thus also increase the sample-maximum Jacobian norm from a low starting point. Invoking Ansatz \ref{ansatz}, this increase in the sample-maximum Jacobian norm can be expected to cause a corresponding increase in the magnitude of the loss Hessian (see Appendix \ref{sec:bnjac}): progressive sharpening.  

Theorem \ref{thm:progressivesharpening} thus tells us that sharpening can be made more or less severe by scaling the distances between target values: indeed this is what we observe (Figure \ref{fig:labelspacing}, see also Appendix \ref{app:fullbatch}). One might also expect that for non-batch-normalised networks, scaling the inputs closer together would increase sharpening too. However, this is not the case, due to the mediating factors in Ansatz \ref{ansatz}. Specifically, while it is true that such scaling increases the growth rate of the Jacobian norm, this increase in Jacobian norm does \emph{not} necessarily increase sharpening, since it coincides with a \emph{decrease} in the magnitude of the parameter derivatives, which are key factors in relating the model Jacobian to the loss Hessian (see Appendix \ref{sec:param_deriv}). 

\textbf{Edge of stability:} Although the edge of stability mechanism explicated in \cite{damianeos} can be expected to put some downward pressure on the model Jacobian, due to the presence of the mediating factors discussed following Ansatz \ref{ansatz} it \emph{need not} cause the Jacobian to \emph{plateau} in the same way that loss sharpness does. Nonetheless, this downward pressure on Jacobian is important: it is to this that we attribute the better generalisation of models trained with large learning rate, even if that learning rate is decayed towards the end of training. We validate this empirically in the next section and Appendix \ref{app:leeetal}.

\textbf{Wide networks and linear models:} It has been observed empirically that wide networks exhibit less severe sharpening, and linear (kernel) models exhibit none at all \cite{coheneos}. It might be thought, since these models nonetheless must increase in Jacobian norm during training by Theorem \ref{thm:progressivesharpening}, that such models are therefore counterexamples to our explanation. They are in fact consistent: Theorem \ref{thm:progressivesharpening} only implies sharpening \emph{insofar as Ansatz \ref{ansatz} holds}. Importantly, since the Jacobian of the first linear layer does not appear in Equation \eqref{eq:tangentkernel}, an increase in the input-output Jacobian norm of a linear (kernel) model will not cause an increase in sharpness according to Ansatz \ref{ansatz}. The same is true of wide networks, which approximate kernel models increasingly well as width is sent to infinity \cite{leewide}; thus wide networks will also be expected to sharpen less severely during training, as has been observed empirically.

\begin{figure}
    \centering
    \setkeys{Gin}{width=\linewidth}
    \begin{subfigure}{0.3\columnwidth}
        \includegraphics{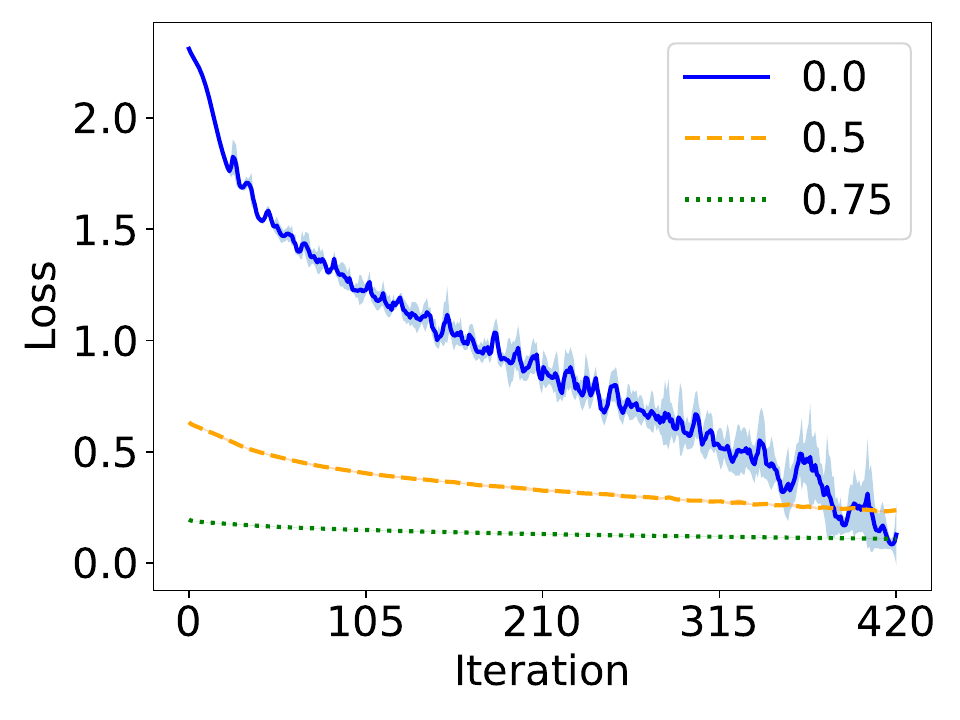}
        \caption{Loss}
    \end{subfigure}\hfill
    \begin{subfigure}{0.3\columnwidth}
        \includegraphics{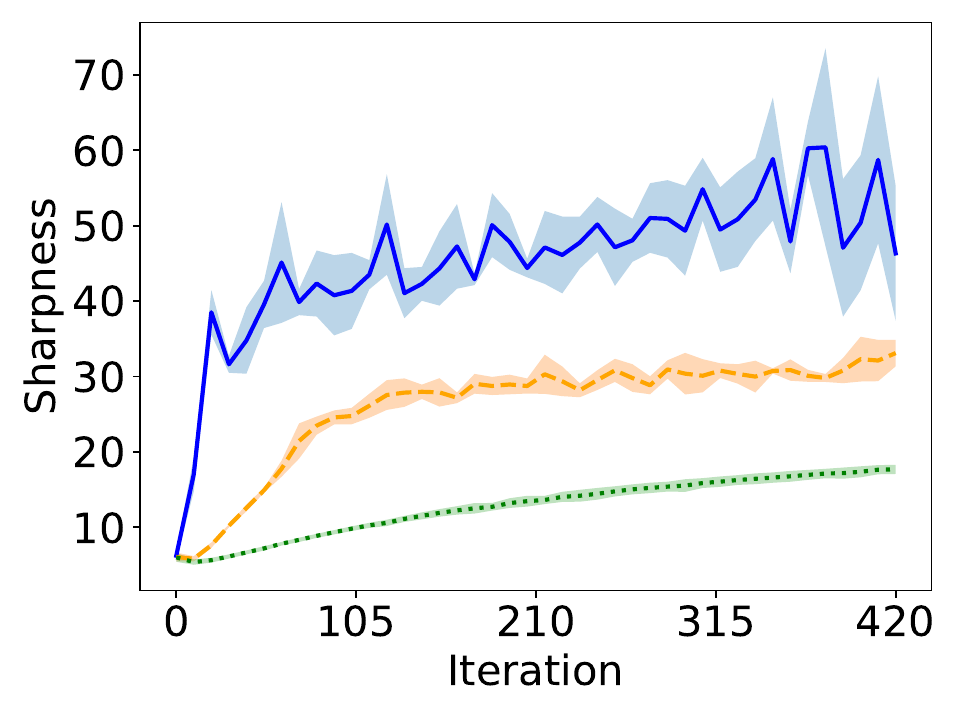}
        \caption{Sharpness}
    \end{subfigure}\hfill
    \begin{subfigure}{0.3\columnwidth}
    \includegraphics{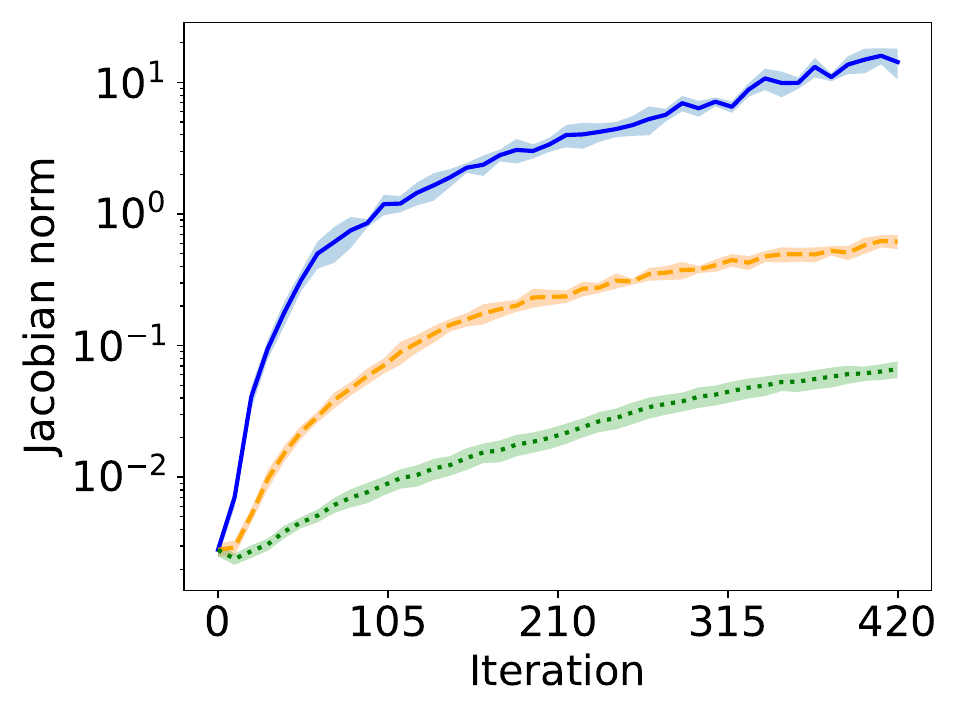}
    \caption{Jacobian norm (log $y$ axis)}
    \end{subfigure}
    \caption{Plots of cross entropy loss (with label entropies subtracted), loss sharpness and softmaxed-model Jacobian norm with varying degrees of label smoothing, for VGG11 trained with gradient descent on CIFAR10, with a learning rate of 0.08. Increasing label smoothing (indicated by line style) brings targets closer together, meaning less growth is necessary in the Jacobian norm to reduce loss. This coincides with less sharpening of the Hessian during training, in line with Ansatz \ref{ansatz}. Similar plots at different learning rates, and with ResNet18, are in Appendix \ref{app:fullbatch}.  Note log $y$ axis on Jacobian norm for ease of distinction.}
    \label{fig:labelspacing}
\end{figure}


\section{Flat minima and generalisation}\label{sec:generalisation}

We argue in this section that loss flatness implies good generalisation only via encouraging smaller Jacobian norm, through Ansatz \ref{ansatz}. Indeed, our final theorem, whose proof is inspired by that of \cite[Theorem 6]{maetal}, assures us rigorously that on models that fit training data, sufficiently small empirical Jacobian norm and sufficiently large sample size suffice to guarantee good generalisation, as has long been suspected in the literature \cite{drucker}.

\begin{theorem}\label{thm:generalisationbound}
    Let $\PB$ be $\delta$-good and let $f^*:\supp(\PB)\rightarrow\RB^{d'}$ be a target function, which we assume to be Lipschitz. Let $N\in\NB$ and let $0<\epsilon<1$. Then with probability at least $1-\epsilon$ over all i.i.d. samples $(x_1,\dots,x_N)$ from $\PB$, any Lipschitz function $f:\RB^d\rightarrow\RB^{d'}$ which coincides with $f^*$ on $(x_1,\dots,x_N)$, satisfies:
    \begin{equation}
        \EB_{x\sim\PB}\|f(x)-f^*(x)\|_2\leq \delta(N,\epsilon)\big(\|f^*\|_{Lip,\PB}+\max_i\big(\|Jf(x_i)\|_2+V_{B(x_i,\delta(N,\epsilon))}(Jf)\big)\big).
    \end{equation}
\end{theorem}

Our proof of Theorem \ref{thm:generalisationbound} is similar to that of \cite[Theorem 6]{maetal}. Our bound is an improvement on that of \cite{maetal} in having better decay in $N$. Additionally, in bounding in terms of the empirical Jacobian norm at convergence instead of training hyperparameters at convergence, our bound is sensitive to implicit Jacobian regularisation throughout training in a way that the bound of \cite{maetal} is not. Our bound is thus in principle sensitive to the superior generalisation of networks trained with an \emph{initially} high learning rate that is gradually decayed to a small learning rate (such networks experience more implicit Jacobian regularisation at the edge of stability), over networks trained with a small learning rate from initialisation.  In contrast, the bound of \cite{maetal} is not sensitive to this distinction.

Like \cite[Theorem 6]{maetal}, however, our bound exhibits \emph{inferior} $O\big((\log(N)N^{-1})^{\frac{1}{d}}\big)$ decay in $N$, where $d$ is the intrinsic dimension of the data distribution, when compared with the more common $O(N^{-\frac{1}{2}})$ rates in the literature. The reason for this is that all bounds of which we are aware in the literature, with the exception of \cite[Theorem 6]{maetal}, derive from consideration of \emph{hypothesis complexity}, rather than \emph{data complexity} as we and \cite{maetal} consider.  Although the method we adopt implies this worse decay in $N$, our method has the advantage of not needing to invoke the large hypothesis complexity terms, such as Rademacher complexity and KL-divergence, that make bounds based on such terms so loose when applied to deep learning \cite{zhang}. Moreover, since standard datasets in deep learning are intrinsically low dimensional \cite{miyato}, the $O\big((\log(N)N^{-1})^{\frac{1}{d}}\big)$ rate in our bounds can still be expected to be nontrivial in practice.


In what follows we present experimental results measuring the generalisation gap (absolute value of train loss minus test loss), sharpness and Jacobian norm at the end of training with varying degrees of various regularisation techniques. Our results confirm that while loss sharpness is neither necessary nor sufficient for good generalisation, empirical Jacobian norm consistently correlates better with generalisation than does sharpness for all regularisation measures we studied. Moreover, whenever loss sharpness \emph{does} correlate with generalisation, this correlation tends to go hand-in-hand with smaller Jacobian norm, as predicted by Ansatz \ref{ansatz}. Note that the Jacobians we plot were all computed in evaluation mode, meaning all BN layers had their statistics fixed: that the relationship with the train mode loss Hessian can still be expected to hold is justified in Appendix \ref{sec:bnjac}.

\begin{figure}
    \centering
    \includegraphics[scale=0.45, trim=8pt 12pt 0pt 24pt, clip]{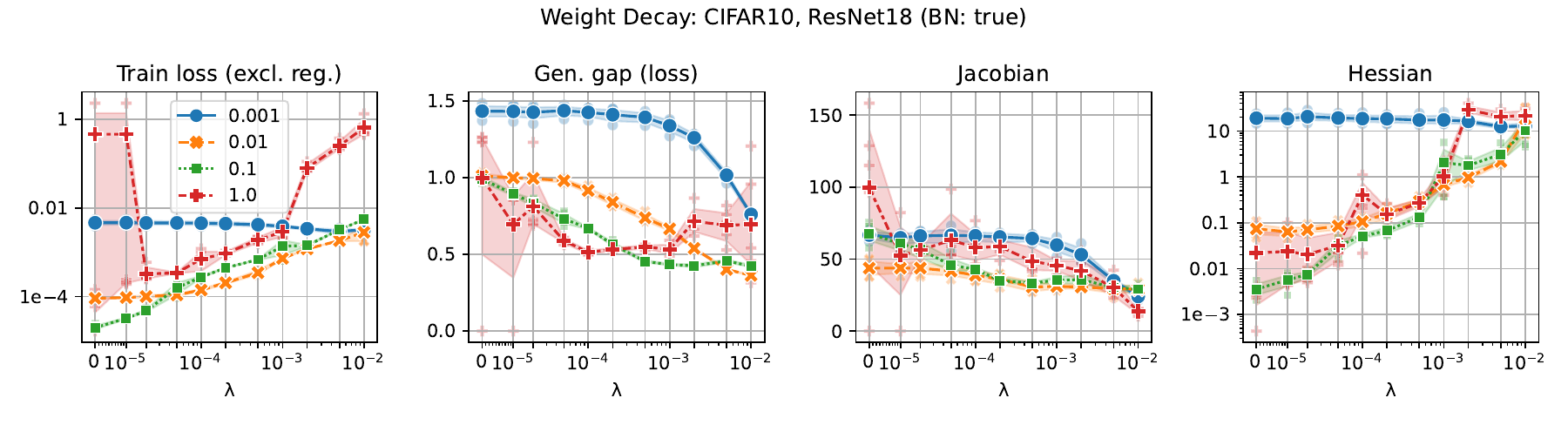}
    \caption{The impact of weight decay ($x$ axis) on Jacobian norm, sharpness and generalisation gap when training ResNet18 with cross-entropy loss at the end of training (90 epochs). Line style indicates learning rate. 
 Without weight decay, SGD finds a solution with near-zero loss and vanishing sharpness, but for which Jacobian norm and generalisation gap are relatively large.}
    \label{fig:weightdecay}
\end{figure}

\textbf{Cross-entropy and weight decay:} It was observed in \cite{granziol} that when training with the cross-entropy cost, networks trained with weight decay generalised better than those trained without, despite converging to sharper minima. We confirm this observation in Figure \ref{fig:weightdecay}.  Using Ansatz \ref{ansatz} and Theorem \ref{thm:generalisationbound}, we are able to provide a new explanation for this fact.  For the cross-entropy cost, both terms $D^2\gamma_Y$ and $D\gamma_Y$ appearing in the Hessian (Equation \eqref{eq:hessian}) vanish at infinity in parameter space.  By preventing convergence of the parameter vector to infinity in parameter space, weight-decay therefore encourages convergence to a sharper minimum than would otherwise be the case. However, since the Frobenius norm of a matrix is an upper bound on the spectral norm, weight decay also implicitly regularises the Jacobian of the model, leading to better generalisation by Theorem \ref{thm:generalisationbound}.

Since training with weight decay is the norm in practice, and since any correlations between \emph{vanishing} Hessian and generalisation will likely be unreliable, we used weight decay in all of what follows.

\textbf{Learning rate and batch size:} It is commonly understood that larger learning rates and smaller batch sizes serve to encourage convergence to flatter minima \cite{keskar, wudynamicalstability}.  By Theorem \ref{thm:generalisationbound} and Ansatz \ref{ansatz}, such hyperparameter choice can be expected to lead to better generalisation. We report on experiments testing this idea in Figures \ref{fig:lr}, \ref{fig:batch} (ResNet18 on CIFAR10) and Appendix \ref{app:batch_lr} (ResNet18 and VGG11 on CIFAR10 and CIFAR100). The results are mostly consistent with expectations, with the exception of Jacobian norm being reduced with larger batch size at the highest learning rate while generalisation gap increases. The phenomenon appears to be data-agnostic (the same occurs with ResNet18 on CIFAR100), but architecture-dependent (we do not observe this problem with VGG11 on either CIFAR10 or CIFAR100). This is not necessarily inconsistent with Theorem \ref{thm:generalisationbound}, which allows for the empirical Jacobian norm to be a poor estimator of the Lipschitz constant of the model via the local variation in the model's Jacobian. Our results indicate that this local variation term is increased during (impractical) large batch training of skip-connected architectures using a constant high learning rate.

\begin{figure}[t]
    \centering
    \setkeys{Gin}{width=\linewidth}
    \begin{subfigure}{0.24\columnwidth}
        \includegraphics{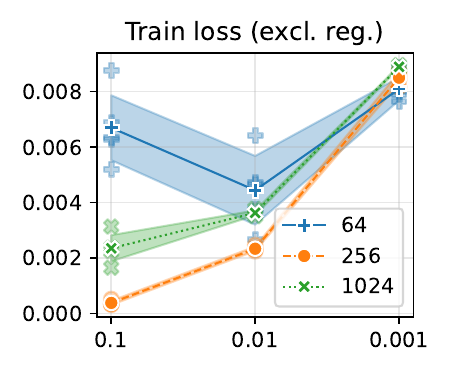}
    \end{subfigure}\hfill
    \begin{subfigure}{0.24\columnwidth}
        \includegraphics{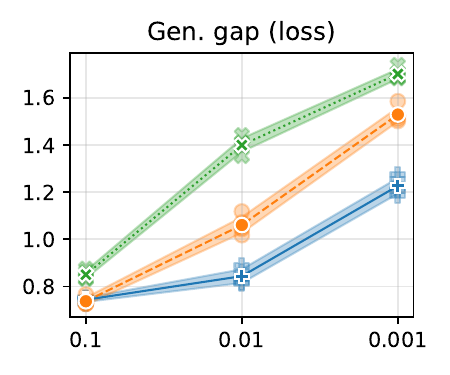}
    \end{subfigure}\hfill
    \begin{subfigure}{0.24\columnwidth}
        \includegraphics{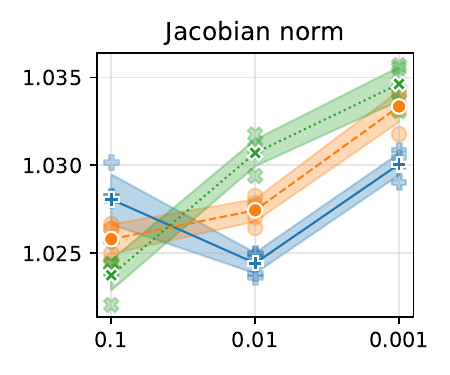}
    \end{subfigure}\hfill
    \begin{subfigure}{0.24\columnwidth}
    \includegraphics{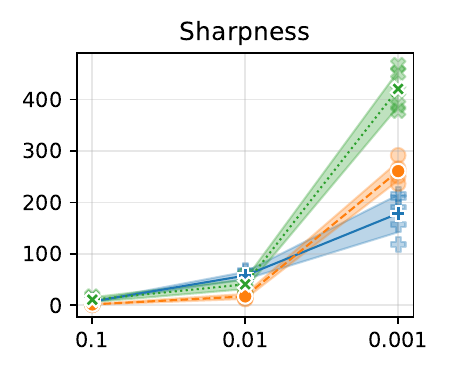}
    \end{subfigure}
    \caption{The impact of differing (constant) learning rates ($x$ axis) at end of training with ResNet18 on CIFAR10 trained with SGD (five trials).  Line style indicates batch size.  In line with conventional wisdom, larger learning rates typically have smaller sharpness and generalise better.  The downward pressure on sharpness and generalisation gap correlates with smaller Jacobian norms, as anticipated by Ansatz \ref{ansatz} and Theorem \ref{thm:generalisationbound}.  Models trained with weight decay (0.0001).}
    \label{fig:lr}
\end{figure}

\begin{figure}[t]
    \centering
    \setkeys{Gin}{width=\linewidth}
    \begin{subfigure}{0.24\columnwidth}
        \includegraphics{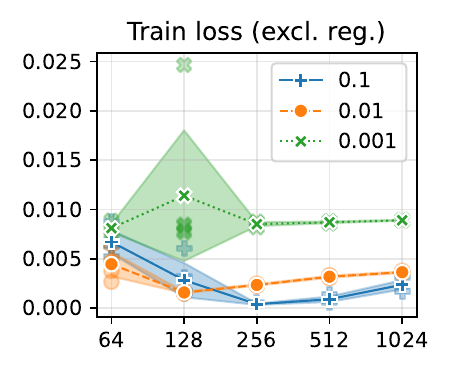}
    \end{subfigure}\hfill
    \begin{subfigure}{0.24\columnwidth}
        \includegraphics{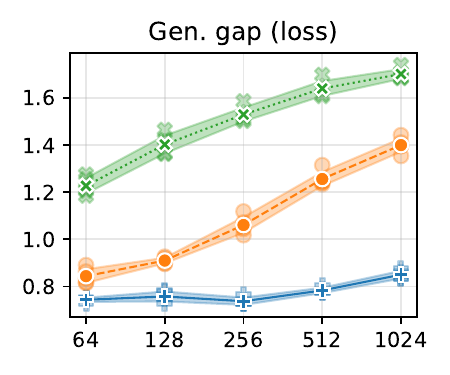}
    \end{subfigure}\hfill
    \begin{subfigure}{0.24\columnwidth}
        \includegraphics{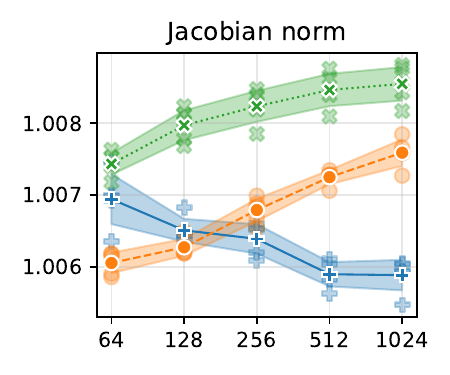}
    \end{subfigure}\hfill
    \begin{subfigure}{0.24\columnwidth}
    \includegraphics{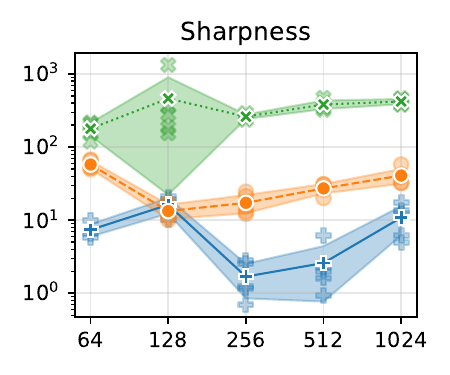}
    \end{subfigure}
    \caption{The impact of differing batch sizes ($x$ axis) at the end of training with ResNet18 on CIFAR10 trained with SGD (five trials).  Line style indicates (constant) learning rate.  At least for the smaller learning rates, increasing batch size typically increases sharpness, Jacobian norm and generalisation gap in line with Theorem \ref{thm:generalisationbound}.  At the largest learning rate, the relationship between Jacobian norm and generalisation gap is the inverse of what is expected, suggesting that larger learning rates tend to increase the local variation of the model's Jacobian, causing it to underestimate the Lipschitz constant of the model (Theorem \ref{thm:generalisationbound}). Note log scale on $y$ axis in Sharpness plot for ease of distinction.}
    \label{fig:batch}
\end{figure}

\textbf{Other regularisation techniques:} We also test to see the extent that other common regularisation techniques, including label smoothing, data augmentation, mixup \cite{mixup} and sharpness-aware minimisation (SAM) \cite{foret} regularise Hessian and Jacobian to result in better generalisation. We find that while Jacobian norm is regularised at least initially across all techniques, only in SAM is sharpness regularised. This validates our proposal that Jacobian norm is a key mediating factor between sharpness and generalisation, as well as the position of \cite{dinh} that flatness is not necessary for good generalisation.  Note that at least some of the gradual increase in Jacobian norm for mixup and data augmentation as the $x$-axis parameter is increased is to be expected due to our measurement scheme: see the caption in Figure \ref{fig:practical}. See Appendix \ref{app:practical} for experimental details and similar results for VGG11 and CIFAR100.

\begin{figure}[t]
\makebox[\textwidth][c]{\includegraphics[scale=0.45, trim=12pt 12pt 0pt 36pt, clip]{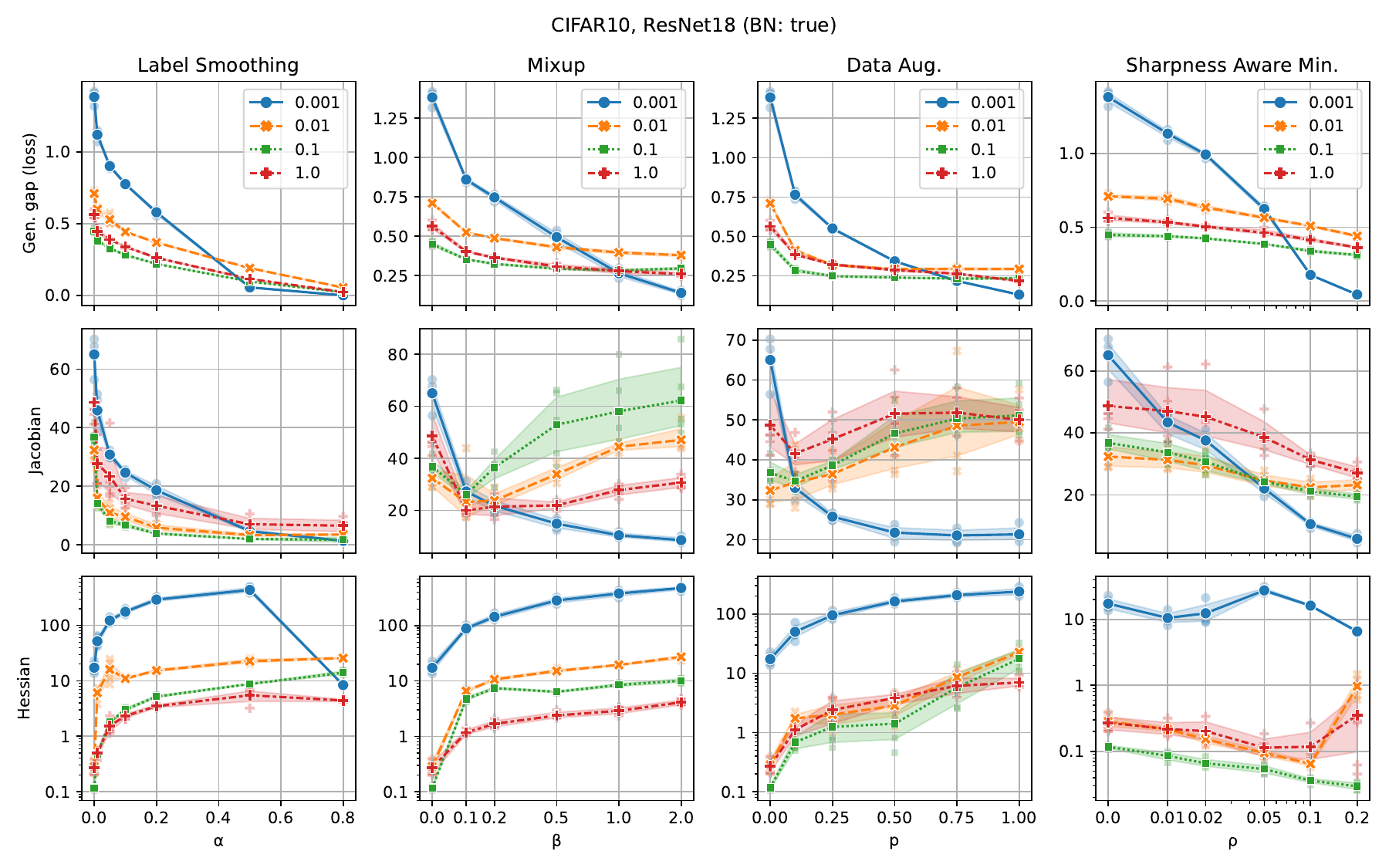}}
\caption{The impact of different regularisation strategies ($x$ axis) when training a ResNet18 model with batch-norm on CIFAR10 (five trials).  The norm of the input-output Jacobian is much better correlated with the generalisation gap than that of the Hessian; all regularisation strategies were effective in controlling the Jacobian norm without necessarily controlling the sharpness.
Line style indicates initial learning rate.
Besides the regularisation strategy being studied, all experiments include weight decay (0.0005). Note that the decorrelation between Jacobian and generalisation for Mixup and Data Aug as $x$-axis parameter is increased is to be expected. In both cases, we measure the Jacobian norm over \emph{only the pure training set}, which makes up less of a percentage of the total training set and is therefore subject to less implicit regularisation during training as the $x$-axis parameter is increased.
Refer to the appendix for extended results.}
\label{fig:practical}
\end{figure}

\section{Discussion}

One limitation of our work is that Ansatz \ref{ansatz} is not mathematically rigorous.  A close theoretical analysis would ideally provide rigorous conditions under which one can upper bound the model Jacobian norm in terms of loss sharpness \emph{throughout training}. These conditions would need to be compatible with the empirical counterexmaples we give in Figure \ref{fig:weightdecay} and Appendix \ref{app:mediating}.


A second limitation of our work is that we do not numerically evaluate our generalisation bound. Numerical evaluation of the bound would require both an estimate of the Lipschitz constant of a GAN which has generated the data, as well as the local variation of the Jacobian of the model defined in Equation \eqref{variation}. While existing techniques may be able to aid the first of these \cite{fazlyab}, we are unaware of any work studying the second, which is outside the scope of this paper. We leave this important evaluation to future work.


\section{Conclusion}

We proposed a new relationship between the loss Hessian of a deep neural network and the input-output Jacobian of the model. We proved several theorems allowing us to leverage this relationship in providing new explanations for progressive sharpening and the link between loss geometry and generalisation. An experimental study was conducted which validates our proposal.

\bibliography{references}

\newpage
\appendix

\noindent\textbf{Computational resources:}  
Exploratory experiments were conducted using a desktop machine with two Nvidia RTX A6000 GPUs.
The extended experimental results in the appendix were obtained on a shared HPC system, where a total of 3500 GPU hours were used across the lifetime of the project (Nvidia A100 GPUs).

\section{Proofs}

Here we give proofs of the theorems stated in the main body of the paper. To prove our first theorem, Theorem \ref{thm:satisfy}, we require the following differential-geometric lemma.

\begin{lemma}\label{diffgeom}
    Let $M$ be any compact, embedded, $d$-dimensional submanifold of $\RB^n$, possibly with boundary and corners. Equip $M$ with the Riemannian structure inherited from this embedding and let $\PB$ be the normalised volume measure of $M$ induced by its Riemannian structure. Then there exists $\kappa>0$ and $C>0$ such that
    \[
    \PB[B(x,r)]\geq C\,r^d
    \]
    for all $r\leq\kappa$ and $x\in M$, where $B(x,r)$ denotes the closed Euclidean ball around $x$ of radius $r$.
\end{lemma}

\begin{proof}
    Since the Riemannian structure on $M$ is inherited from its embedding into $\RB^n$, the embedding is bi-Lipschitz and hence there exists a constant $c$ such that
    \[
    \|x-y\|_2\leq c^{-1}g(x,y)
    \]
    for all $x,y\in M$, where $g(x,y)$ denotes the geodesic distance induced on $M$ by the Riemannian structure. It follows that for all $r>0$, one has $B_g(x,cr)\subset B(x,r)$, where $B_g(x,r)$ is the closed geodesic ball of radius $r$ about $x$. Hence
    \begin{equation}\label{eq:esti1}
    \PB[B(x,r)]\geq \PB(B_g(x,cr))
    \end{equation}
    and it remains only to lower-bound $B_g(x,cr)$ for $r$ sufficiently small. This we achieve by considering the geometry of $M$.

    Given $x\in M$, let $\exp_x:T_xM\rightarrow M$ be the Riemannian exponential map and let $(x,0)\in T_xM$ denote the zero tangent vector. Recall that the \emph{injectivity radius} of $M$ is the largest number $R$ for which the restriction of $\exp_x$ to the $R$-ball about zero in $T_xM$ is a diffeomorphism onto its image for all $x\in M$. The injectivity radius exists and is strictly positive by compactness of $M$. Let $\kappa>0$ be any number that is strictly smaller than $c^{-1}R$.
    
    Let us now fix $x\in M$. Since $\exp_x$ is a radial isometry, it maps $B((x,0),r)$ onto $B_g(x,r)$ for any $r\leq\kappa$. Let $d\xi$ denote the standard Euclidean volume element in $B((x,0),r)$ and $\omega$ the volume form on $M$. Since $\exp_x|_{B((x,0),r)}$ is a diffeomorphism, there exists a nonvanishing, smooth function $f$ on $B((x,0),r)$ for which $\exp_x^{*}\omega = f_x\,d\xi$. Letting $c'_x$ denote the minimum of $f_x$, we then have
    \begin{equation}\label{eq:esti2}
    \PB[B_g(x,r)] = \int_{B_g(x,r)}\omega = \int_{B((x,0),r)}\exp_x^*\omega\geq c'_x\int_{B((x,0),r)}d\xi = c'_x\,c'' r^d.
    \end{equation}
    Here $c''$ is the usual scaling factor one sees in the volume of a Euclidean $d$-ball. Using compactness of $M$, we define $c':=\inf_{x\in M}c'_x >0$ and $C:=c'\,c''\,c^d$. It then follows from the estimates \eqref{eq:esti1} and \eqref{eq:esti2} that
    \[
    \PB[B(x,r)]\geq C\,r^d
    \]
    for all $x\in M$ and $0<r\leq\kappa$ as claimed.
\end{proof}

\begin{proof}[Proof of Theorem \ref{thm:satisfy}]
    

    The proof we give is derived from \cite[Theorem 2.1]{reznikov}, however we wish to be more precise with our constants than is the case in \cite{reznikov}. We thus give a full proof here.

    First, using the fact that $\supp(\PB)$ satisfies the hypotheses of Lemma \ref{diffgeom}, fix $C>0$ and $\kappa>0$ for which $\PB[B(x,r)]\geq C\,r^d$ for all $(x,r)\in M\times[0,\kappa]$, and define $\Phi(r):=C\,r^d$. This function $\Phi$ will play a key role in the proof.


    Now fix $\epsilon>0$, let $X_N:=(x_1,\dots,x_N)$ be a set of i.i.d. points sampled from $\PB$, and let $\rho(X_N):=\sup_x\min_i\|x-x_i\|$ denote the covering radius of $X_N$. Suppose that $\rho(X_N)>2t$ for some real number $t$, and let $E_t$ be any maximal set of points in $\supp(\PB)$ whose distinct pairwise distances are all bounded below by $t$. Then we can find $x\in E_t$ such that $B(x,t)\cap X_N=\emptyset$. Indeed, by hypothesis on $\rho(X_N)$, there exists $y\in\supp(\PB)$ such that $X_N\cap B(y,2t) = \emptyset$. There also exists $x\in E_t$ such that $\|x-y\|< t$, since otherwise we could add $y$ to $E_t$ and contradict the maximality of $E_t$. One then has $B(x,t)\subset B(y,2t)$, so that $B(x,t)$ does not intersect $X_N$.
    
    It has thus been shown that
    \begin{equation}
        \PB[\rho(X_N)> 2t]\leq\PB[\exists x\in E_t:B(x,t)\cap X_N=\emptyset].
    \end{equation}
    Since $\PB(B(x,t))\geq \Phi(t)$ for all $x$, $(1-\Phi(t))$ is an upper bound on the probability that a randomly selected $x'$ will not lie within $t$ of a given $x$. Letting $\#(E_t)$ denote the cardinality of $E_t$, the independence of the $x_i$ therefore permits an upper bound
    \begin{equation}
        \PB[\rho(X_N)> 2t]\leq \#(E_t)\,(1-\Phi(t))^N.
    \end{equation}
    The cardinality of $E_t$ can be further bounded via
    \begin{equation}
        1\geq\sum_{x\in E_t}\PB[B(x,t)]\geq\#(E_t)\Phi(t),
    \end{equation}
    so that one has
    \begin{equation}
        \PB[\rho(X_N)>2t]\leq\Phi(t)^{-1}(1-\Phi(t))^N.
    \end{equation}

    Using the invertibility of $\Phi$, one now sets $\Phi(t) = \alpha\log(N)N^{-1}$, with $\alpha = (1-\log_N(\epsilon))$. The McLaurin series for $\log(1-y)$ with $y = \alpha\log(N)N^{-1}$ reveals that
    \begin{equation}
        (1-\alpha\log(N)N^{-1})^{N}\leq N^{-\alpha},
    \end{equation}
    so that the bound becomes
    \begin{equation}
        \PB[\rho(X_N)> 2\Phi^{-1}(\alpha\log(N)N^{-1})]\leq \frac{1}{\alpha}\frac{N}{\log(N)}N^{-\alpha}\leq N^{1-\alpha}.
    \end{equation}
    Substituting $\alpha = (1-\log_N(\epsilon))$ yields
    \begin{equation}\label{eq:deltaeqn}
        \PB\bigg[\rho(X_N)> 2\Phi^{-1}\bigg(\frac{\log(N\epsilon^{-1})}{N}\bigg)\bigg]\leq \epsilon,
    \end{equation}
    from which the general claim follows.

    The result is specialised to the unit hypercube $[0,1]^d$ simply by observing that in this case, one can take
    \begin{equation}
        \Phi(r):=\frac{\pi^{d/2}}{2^d\Gamma(d/2+1)}r^d,   
    \end{equation}
    which is the volume of the intersection of the ball of radius $r$, centered at a corner, with the cube.
    
    For the unit hypersphere $S^d\subset\RB^{d+1}$, fix $x\in S^d$ and consider the Euclidean ball $B(x,r)\subset\RB^{d+1}$ centered at $x$. It is clear that the intersection $B_r:=S^d\cap B(x,r)$ is a geodesic for the inherited Riemannian metric on $S^d$. In particular, for $r<2$,  $B_r$ is a hyperspherical cap. Elementary trigonometry shows that the angle subtended by the line connecting $x$ to $0$ and the line connecting any point on the boundary of $B_r$ to $0$ is given by $2\sin^{-1}(r/2)$. It then follows from \cite[p. 67]{hyperspherical} and the elementary trigonometric identity $\sin(2\theta) = 2\sin(\theta)\cos(\theta)$ that one has
    \[
    \PB[B(x,r)] = C^{-1}\frac{\pi^{\frac{d+1}{2}}}{\Gamma\big(\frac{d+1}{2}\big)}I_{r^2-r^4/4}\bigg(\frac{d}{2},\frac{1}{2}\bigg),
    \]
    where
    \[
    I_{x}(a,b) = \frac{\Gamma(a+b)}{\Gamma(a)\Gamma(b)}\int_{0}^xt^{a-1}(1-t)^{b-1}\,dt
    \]
    is the regularised incomplete beta function and $C = 2\pi^{\frac{d}{2}}\Gamma(d/2)^{-1}$ is the volume of the unit $d$-sphere. For any $0<t<1$ one has $(1-t)^{1-\frac{1}{2}}\geq 1$, thus one has the estimate
    \[
    I_{r^2-r^4/4}\bigg(\frac{d}{2},\frac{1}{2}\bigg)\geq \frac{\Gamma\big(\frac{d+1}{2}\big)}{\Gamma\big(\frac{d}{2}\big)\Gamma\big(\frac{1}{2}\big)}(r^2-r^4/4)^{\frac{d}{2}}.
    \]
    Now, fixing $0<\gamma<1$ and assuming $r^2/4<\gamma$, we therefore have
    \[
    \PB(B(x,r))\geq \frac{1}{2}(1-\gamma)^{\frac{d}{2}}r^d.
    \]
    We can then use $\Phi(r) = \frac{1}{2}(1-\gamma)^{\frac{d}{2}}r^d$ in defining $\delta(N,\epsilon) = 2\Phi^{-1}(\log(N\epsilon^{-1})N^{-1})$ as in Equation \eqref{eq:deltaeqn} provided only that the assumption $\delta^2/4<\gamma$ is satisfied. Setting $\delta = 2^{1+\frac{1}{d}}(1-\gamma)^{-\frac{1}{2}}(\log(N\epsilon^{-1})N^{-1})^{\frac{1}{d}}$, this requirement reduces to having $N$ sufficiently large that $2^{\frac{2}{d}}(\log(N\epsilon^{-1})N^{-1})^{\frac{2}{d}}<\gamma-\gamma^2$.
    
    For the final result, when pushing forward a good distribution by a Lipschitz function, the stated result follows from the Lipschitz identity.
\end{proof}

\begin{proof}[Proof of Theorem \ref{thm:jacobianmaximum}]
    Fix $N\in\NB$ and $0<\epsilon<1$. By $\delta$-goodness of the data distribution, with probability at least $1-\epsilon$ over i.i.d. samples $(x_1,\dots,x_N)$, the balls $\{B(x_i,\delta(N,\epsilon))\}_{i=1}^N$ cover $\supp(\PB)$. For notational convenience, denote $B(x_i,\delta(N,\epsilon))$ by simply $B_i$. Conditioning on the event that the $B_i$ cover $\supp(\PB)$, one has
    \begin{align*}
        \|f\|_{Lip,\PB}&\leq\sup_{x\in\bigcup_{i=1}^NB_i}\|Jf(x)\|_2 = \max_i\,\sup_{x\in B_i}\|Jf(x)\|_2\\ &\leq \max_i\,\sup_{x\in B_i}\|Jf(x_i)-Jf(x_i)+Jf(x)\|_2\\ &\leq\max_i\bigg(\|Jf(x_i)\|_2+\sup_{x\in B_i}\|Jf(x)-Jf(x_i)\|_2\bigg)\\ &\leq \max_i\big(\|Jf(x_i)\|_2+V_{B_i}(Jf)\big)
    \end{align*}
    The result follows.
\end{proof}

\begin{proof}[Proof of Theorem \ref{thm:progressivesharpening}]
    Recall that by hypothesis on $c$, one has $c(z_1,z_2)\geq \alpha\|z_1-z_2\|_2^2$ for all $z_1,\,z_2$ in  the domain of $c$.  Thus, since $c(f(x_i),f^*(x_i))<\ell^2\alpha$ for all $i$, we therefore have
    \begin{equation}
        \|f(x_i)-f^*(x_i)\|_2\leq\ell
    \end{equation}
    for all $i$. That is, for each $i$, $f(x_i)$ is contained in the $\ell$-ball centred on the target value $f^*(x_i)$.

    Since the smallest distance between any two $f(x_i)$ and $f(x_j)$ is realised when these points lie on the straight line between $f^*(x_i)$ and $f^*(x_j)$, one has the lower bound
    \begin{equation}
        \|f(x_i)-f(x_j)\|_2\geq \|f^*(x_i)-f^*(x_j)\|_2 - 2\ell
    \end{equation}
    for all $i,\,j$. Invoking the Lipschitz identity, and using the fact that $x_i=x_j$ for $i\neq j$ with probability zero, then gives
    \begin{equation}
        \|f\|_{Lip,\PB}\geq \max_{i\neq j}\frac{\|f^*(x_i)-f^*(x_j)\|_2-2\ell}{\|x_i-x_j\|_2}.
    \end{equation}
    Conditioning now on the event that $\supp(\PB)\subset\bigcup_{i=1}^{N}B(x_i,\delta(\epsilon,N))$, which occurs with probability at least $1-\epsilon$ by $\delta$-goodness of $\PB$, and invoking Theorem \ref{thm:jacobianmaximum} together with the subadditivity of the component-wise maximum function, gives the result.
\end{proof}

\begin{proof}[Proof of Theorem \ref{thm:generalisationbound}]
    Fix $N\in\NB$ and $0<\epsilon<1$. Since $\PB$ is $\delta$-good, with probability at least $1-\epsilon$ over i.i.d. samples $(x_1,\dots,x_N)$, for each $x\in\supp(\PB)$ there exists $j$ such that $\|x-x_j\|_2\leq\delta(N,\epsilon)$. Conditioning on this event and fixing $x\in\supp(\PB)$ with corresponding nearby $x_j$, the triangle inequality applies to yield
    \begin{equation}
        \|f(x)-f^*(x)\|_2\leq \|f(x)-f(x_j)\|_2+\|f(x_j)-f^*(x_j)\|_2+\|f^*(x_j)-f^*(x)\|_2,
    \end{equation}
    which gives
    \begin{equation}
        \|f(x)-f^*(x)\|_2\leq \delta(N,\epsilon)\big(\|f\|_{Lip,\PB}+\|f^*\|_{Lip}\big)
    \end{equation}
    after applying the Lipschitz identities and invoking the hypothesis that $f$ and $f^*$ agree on training data. The proof of Theorem \ref{thm:jacobianmaximum} now applies to show that, conditioned on this same event, one has
    \begin{equation}
        \|f(x)-f^*(x)\|_2\leq \delta(N,\epsilon)\big(\|f^*\|_{Lip}+\max_i\big(\|Jf(x_i)\|_2+V_{B_i}(Jf)\big)\big).
    \end{equation}
    Taking the $\PB$-expectation of both sides yields the result.
\end{proof}

\subsection{A note on Jacobian computations}\label{sec:bnjac}

In our experiments, the Jacobian norm and sharpnesss are estimated using the power method and the Hessian/Jacobian-vector product functions in PyTorch. Recalling that we use $\RB^{d\times N}$ to denote the set of data matrices, consisting of a batch of data column vectors concatenated together, there are two different kinds of functions $f:\RB^{d_0\times N}\rightarrow\RB^{d_1\times N}$ whose Jacobians we would like to compute. The first of these are functions defined columnwise by some other map $g:\RB^{d_0}\rightarrow\RB^{d_1}$: that is
\begin{equation}
    f(X)_{j} = g(X_j),
\end{equation}
where lower indices index columns so that $X_j\in\RB^{d_0}$ for all $j$.  In this case, flattening (columns first) shows that $Jf(X)$ is just a block-diagonal matrix, whose $j^{th}$ block is $Jf(X_j)$. Hence $\|Jf(X)\|_2 = \max_j\|Jf(X_j)\|_2$. For a network in evaluation mode, \emph{every} layer Jacobian is of this form.

On the other hand, for a network in \emph{train} mode (as is the case when evaluating the Hessian of the loss), batch normalisation (BN) layers do not have this form.  BN layers in train mode are \emph{nonlinear} transformations defined by
\begin{equation}\label{bn}
    bn(X):=\frac{X-\EB X}{(\epsilon + \sigma^2X)^{\frac{1}{2}}},
\end{equation}
where $\EB$ and $\sigma^2$ denote mean and variance computed across the column index (again we omit the parameters as they are not essential to the discussion).  The skeptical reader would rightly question whether the Jacobian of BN in train mode (to which Equation \eqref{eq:tangentkernel} applies) is sufficiently close to the Jacobian of BN in evaluation mode (to which Theorem \ref{thm:generalisationbound} applies) for Ansatz \ref{ansatz} to be valid for BN networks.  Our next and final theorem says that for sufficiently large datasets, this is not a problem.

\begin{theorem}\label{thm:batchnorm}
    Let $bn_T$ and $bn_V$ denote a batch normalisation layer in train and evaluation mode respectively.  Given a data matrix $X\in\RB^{d\times N}$, assume the coordinates of $X$ are $O(1)$.  Then $Jbn_T = Jbn_V + O(N^{-1})$.
\end{theorem}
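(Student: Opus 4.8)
The plan is a direct computation, exploiting the fact that batch normalisation acts independently on each feature (row of $X$). Since $bn$ couples the columns only through the batch statistics $\EB X$ and $\sigma^2 X$, and these are formed separately for each row, the Jacobian is block diagonal across features; it therefore suffices to analyse a single row $x=(x_1,\dots,x_N)\in\RB^N$ and reassemble at the end. For such a row, write $\mu:=\tfrac1N\sum_j x_j$, $v_i:=x_i-\mu$, and $s:=(\epsilon+\sigma^2)^{1/2}$ with $\sigma^2=\tfrac1N\sum_j v_j^2$. In evaluation mode the statistics are frozen, so $bn_V(x)_i=v_i/s$ with $\mu,s$ treated as constants, giving $Jbn_V=s^{-1}\Id$ on this row. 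The entire discrepancy between the two modes is thus the extra chain-rule contribution coming from the dependence of $\mu$ and $s$ on the batch in train mode.

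First I would differentiate the batch statistics. A one-line computation gives $\partial\mu/\partial x_k = 1/N$ and $\partial\sigma^2/\partial x_k = 2v_k/N$, hence $\partial s/\partial x_k = v_k/(Ns)$; the crucial point is that each of these carries an explicit factor of $1/N$. Applying the quotient rule to $bn_T(x)_i=(x_i-\mu)/s$ then yields
\begin{equation}
    \frac{\partial\, bn_T(x)_i}{\partial x_k} = \frac{\delta_{ik}}{s} - \frac{1}{Ns} - \frac{v_i v_k}{N s^{3}}.
\end{equation}
The leading term $\delta_{ik}/s$ is exactly $Jbn_V$, so the per-row error is the pair of rank-one corrections $-\tfrac{1}{Ns}\mathbf 1\mathbf 1^{T}-\tfrac{1}{Ns^{3}}vv^{T}$, both carrying a prefactor $1/N$.

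It remains to check that the quantities multiplying $1/N$ are $O(1)$, which is where the hypothesis that the coordinates of $X$ are $O(1)$ enters. Boundedness of the entries forces $\mu,\sigma^2$ and each $v_i$ to be $O(1)$, while the regulariser guarantees $s\geq\sqrt\epsilon$ is bounded away from zero; hence $1/s$, $1/s^{3}$ and $v_iv_k$ are all $O(1)$ and each entry of the two correction matrices is $O(N^{-1})$. Reassembling the block-diagonal structure across features then gives $Jbn_T=Jbn_V+O(N^{-1})$ entrywise, as claimed.

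I do not expect a serious obstacle here: the result is essentially the observation that the input-dependence of the batch statistics is an $O(1/N)$ effect. The only points requiring care are threading the chain rule correctly through the statistics $\mu(x)$ and $s(x)$ (rather than treating them as constants, as in evaluation mode), and pinning down the precise sense of the error bound --- namely that it is entrywise, with the $O(1)$ prefactors controlled by the $O(1)$-coordinate hypothesis and the $\epsilon$-regularisation keeping $s$ uniformly positive.
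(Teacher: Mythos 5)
Your proof is correct and takes essentially the same approach as the paper's: a direct entrywise computation of the train-mode Jacobian, observing that every term coming from the input-dependence of the batch statistics carries an explicit $1/N$ factor, bounded via the $O(1)$-coordinate hypothesis and $s=(\epsilon+\sigma^2)^{1/2}\ge\sqrt{\epsilon}$. The only difference is organisational --- you apply the quotient rule row-by-row to get the exact Jacobian $\frac{\delta_{ik}}{s}-\frac{1}{Ns}-\frac{v_iv_k}{Ns^3}$ in one step, while the paper factors $bn_T=v\circ m$ (mean-subtraction composed with normalisation) and chain-rules through the composite --- and your explicit caveat that the bound is entrywise (the rank-one correction $\frac{1}{Ns}\mathbf{1}\mathbf{1}^{T}$ is \emph{not} $O(N^{-1})$ in operator norm) is exactly the sense in which the paper's statement holds.
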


\begin{proof}[Proof of Theorem \ref{thm:batchnorm}]
    Given a data matrix $X$, let its row-wise mean and variance, thought of as \emph{constant vectors}, be denoted by $\EB$ and $\sigma^2$.  The evaluation mode BN map $bn_V:\RB^{d\times N}\rightarrow\RB^{d\times N}$ is simply an affine transformation, with Jacobian coordinates given by
    \begin{equation}
        \frac{\partial (bn_V)^i_j}{\partial x^k_l} = \frac{\delta^i_k\delta^i_l}{(\epsilon+\sigma^2)^{\frac{1}{2}}}.
    \end{equation}
    By \cite[Lemma 8.3]{macdonald}, the train mode BN map $bn_T:\RB^{d\times N}\rightarrow\RB^{d\times N}$ can be thought of as the composite $v\circ m$, where $m,v:\RB^{d\times N}\rightarrow\RB^{d\times N}$ are defined by
    \begin{equation}
        m(X):=X - \frac{1}{N}X1_{N\times N}
    \end{equation}
    and
    \begin{equation}
        v(Y):=(\epsilon+N^{-1}\|Y\|^2_{row})^{-\frac{1}{2}}Y
    \end{equation}
    respectively.  Here $1_{N\times N}$ denotes the $N\times N$ matrix of 1s, and $\|Y\|_{row}$ denotes the vector of row-wise norms of $Y$.

    As in \cite[Lemma 8.3]{macdonald}, one has
    \begin{equation}
        \frac{\partial v^i_j}{\partial y^k_l}(Y) = \frac{\delta^i_k}{(\epsilon+N^{-1}\|Y^i\|^2)^{\frac{1}{2}}}\bigg(\delta^j_l - \frac{y^i_l y^i_j}{(\epsilon+N^{-1}\|Y^i\|^2)}\bigg),\qquad \frac{\partial m^i_j}{\partial x^k_l}(Y) = \delta^i_k(\delta^i_l - N^{-1})
    \end{equation}
    for any data matrix $Y$.  Thus, invoking the chain rule and simplifying, we therefore get
    \begin{equation}
        \frac{\partial (bn_T)^i_j}{\partial x^k_l}(X) = \frac{\delta^i_k\delta^i_l}{(\epsilon + \sigma^2)^{\frac{1}{2}}} - \frac{1}{N}\frac{\delta^i_k (x^i_l-\EB^i) (x^i_l-\EB^i)}{(\epsilon + \sigma^2)^{\frac{3}{2}}} + O(N^{-1})
    \end{equation}
    Since the components $x$ are $O(1)$ by hypothesis, the result follows.
\end{proof}

\section{Additional experimental results and details}

\subsection{Progressive sharpening}\label{app:fullbatch}

We trained ResNet18 and VGG11 (superficially modifying \url{https://github.com/kuangliu/pytorch-cifar/blob/master/models/resnet.py} and \url{https://github.com/chengyangfu/pytorch-vgg-cifar10/blob/master/vgg.py} respectively, with VGG11 in addition having its dropout layers removed, but BN layers retained) with full batch gradient descent on CIFAR10 with learning rates $0.08$, $0.04$ and $0.02$, using label smoothings of $0.0$, $0.5$ and $0.75$.  The models with no label smoothing were trained to 99 percent accuracy, and the number of iterations required to do this were then used as the number of iterations to train the models using nonzero label smoothing. Sharpness and Jacobian norm were computed every 5, 10, or 20 iterations depending on the number of iterations required for convergence of the non-label-smoothed model.  Five trials were conducted in each case, with mean and standard deviation plotted.  Line style indicates degree of label smoothing.

The dataset was standardised so that each vector component is centered, with unit standard deviation. 
 Full batch gradient descent was approximated by averaging the gradients over 10 ``ghost batches'' of size 5000 each, as in \cite{coheneos}.  This is justified with BN networks by Theorem \ref{thm:batchnorm}.  The Jacobian norm we plot is for the Jacobian of the softmaxed model, as required by our derivation of Equation \eqref{eq:lipschitz}. The sharpness and Jacobian norms were estimated using a randomly chosen subset of 2500 data points. In all cases, the smoother labels are associated with less severe increase in Jacobian and sharpness during training, as predicted by Equation \eqref{eq:lipschitz}. Refer to \texttt{fullbatch.py} in the supplementary material for the code we used to run these experiments.

\begin{figure}[H]
    \centering
    \setkeys{Gin}{width=\linewidth}
    \begin{subfigure}{0.3\columnwidth}
        \includegraphics{progressivesharpening/vgg11/0.08/loss.pdf}
        \caption{Loss}
    \end{subfigure}\hfill
    \begin{subfigure}{0.3\columnwidth}
        \includegraphics{progressivesharpening/vgg11/0.08/hess.pdf}
        \caption{Sharpness}
    \end{subfigure}\hfill
    \begin{subfigure}{0.3\columnwidth}
    \includegraphics{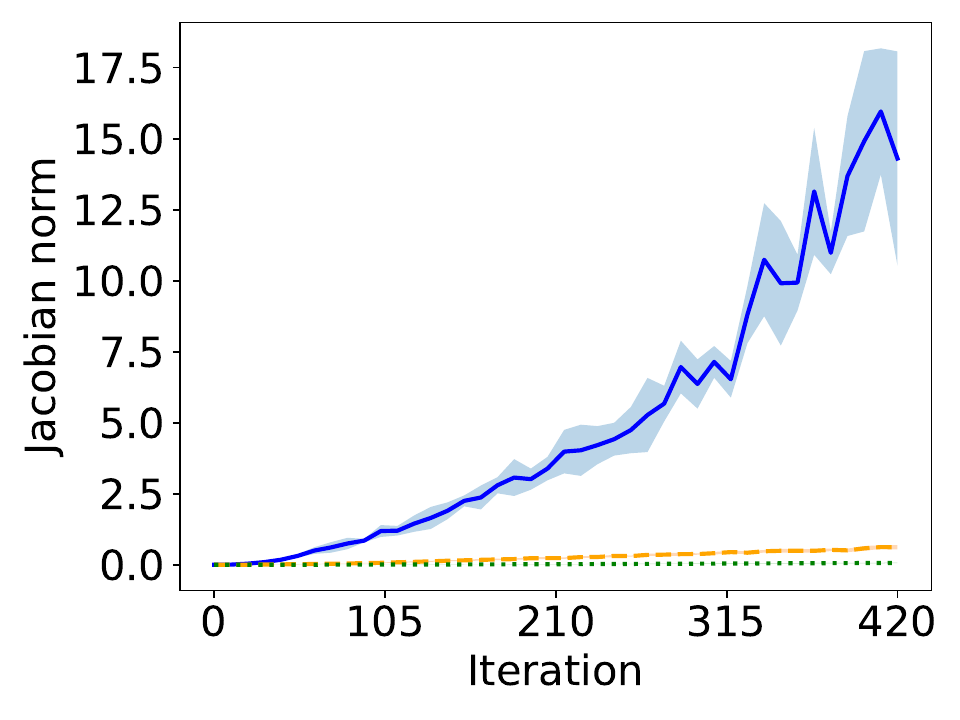}
    \caption{Jacobian norm}
    \end{subfigure}
    \caption{VGG11, learning rate 0.08}
\end{figure}

\begin{figure}[H]
    \centering
    \setkeys{Gin}{width=\linewidth}
    \begin{subfigure}{0.3\columnwidth}
        \includegraphics{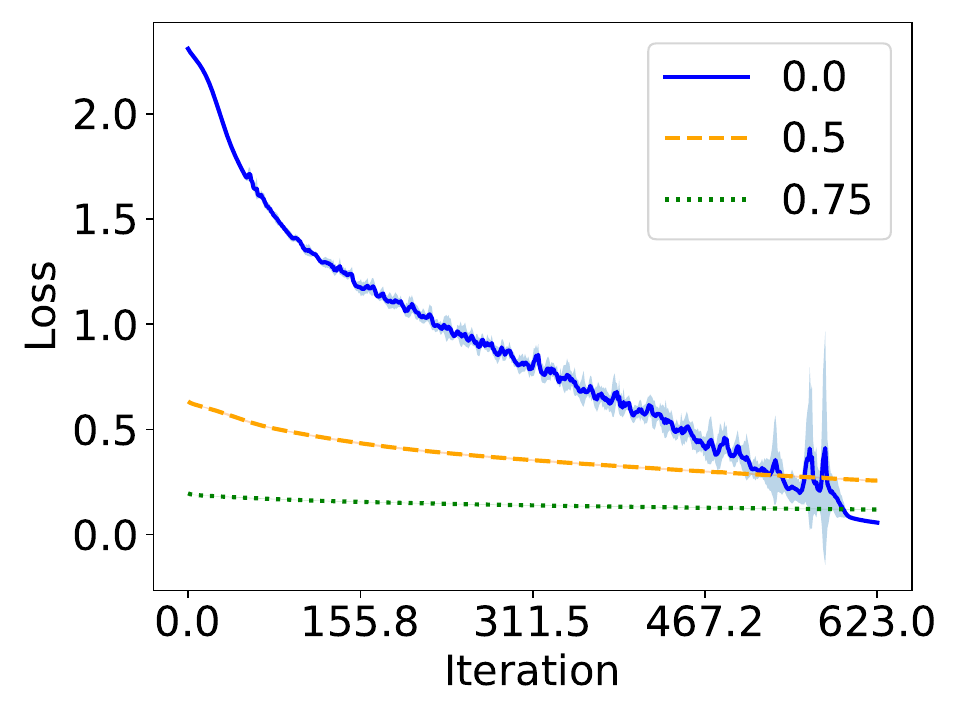}
        \caption{Loss}
    \end{subfigure}\hfill
    \begin{subfigure}{0.3\columnwidth}
        \includegraphics{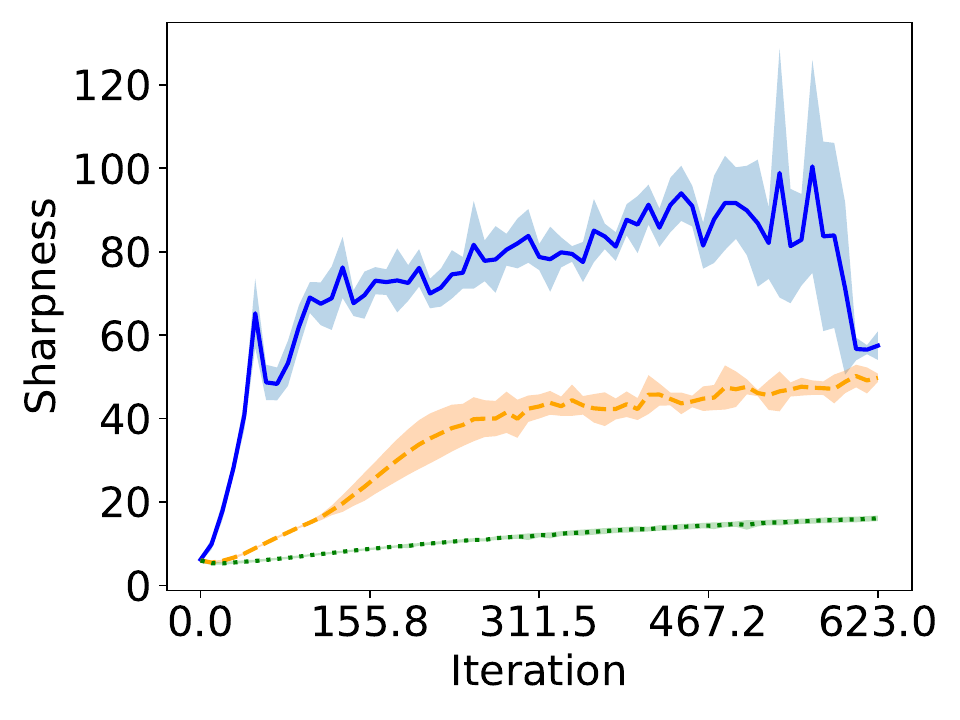}
        \caption{Sharpness}
    \end{subfigure}\hfill
    \begin{subfigure}{0.3\columnwidth}
    \includegraphics{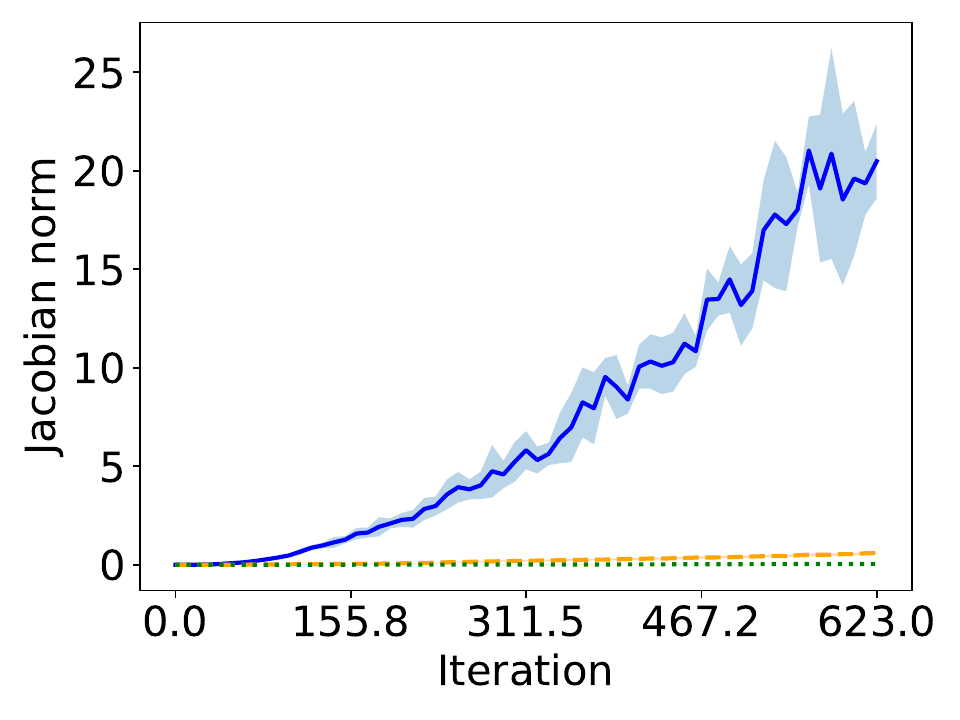}
    \caption{Jacobian norm}
    \end{subfigure}
    \caption{VGG11, learning rate 0.04}
\end{figure}

\begin{figure}[H]
    \centering
    \setkeys{Gin}{width=\linewidth}
    \begin{subfigure}{0.3\columnwidth}
        \includegraphics{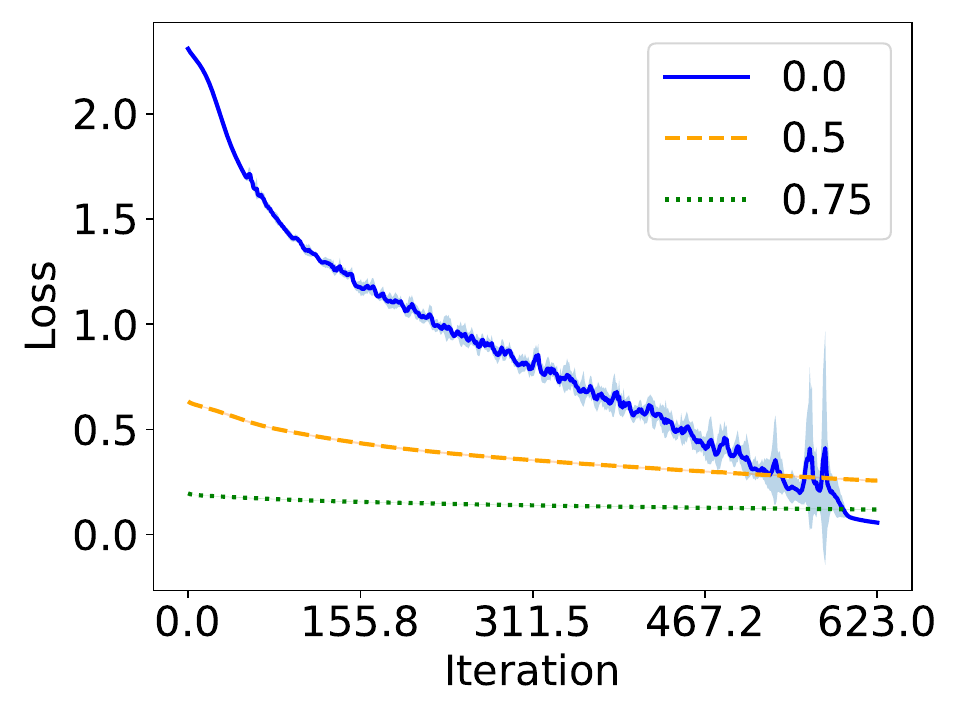}
        \caption{Loss}
    \end{subfigure}\hfill
    \begin{subfigure}{0.3\columnwidth}
        \includegraphics{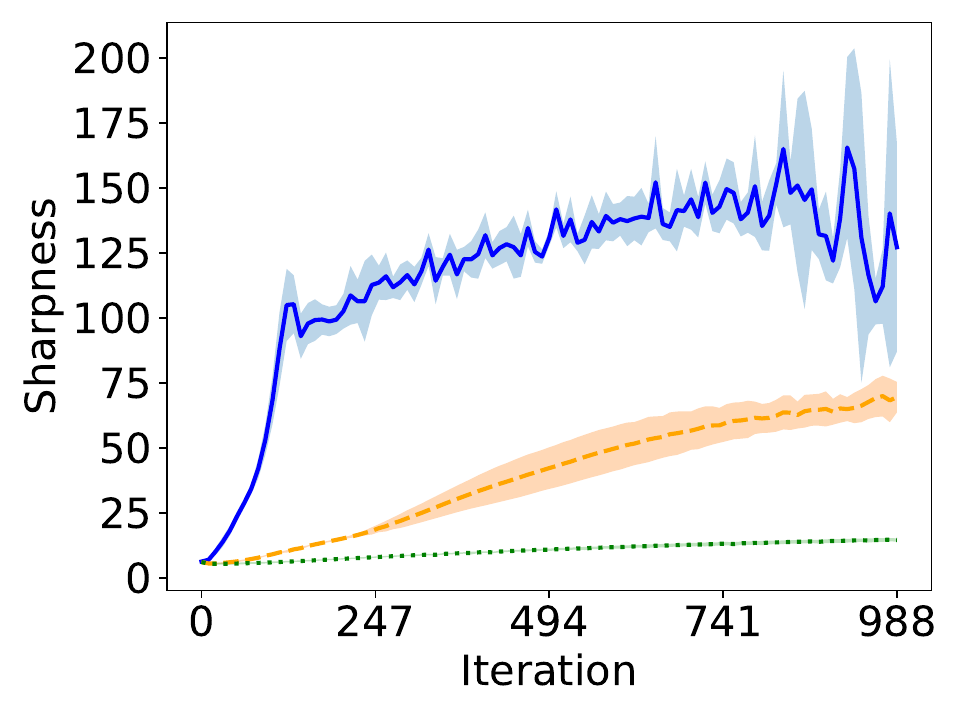}
        \caption{Sharpness}
    \end{subfigure}\hfill
    \begin{subfigure}{0.3\columnwidth}
    \includegraphics{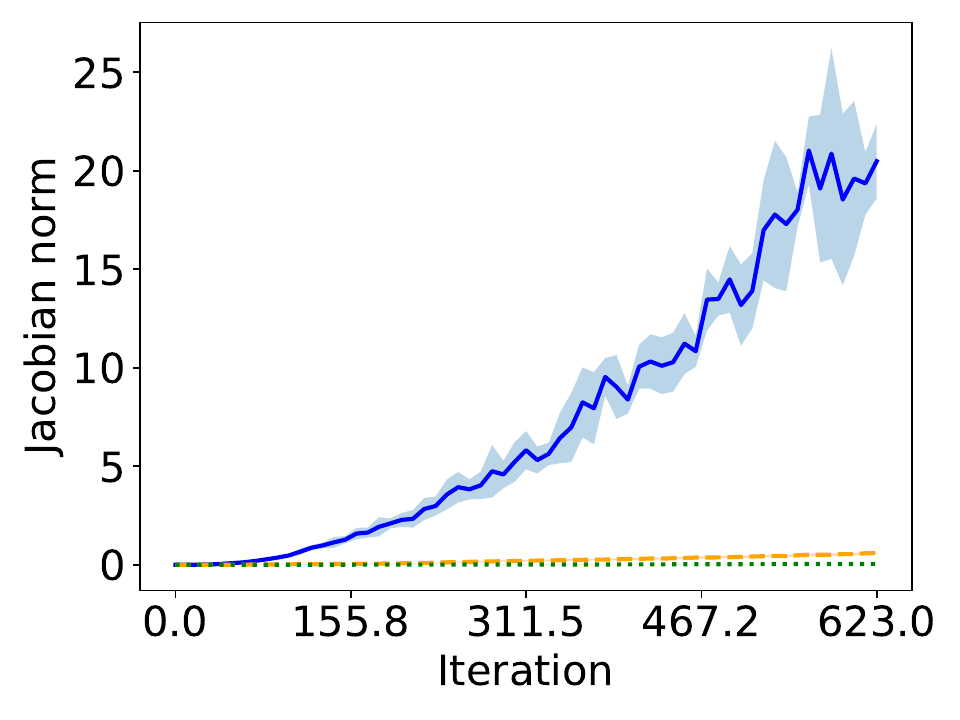}
    \caption{Jacobian norm}
    \end{subfigure}
    \caption{VGG11, learning rate 0.02}
\end{figure}

\begin{figure}[H]
    \centering
    \setkeys{Gin}{width=\linewidth}
    \begin{subfigure}{0.3\columnwidth}
        \includegraphics{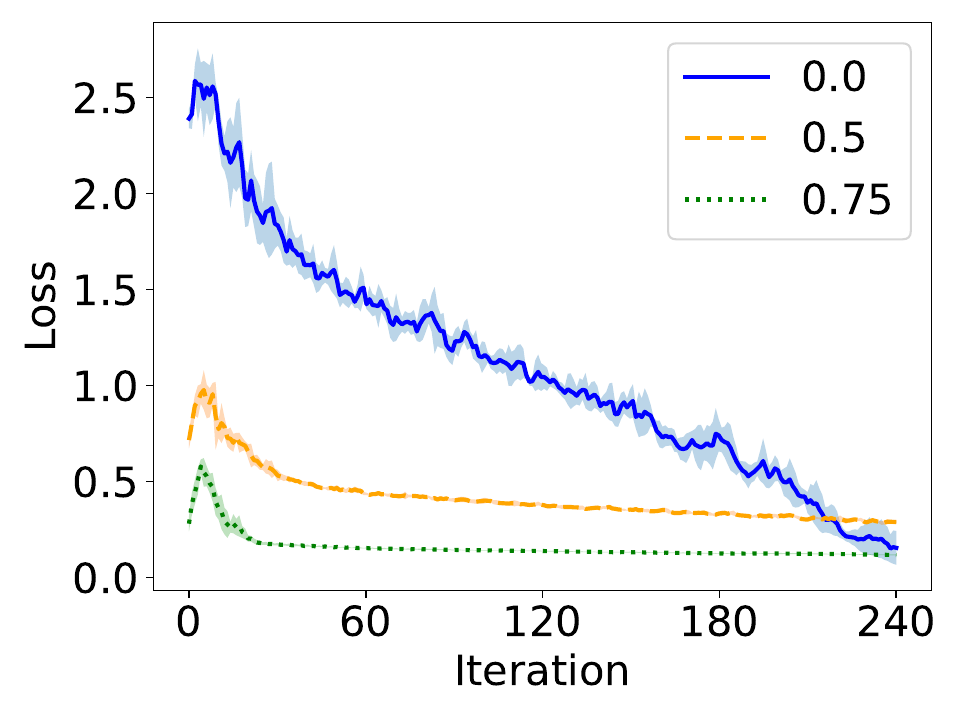}
        \caption{Loss}
    \end{subfigure}\hfill
    \begin{subfigure}{0.3\columnwidth}
        \includegraphics{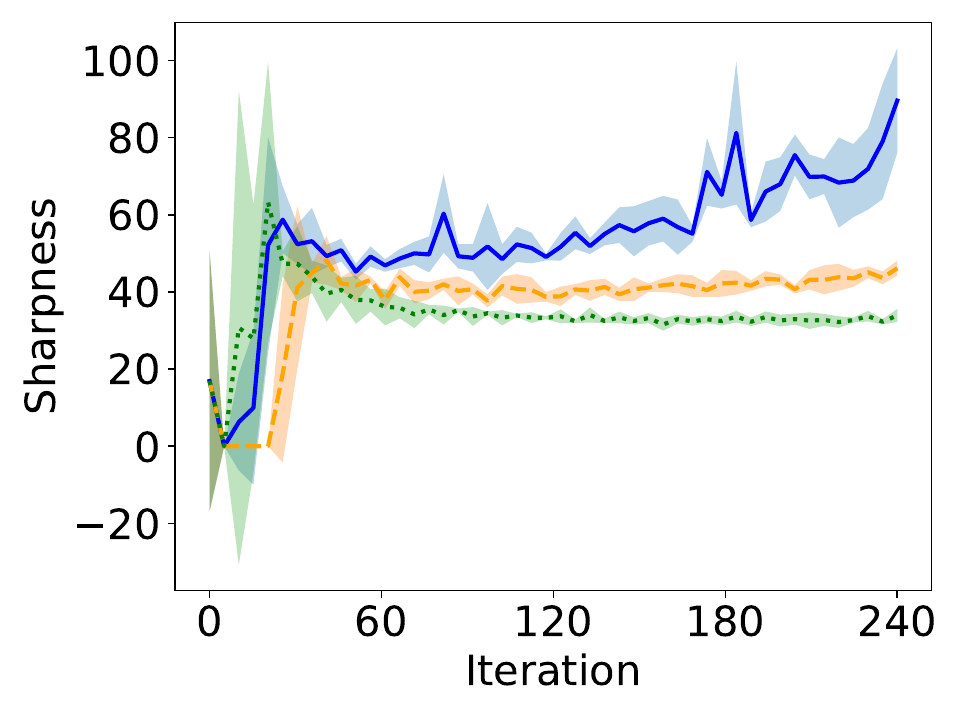}
        \caption{Sharpness}
    \end{subfigure}\hfill
    \begin{subfigure}{0.3\columnwidth}
    \includegraphics{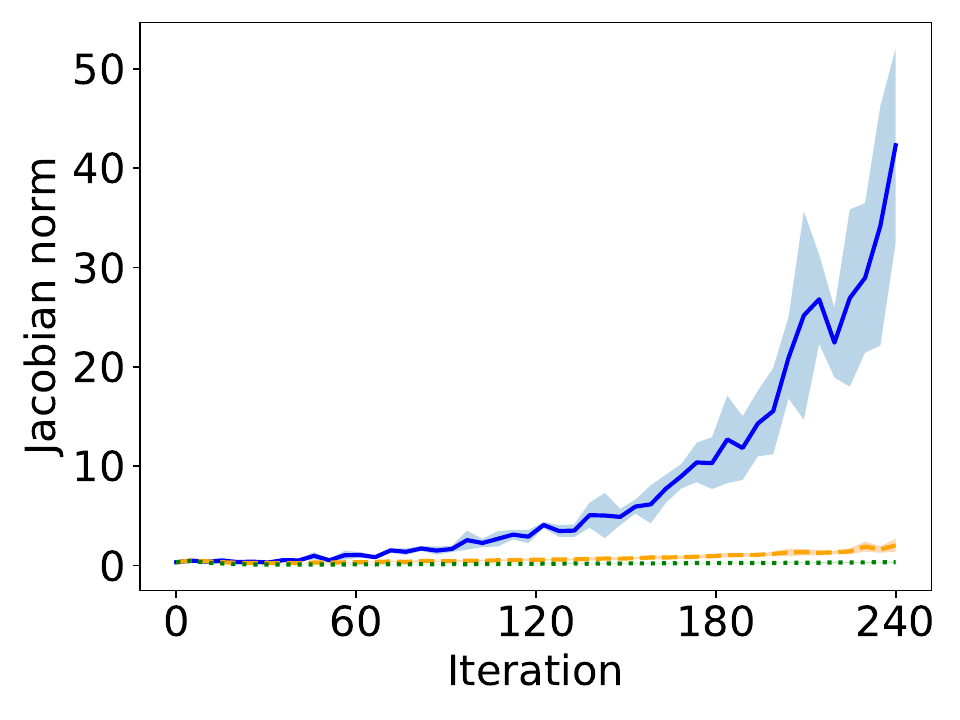}
    \caption{Jacobian norm}
    \end{subfigure}
    \caption{ResNet18, learning rate 0.08}
\end{figure}

\begin{figure}[H]
    \centering
    \setkeys{Gin}{width=\linewidth}
    \begin{subfigure}{0.3\columnwidth}
        \includegraphics{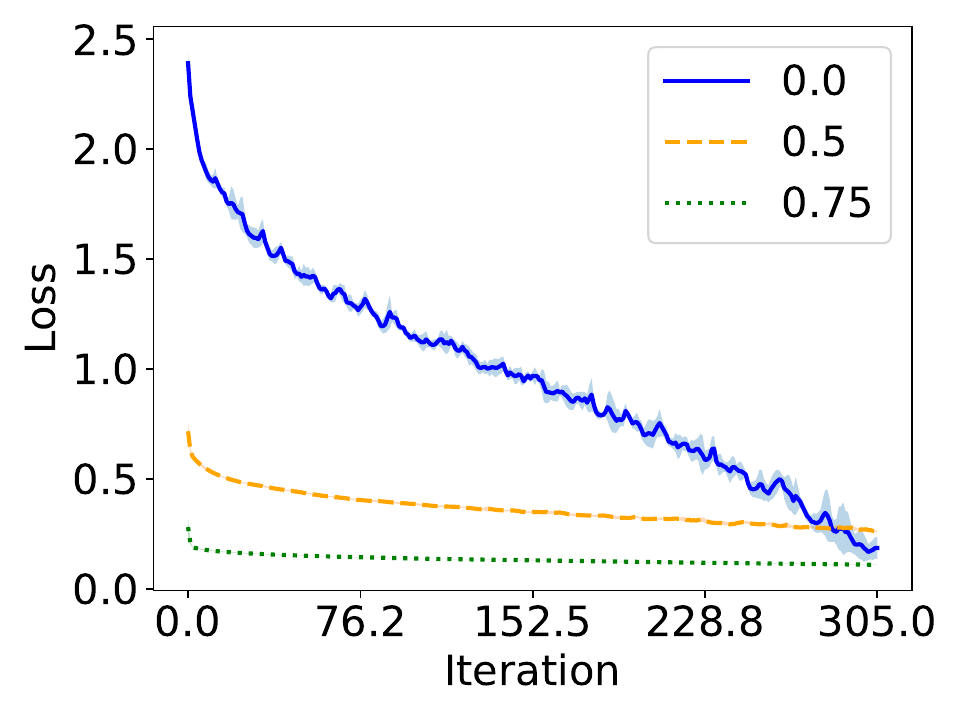}
        \caption{Loss}
    \end{subfigure}\hfill
    \begin{subfigure}{0.3\columnwidth}
        \includegraphics{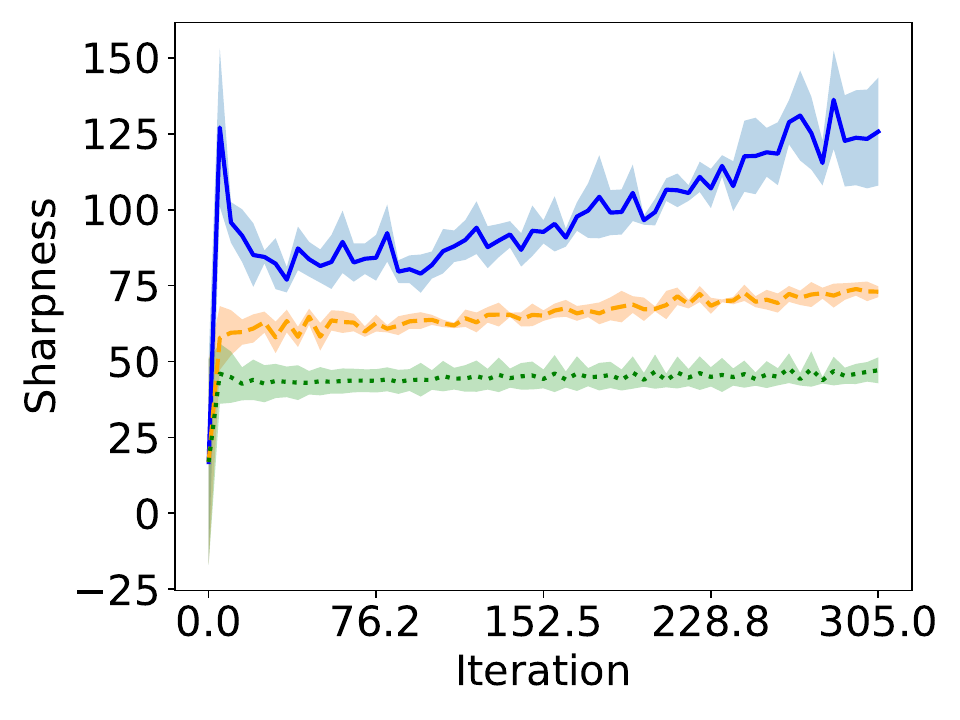}
        \caption{Sharpness}
    \end{subfigure}\hfill
    \begin{subfigure}{0.3\columnwidth}
    \includegraphics{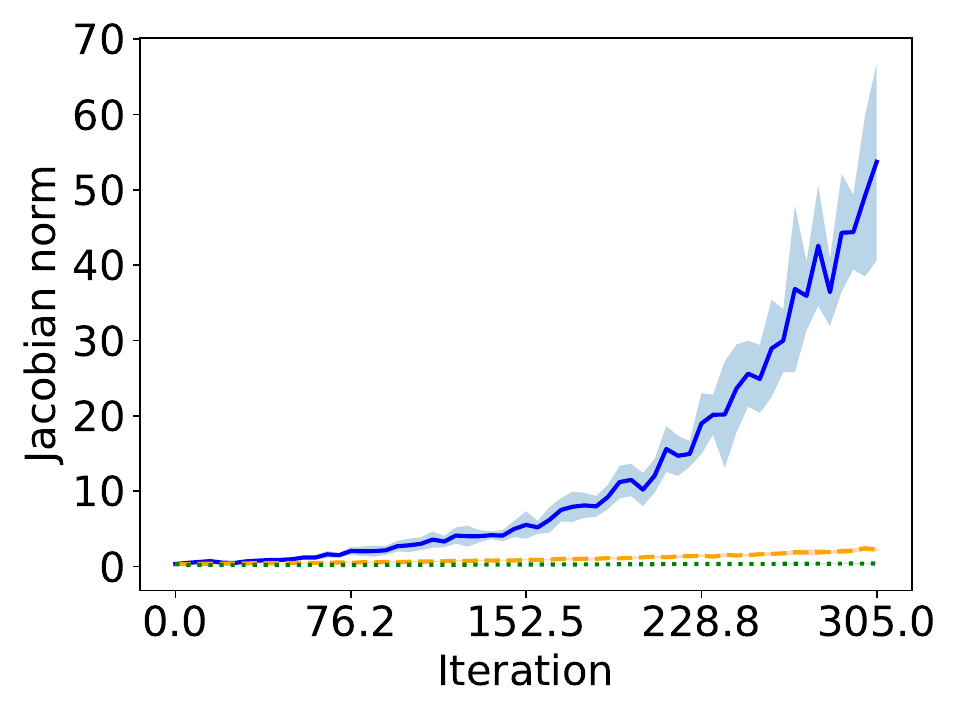}
    \caption{Jacobian norm}
    \end{subfigure}
    \caption{ResNet18, learning rate 0.04}
\end{figure}

\begin{figure}[H]
    \centering
    \setkeys{Gin}{width=\linewidth}
    \begin{subfigure}{0.3\columnwidth}
        \includegraphics{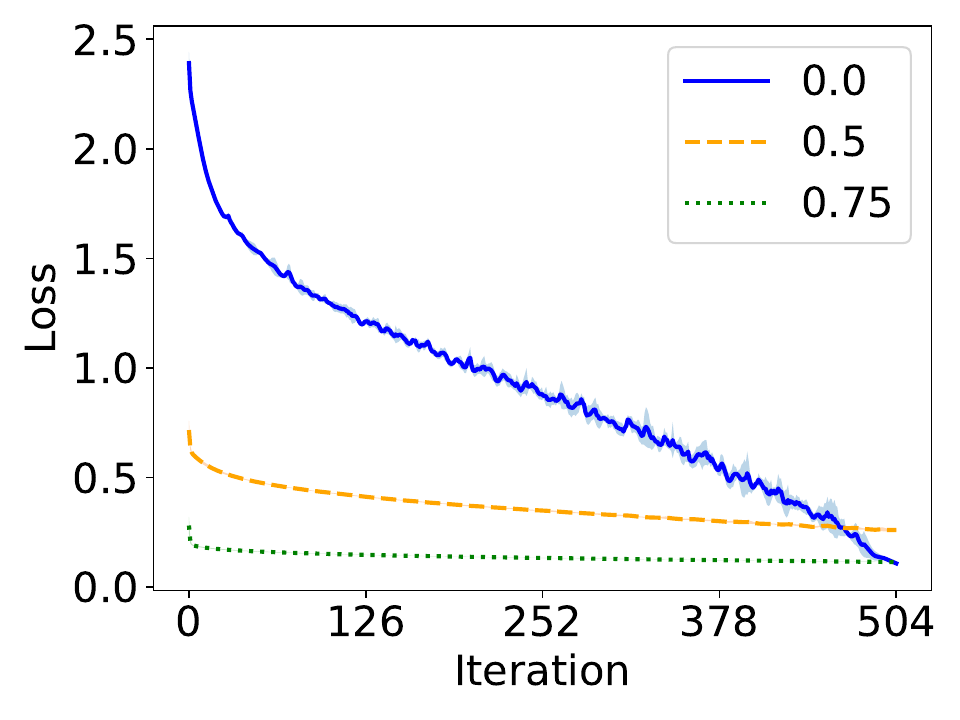}
        \caption{Loss}
    \end{subfigure}\hfill
    \begin{subfigure}{0.3\columnwidth}
        \includegraphics{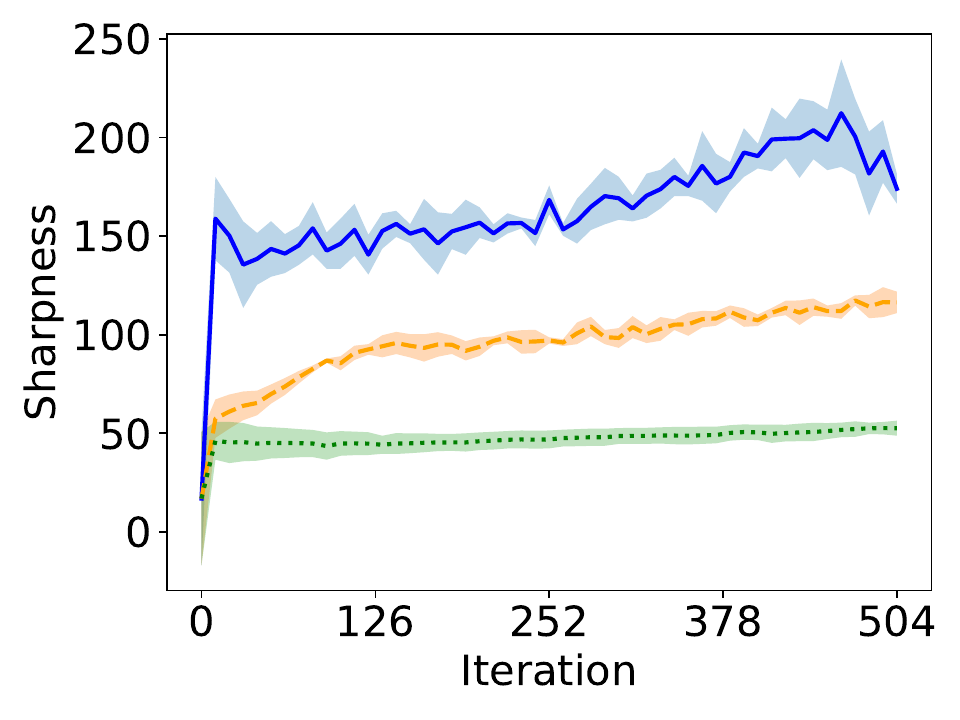}
        \caption{Sharpness}
    \end{subfigure}\hfill
    \begin{subfigure}{0.3\columnwidth}
    \includegraphics{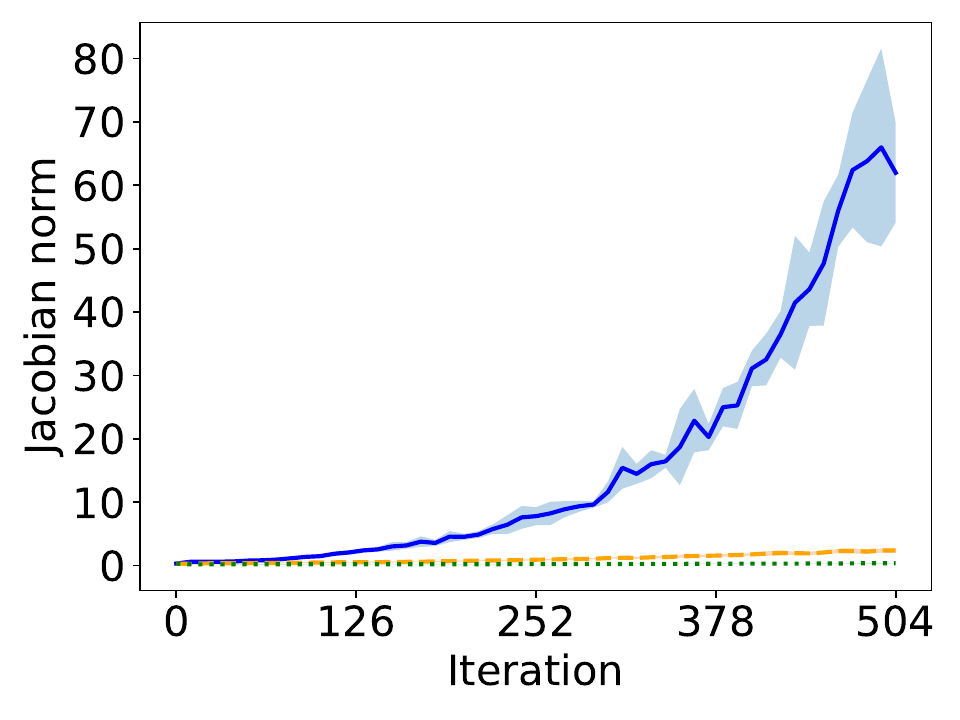}
    \caption{Jacobian norm}
    \end{subfigure}
    \caption{ResNet18, learning rate 0.02}
\end{figure}

\subsection{Batch size and learning rate}\label{app:batch_lr}

We trained a VGG11 and ResNet18 (superficially modifying \url{https://github.com/chengyangfu/pytorch-vgg-cifar10/blob/master/vgg.py} and \url{https://github.com/kuangliu/pytorch-cifar/blob/master/models/resnet.py} respectively, additionally removing dropout and retaining BN in the VGG11) on CIFAR10 and CIFAR100 at constant learning rates 0.1, 0.01, and 0.001, with batch sizes of 64, 128, 256, 512 and 1024.  The data were normalised by their RGB-channelwise mean and standard deviation, computed across all pixel coordinates and all elements of the training set.  Five trials of each were conducted.  Training loss on the full data set was computed every 100 iterations, and training was terminated when the average of the most recent 10 such losses was smaller than 0.01 for CIFAR10 and 0.02 for CIFAR100.  Weight decay regularisation of 0.0001 was used throughout. Refer to \texttt{batch\_lr.py} in the supplementary material.

At the termination of training, training loss and test loss were computed, with estimates of the Jacobian norm and sharpness computed using a randomly selected subset of size 2000 from the training set. The same seed was used to generate the parameters for a given trail as was used to generate the random subset of the training set.

We see that, on the whole, the claim that smaller batch sizes and larger learning rates lead to flatter minima and better generalisation is supported by our experiments, with Jacobian norm in particular being well-correlated to generalisation gap as anticipated by Theorem \ref{thm:generalisationbound} and Ansatz \ref{ansatz}. As noted in the main body of the paper, the notable exception to this is ResNet models trained with the largest learning rate, where Jacobian norm appears to underestimate the Lipschitz constant of the model and hence the generalisation gap. We speculate that this is due to large learning rate training of skip connected models leading to larger Jacobian Lipschitz constant, making the Jacobian norm a poorer estimate of the Lipschitz constant of the model and hence of generalisation (Theorems \ref{thm:jacobianmaximum} and \ref{thm:generalisationbound}).

\begin{figure}[H]
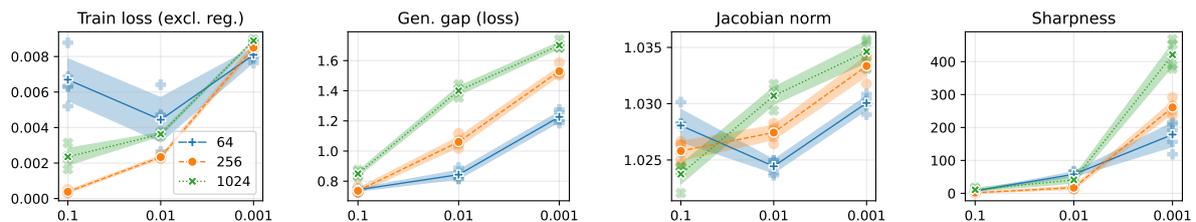

    \centering
    \setkeys{Gin}{width=\linewidth}
    \begin{subfigure}{0.24\columnwidth}
        \includegraphics{lr/cifar10/resnet/trainlosslr.pdf}
    \end{subfigure}\hfill
    \begin{subfigure}{0.24\columnwidth}
        \includegraphics{lr/cifar10/resnet/gaplr.pdf}
    \end{subfigure}\hfill
    \begin{subfigure}{0.24\columnwidth}
        \includegraphics{lr/cifar10/resnet/jac1evallr.pdf}
    \end{subfigure}\hfill
    \begin{subfigure}{0.24\columnwidth}
    \includegraphics{lr/cifar10/resnet/hesslr.pdf}
    \end{subfigure}
    \caption{ResNet18 on CIFAR10, learning rate on $x$ axis. Line style indicates batch size.}
\end{figure}

\begin{figure}[H]
    \centering
    \setkeys{Gin}{width=\linewidth}
    \begin{subfigure}{0.24\columnwidth}
        \includegraphics{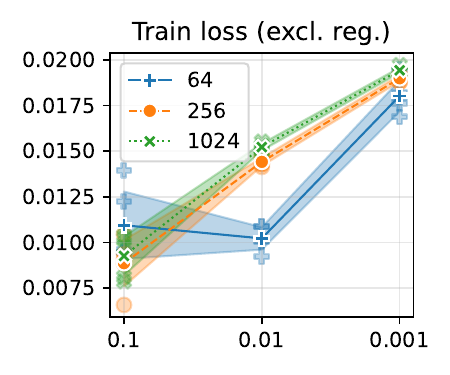}
    \end{subfigure}\hfill
    \begin{subfigure}{0.24\columnwidth}
        \includegraphics{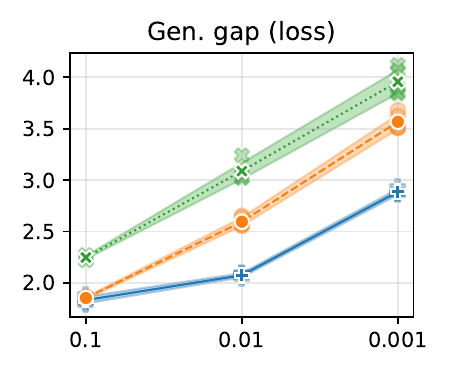}
    \end{subfigure}\hfill
    \begin{subfigure}{0.24\columnwidth}
        \includegraphics{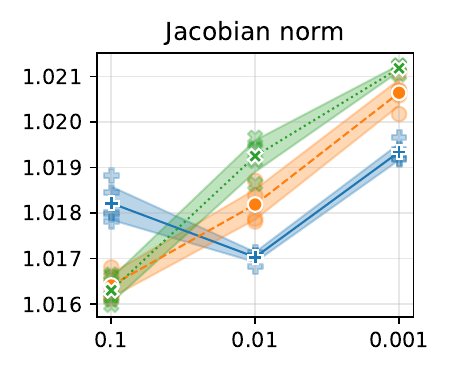}
    \end{subfigure}\hfill
    \begin{subfigure}{0.24\columnwidth}
    \includegraphics{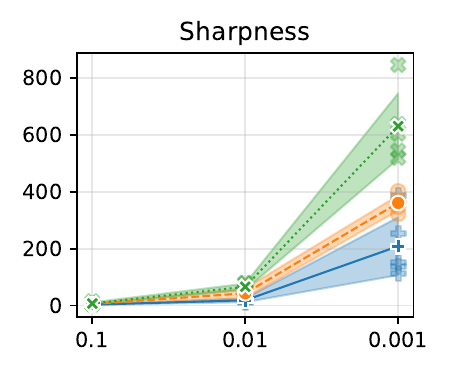}
    \end{subfigure}
    \caption{ResNet18 on CIFAR100, learning rate on $x$ axis. Line style indicates batch size.}
\end{figure}

\begin{figure}[H]
    \centering
    \setkeys{Gin}{width=\linewidth}
    \begin{subfigure}{0.24\columnwidth}
        \includegraphics{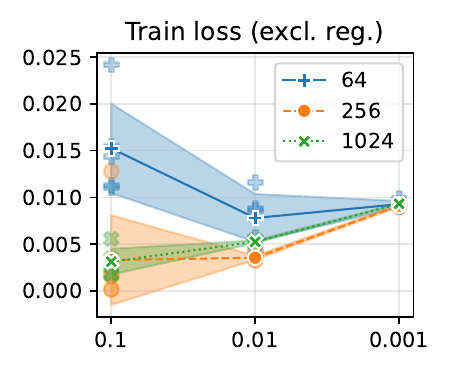}
    \end{subfigure}\hfill
    \begin{subfigure}{0.24\columnwidth}
        \includegraphics{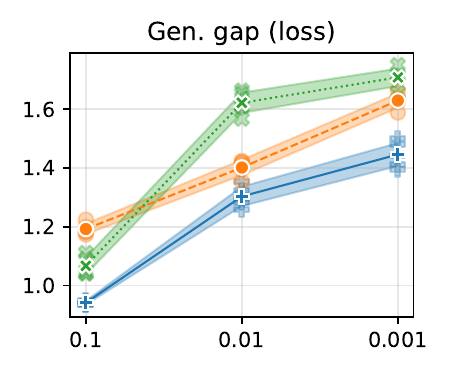}
    \end{subfigure}\hfill
    \begin{subfigure}{0.24\columnwidth}
        \includegraphics{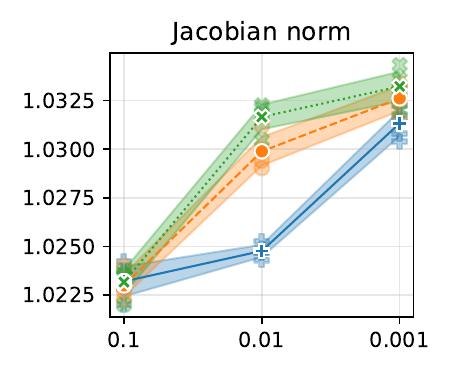}
    \end{subfigure}\hfill
    \begin{subfigure}{0.24\columnwidth}
    \includegraphics{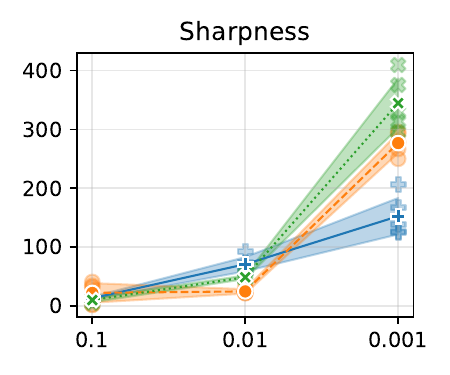}
    \end{subfigure}
    \caption{VGG11 on CIFAR10, learning rate on $x$ axis. Line style indicates batch size.}
\end{figure}

\begin{figure}[H]
    \centering
    \setkeys{Gin}{width=\linewidth}
    \begin{subfigure}{0.24\columnwidth}
        \includegraphics{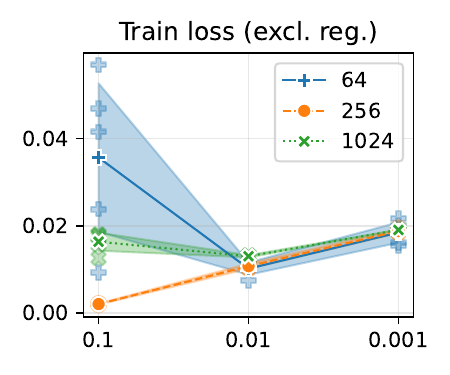}
    \end{subfigure}\hfill
    \begin{subfigure}{0.24\columnwidth}
        \includegraphics{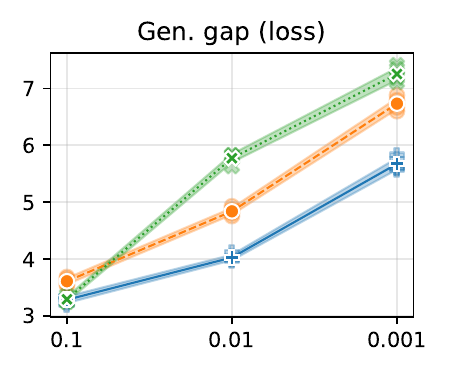}
    \end{subfigure}\hfill
    \begin{subfigure}{0.24\columnwidth}
        \includegraphics{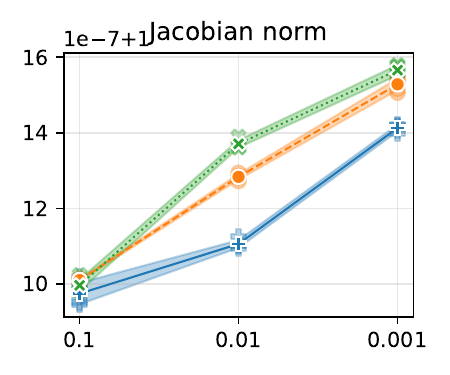}
    \end{subfigure}\hfill
    \begin{subfigure}{0.24\columnwidth}
    \includegraphics{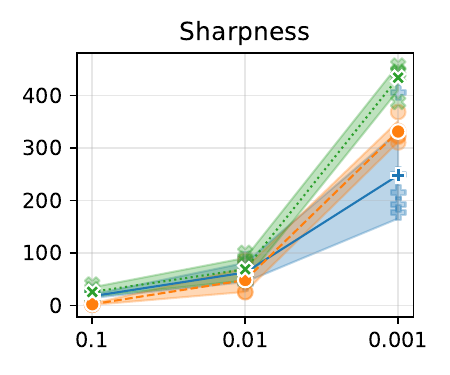}
    \end{subfigure}
    \caption{VGG11 on CIFAR100, learning rate on $x$ axis.  Line style indicates batch size.}
\end{figure}

\begin{figure}[H]
    \centering
    \setkeys{Gin}{width=\linewidth}
    \begin{subfigure}{0.24\columnwidth}
        \includegraphics{batchsize/cifar10/resnet/trainlossbatch.pdf}
    \end{subfigure}\hfill
    \begin{subfigure}{0.24\columnwidth}
        \includegraphics{batchsize/cifar10/resnet/gapbatch.pdf}
    \end{subfigure}\hfill
    \begin{subfigure}{0.24\columnwidth}
        \includegraphics{batchsize/cifar10/resnet/jac1evalbatch.pdf}
    \end{subfigure}\hfill
    \begin{subfigure}{0.24\columnwidth}
    \includegraphics{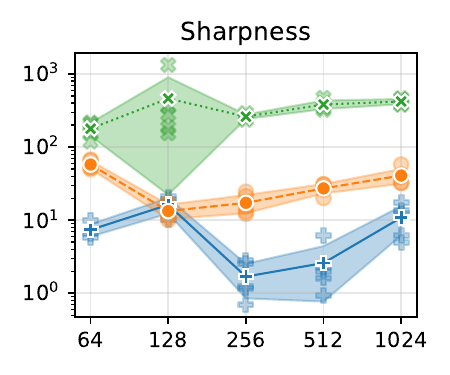}
    \end{subfigure}
    \caption{ResNet18 on CIFAR10, batch size on $x$ axis.  Line style indicates learning rate.}
\end{figure}

\begin{figure}[H]
    \centering
    \setkeys{Gin}{width=\linewidth}
    \begin{subfigure}{0.24\columnwidth}
        \includegraphics{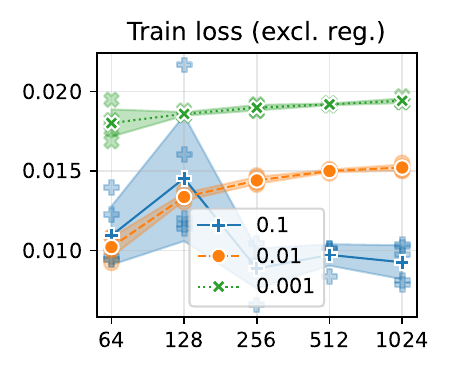}
    \end{subfigure}\hfill
    \begin{subfigure}{0.24\columnwidth}
        \includegraphics{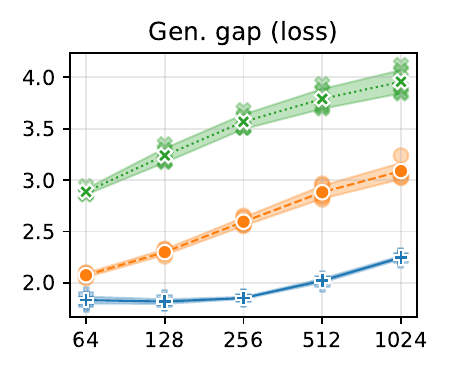}
    \end{subfigure}\hfill
    \begin{subfigure}{0.24\columnwidth}
        \includegraphics{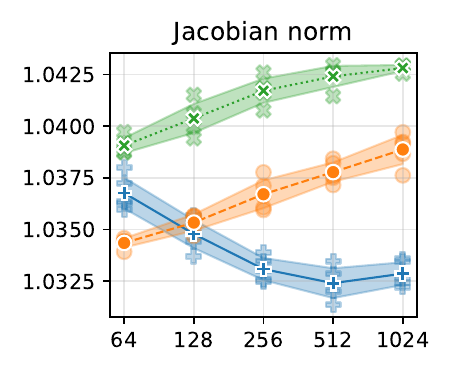}
    \end{subfigure}\hfill
    \begin{subfigure}{0.24\columnwidth}
    \includegraphics{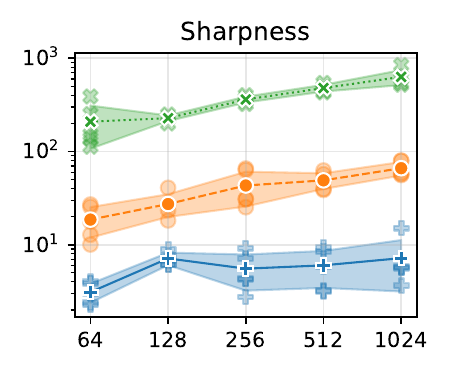}
    \end{subfigure}
    \caption{ResNet18 on CIFAR100, batch size on $x$ axis.  Line style indicates learning rate.}
\end{figure}

\begin{figure}[H]
    \centering
    \setkeys{Gin}{width=\linewidth}
    \begin{subfigure}{0.24\columnwidth}
        \includegraphics{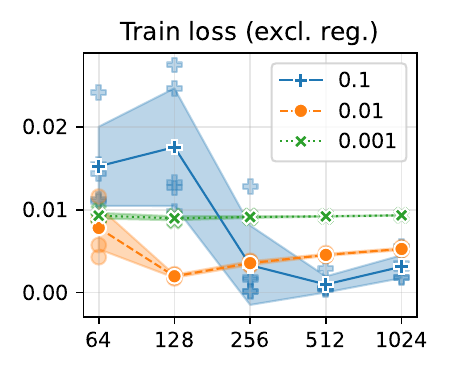}
    \end{subfigure}\hfill
    \begin{subfigure}{0.24\columnwidth}
        \includegraphics{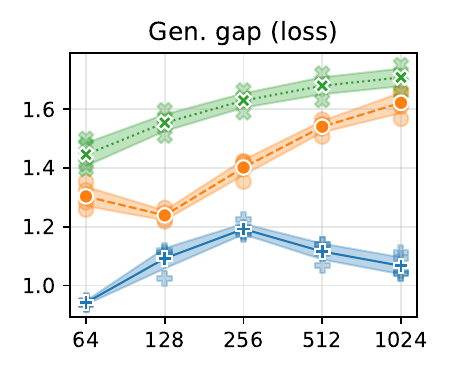}
    \end{subfigure}\hfill
    \begin{subfigure}{0.24\columnwidth}
        \includegraphics{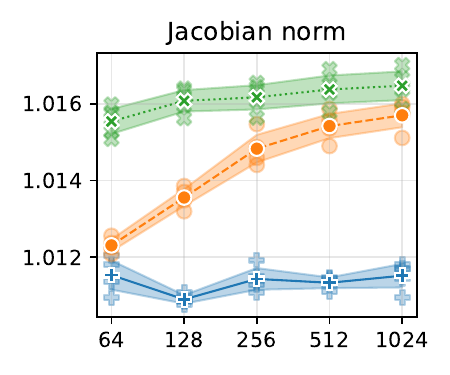}
    \end{subfigure}\hfill
    \begin{subfigure}{0.24\columnwidth}
    \includegraphics{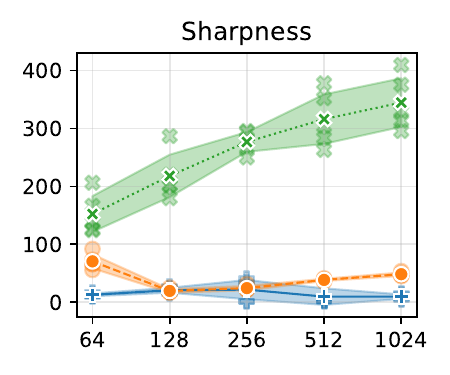}
    \end{subfigure}
    \caption{VGG11 on CIFAR10, batch size on $x$ axis.  Line style indicates learning rate.}
\end{figure}

\begin{figure}[H]
    \centering
    \setkeys{Gin}{width=\linewidth}
    \begin{subfigure}{0.24\columnwidth}
        \includegraphics{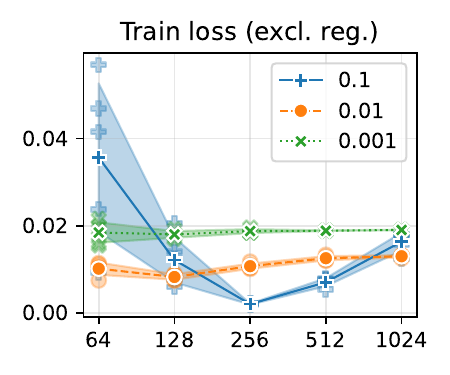}
    \end{subfigure}\hfill
    \begin{subfigure}{0.24\columnwidth}
        \includegraphics{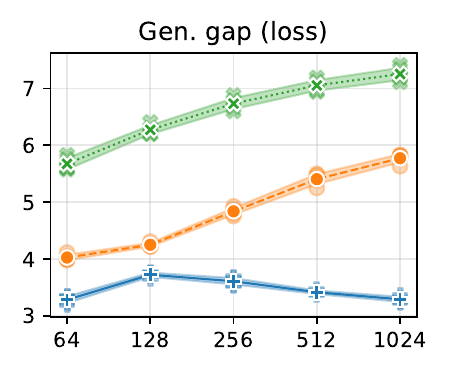}
    \end{subfigure}\hfill
    \begin{subfigure}{0.24\columnwidth}
        \includegraphics{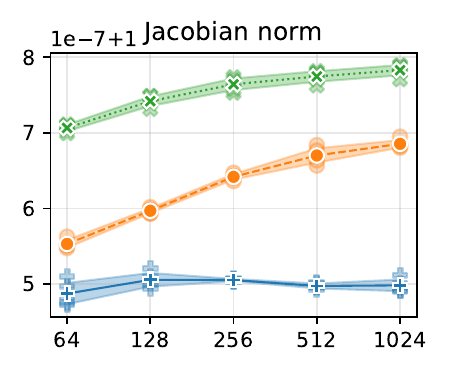}
    \end{subfigure}\hfill
    \begin{subfigure}{0.24\columnwidth}
    \includegraphics{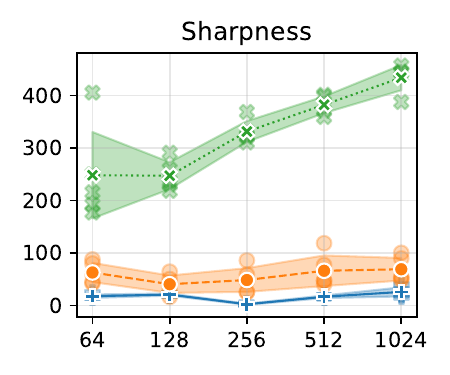}
    \end{subfigure}
    \caption{VGG11 on CIFAR100, batch size on $x$ axis.  Line style indicates learning rate.}
\end{figure}

\subsection{Extended practical results}\label{app:practical}

Here we provide the experimental details and additional experimental results for Section \ref{sec:generalisation}.
In addition to the generalisation gap, sharpness, and (input-output) Jacobian norm, these plots include the operator norm of the Gauss-Newton matrix, the loss on the training set, and the accuracy on the validation set.
Note that the figures in this section present the \emph{final} metrics at the conclusion of training as a function of the regularisation parameter, and hence progressive sharpening (as a function of time) will not be observed directly.
Rather, the purpose is to examine the impact of different regularisation strategies on the Hessian and model Jacobian.

In all cases, the model Jacobian is correlated with the generalisation gap, whereas the Hessian is often uncorrelated.
Moreover, the norm of the Gauss-Newton matrix is (almost) always very similar to that of Hessian, in line with previous empirical observation \cite{papyan1, papyan2, coheneos}.
The results support the arguments that (i) the model Jacobian is strongly related to the generalisation gap and (ii) while the Hessian is connected to the model Jacobian, and forcing the Hessian to be sufficiently small likewise appears to force the Jacobian to be smaller (see the SAM plots), the Hessian is also influenced by other factors and can be large even without the Jacobian being large (see other plots).
All of these phenomena are consistent with our ansatz.

The figures in this section present results for \{CIFAR10, CIFAR100\} $\times$ \{VGG11, ResNet18\} $\times$ \{batch-norm, no batch-norm\}.
Line colour and style denote different initial learning rates.
The models were trained for 90 epochs with a minibatch size of 128 examples using Polyak momentum of 0.9.
The learning rate was decayed by a factor of 10 after 50 and 80 epochs.
Models were trained using the softmax cross-entropy loss with weight decay of 0.0005.
(If weight decay were disabled, we would often observe the Hessian to collapse to zero as the training loss went to zero, due to the vanishing second gradient of the cost function in this region \cite[Appendix C]{coheneos}. This would make the correlation between sharpness and generalisation gap even worse.)
Refer to \texttt{train\_jax.py} and \texttt{slurm/launch\_wd\_sweep.sh} for the default configuration and experiment configuration, respectively.

For each regularisation strategy, the degree of regularisation is parametrised as follows.
Label smoothing considers smoothed labels $(1 - \alpha) y + \alpha (1 / n)$ with $\alpha \in [0, 1]$.
Mixup takes a convex combination of two examples $(1-\theta) (x, y) + \theta (x', y')$.
The coefficient $\theta$ is drawn from a symmetric beta distribution $\theta \sim \operatorname{Beta}(\beta, \beta)$ with $\beta \in [0, \infty)$, which corresponds to a Bernoulli distribution when $\beta = 0$ and a uniform distribution when $\beta = 1$.
Data augmentation modifies the inputs~$x$ with probability~$p \in [0, 1]$; the image transforms which we adopt are the standard choices for CIFAR (four-pixel padding, random crop, random horizontal flip).
Sharpness Aware Minimisation (SAM) restricts the distance~$\rho$ between the current parameter vector and that which is used to compute the update, which simplifies to SGD when $\rho = 0$.

For all results presented in this section, the loss and generalisation gap were computed using ``clean'' training examples; that is, without mixup or data augmentation in the respective experiments (despite this requiring an additional evaluation of the model for each step).
The Hessian, Gauss-Newton and input-output Jacobian norms were similarly computed using a random batch of 1000 clean training examples.
The losses and matrix norms in this section similarly exclude weight decay.
Label smoothing was interpreted as a modification of the loss function rather than the training distribution, and therefore \emph{was} included in the calculation of the losses and matrix norms.
Batch-norm layers were used in ``train mode'' (statistical moments computed from batch) for the Hessian and Gauss-Newton matrix, and in ``eval mode'' (statistical moments are constants) for the Jacobian matrix, although this difference was observed to have negligible effect on the Jacobian norm.

\subsubsection{CIFAR10}

\begin{figure}[H]
\centering
\makebox[\textwidth][c]{\includegraphics[height=190mm]{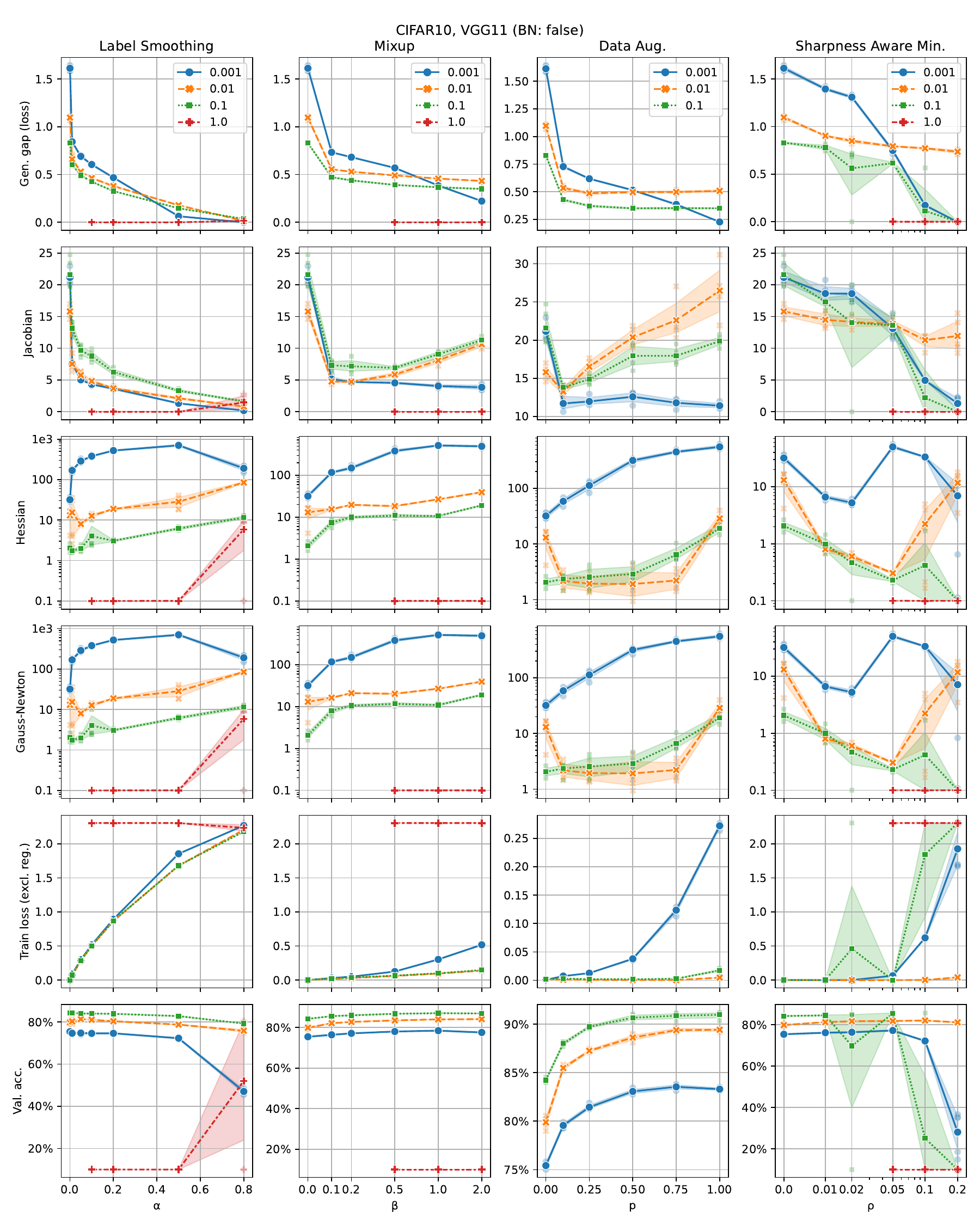}}
\caption{VGG11 without batch-norm on CIFAR10.}
\end{figure}
\begin{figure}[H]
\centering
\makebox[\textwidth][c]{\includegraphics[height=190mm]{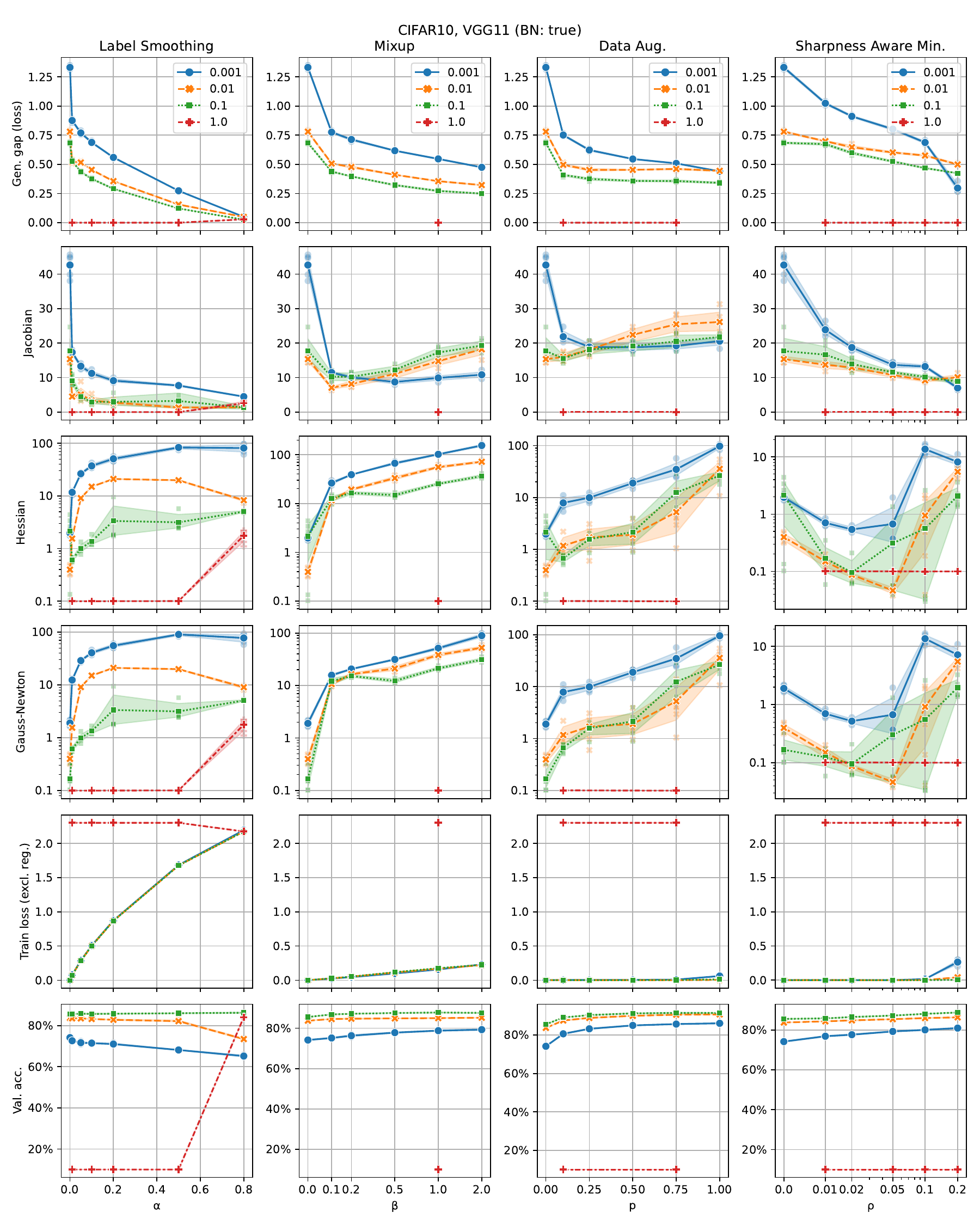}}
\caption{VGG11 with batch-norm on CIFAR10.}
\end{figure}

\begin{figure}[H]
\centering
\makebox[\textwidth][c]{\includegraphics[height=190mm]{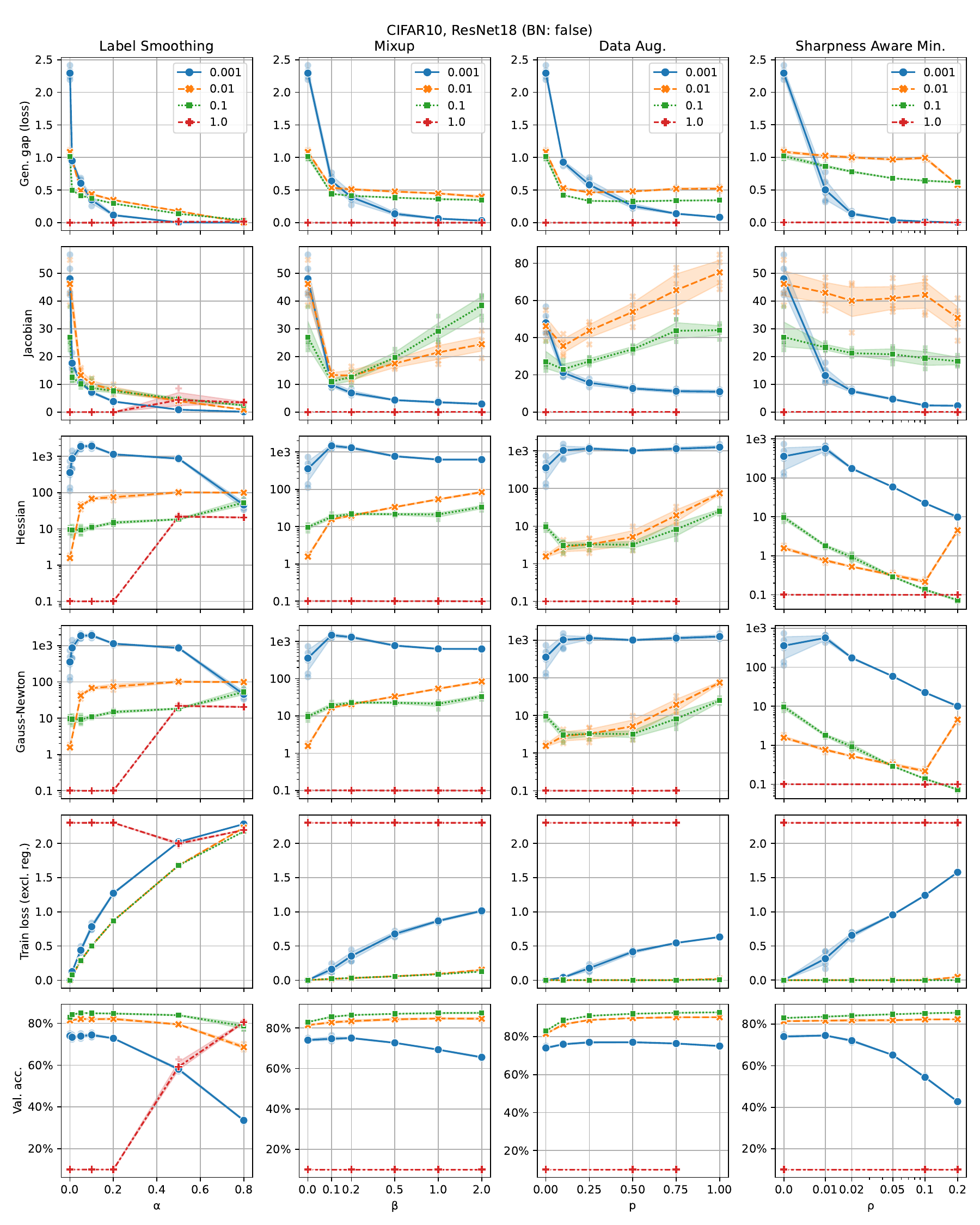}}
\caption{ResNet18 without batch-norm on CIFAR10.}
\end{figure}
\begin{figure}[H]
\centering
\makebox[\textwidth][c]{\includegraphics[height=190mm]{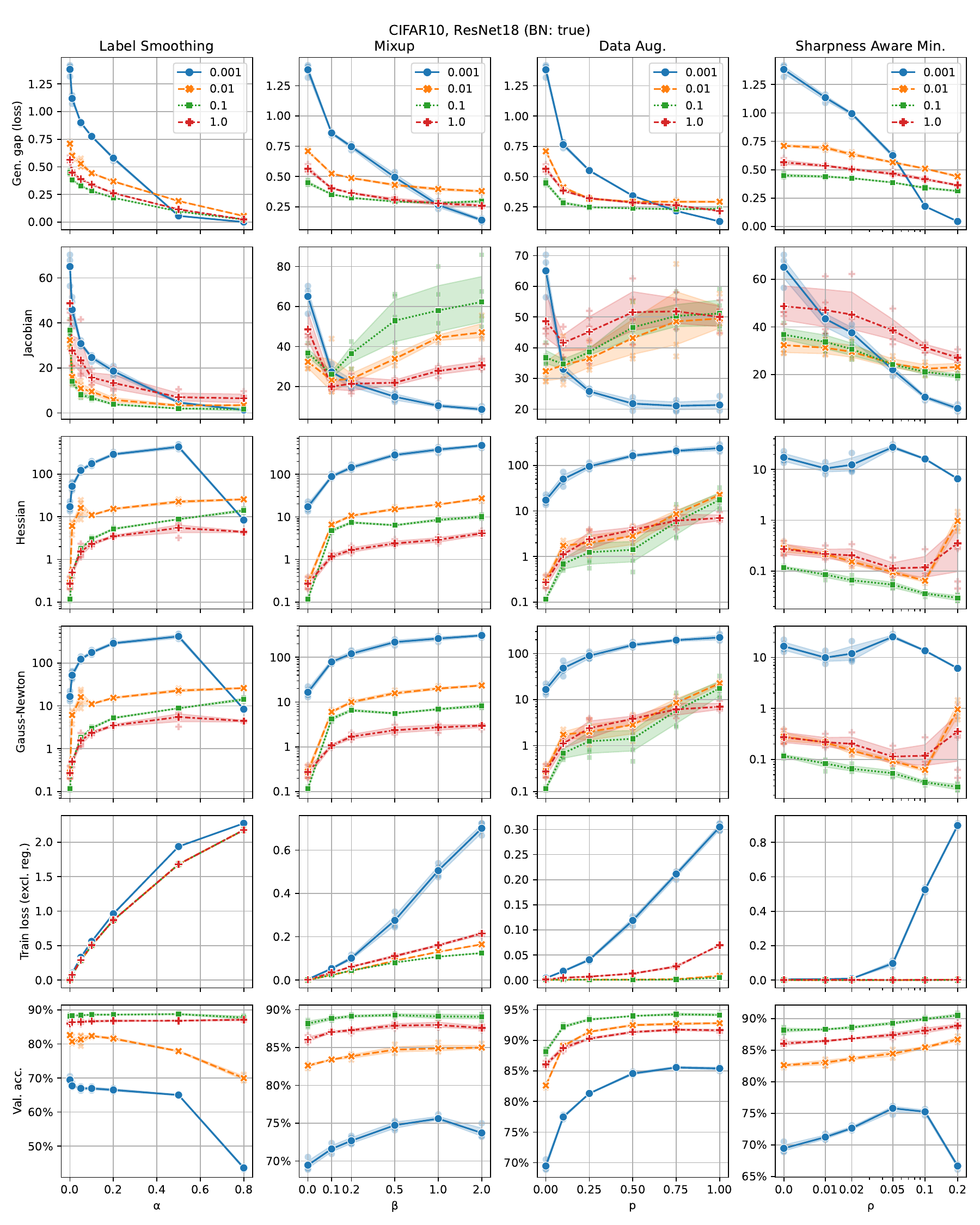}}
\caption{ResNet18 with batch-norm on CIFAR10.}
\end{figure}

\clearpage
\subsubsection{CIFAR100}

\begin{figure}[H]
\centering
\makebox[\textwidth][c]{\includegraphics[height=190mm]{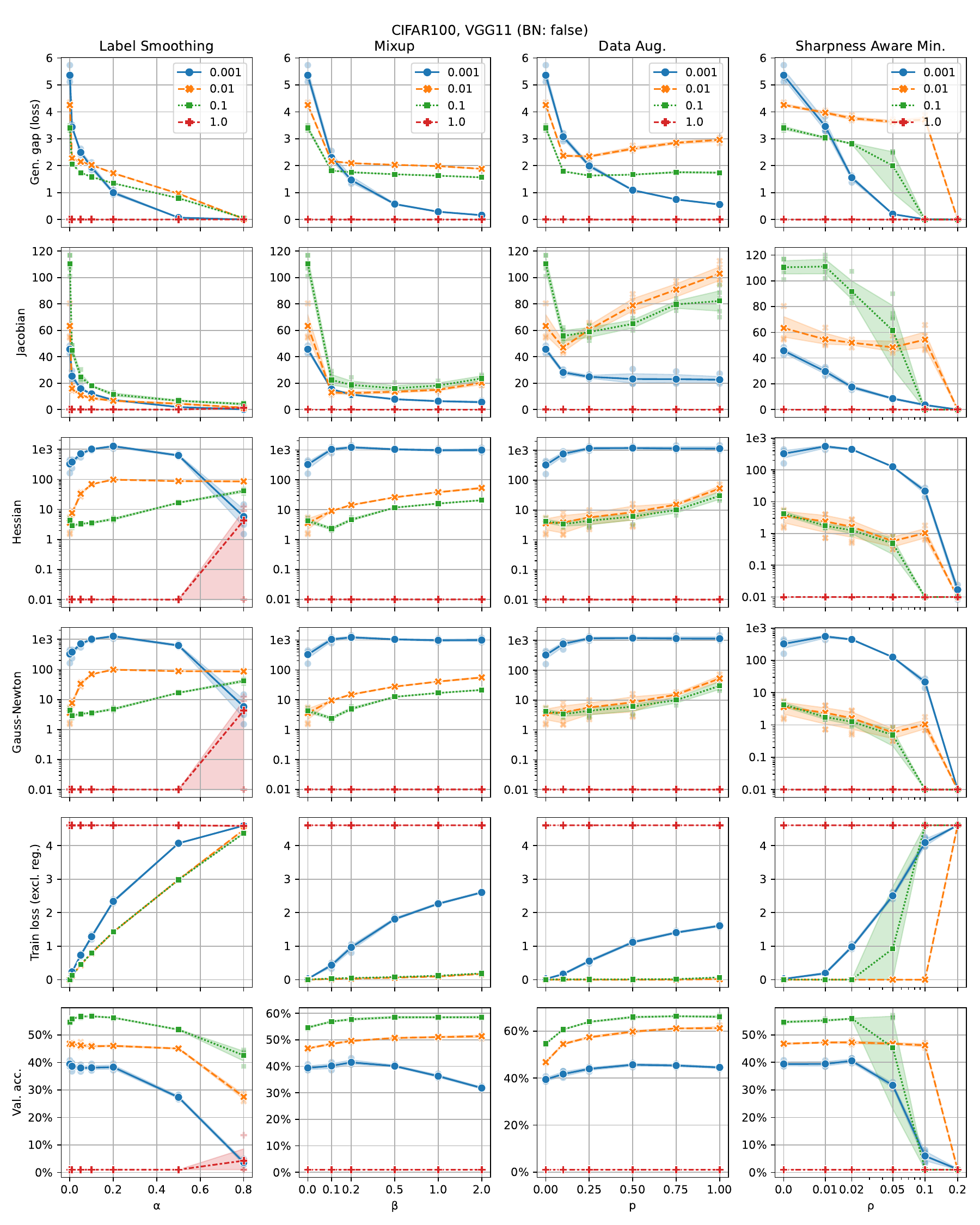}}
\caption{VGG11 without batch-norm on CIFAR100.}
\end{figure}
\begin{figure}[H]
\centering
\makebox[\textwidth][c]{\includegraphics[height=190mm]{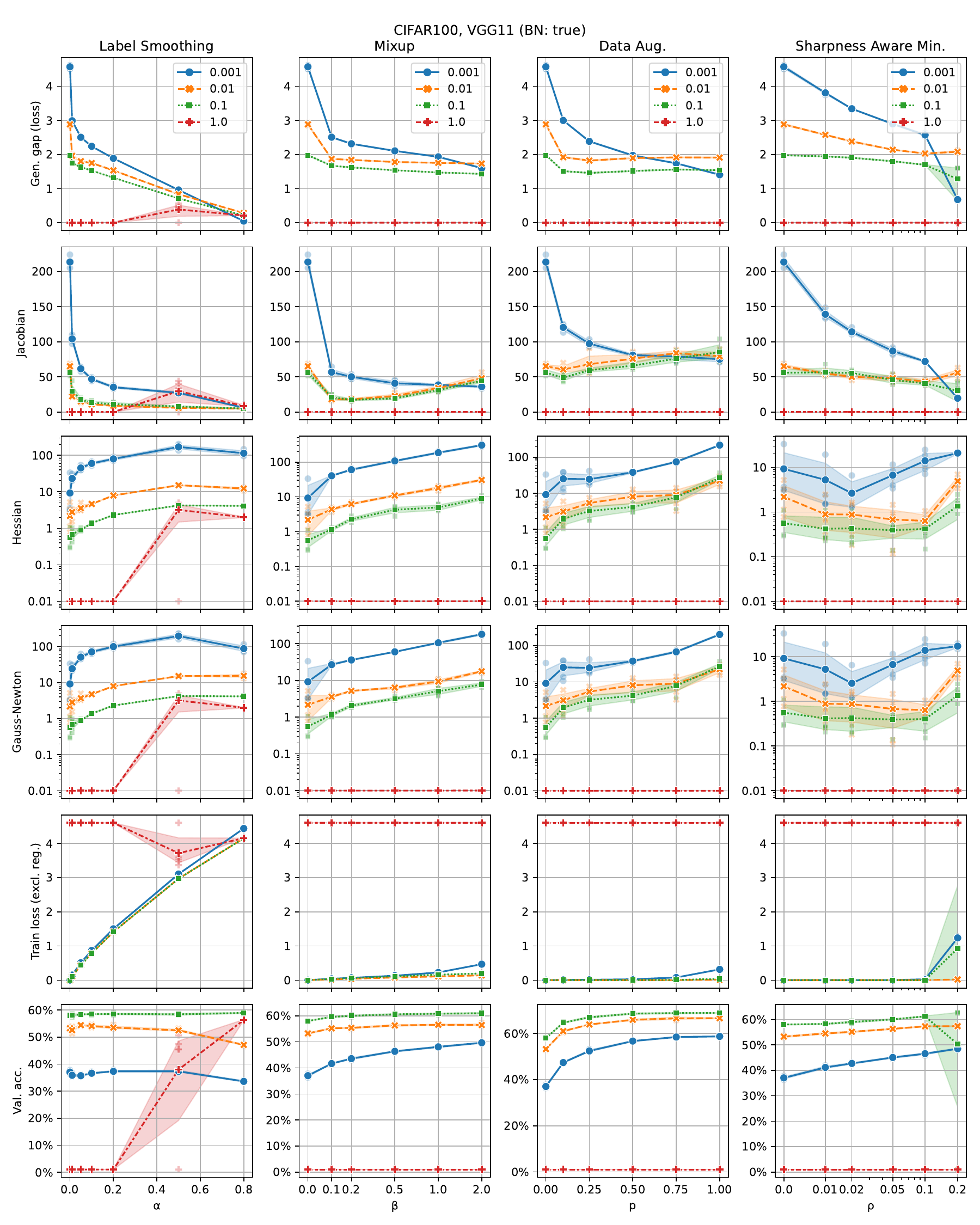}}
\caption{VGG11 with batch-norm on CIFAR100.}
\end{figure}

\begin{figure}[H]
\centering
\makebox[\textwidth][c]{\includegraphics[height=190mm]{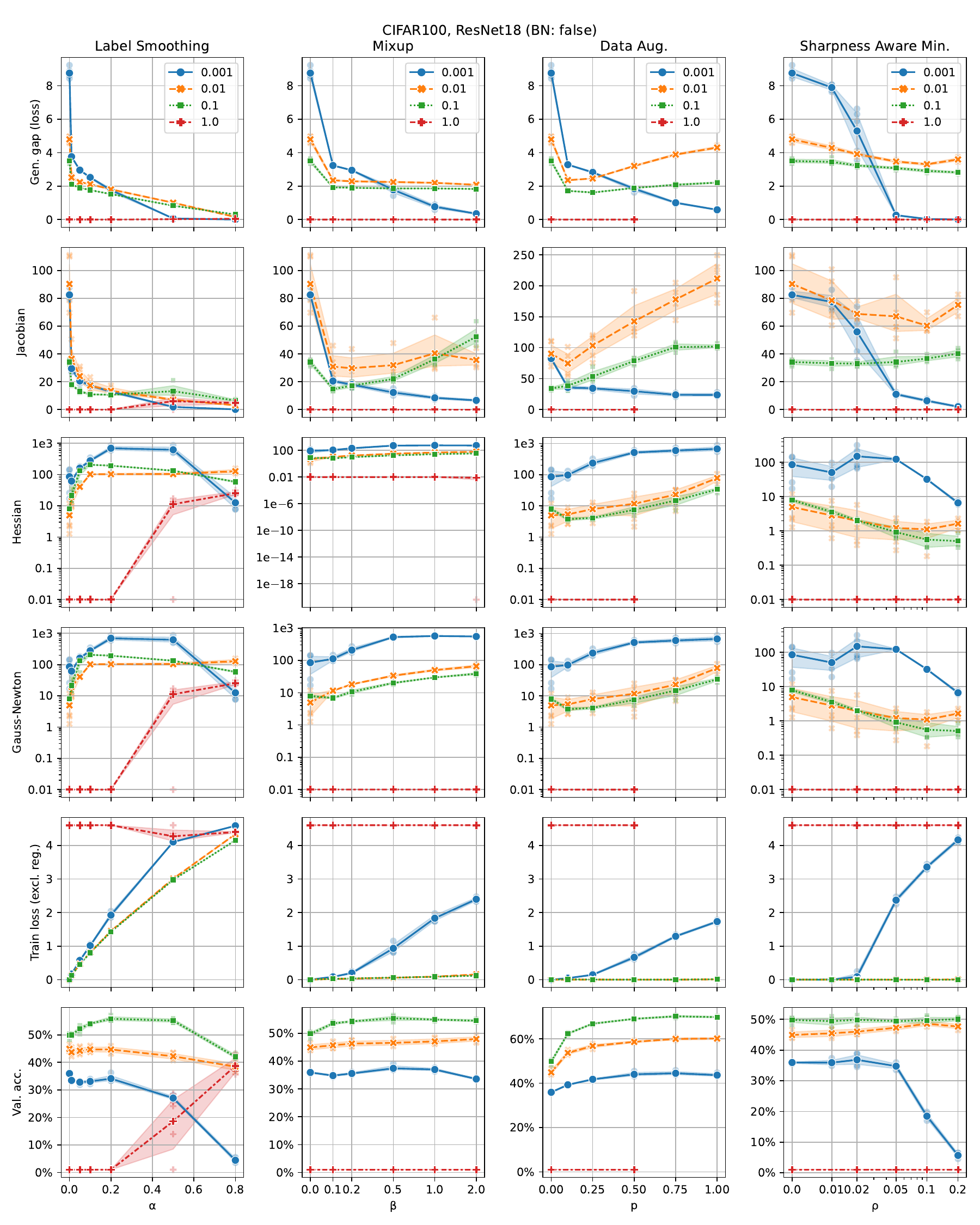}}
\caption{ResNet18 without batch-norm on CIFAR100.}
\end{figure}
\begin{figure}[H]
\centering
\makebox[\textwidth][c]{\includegraphics[height=190mm]{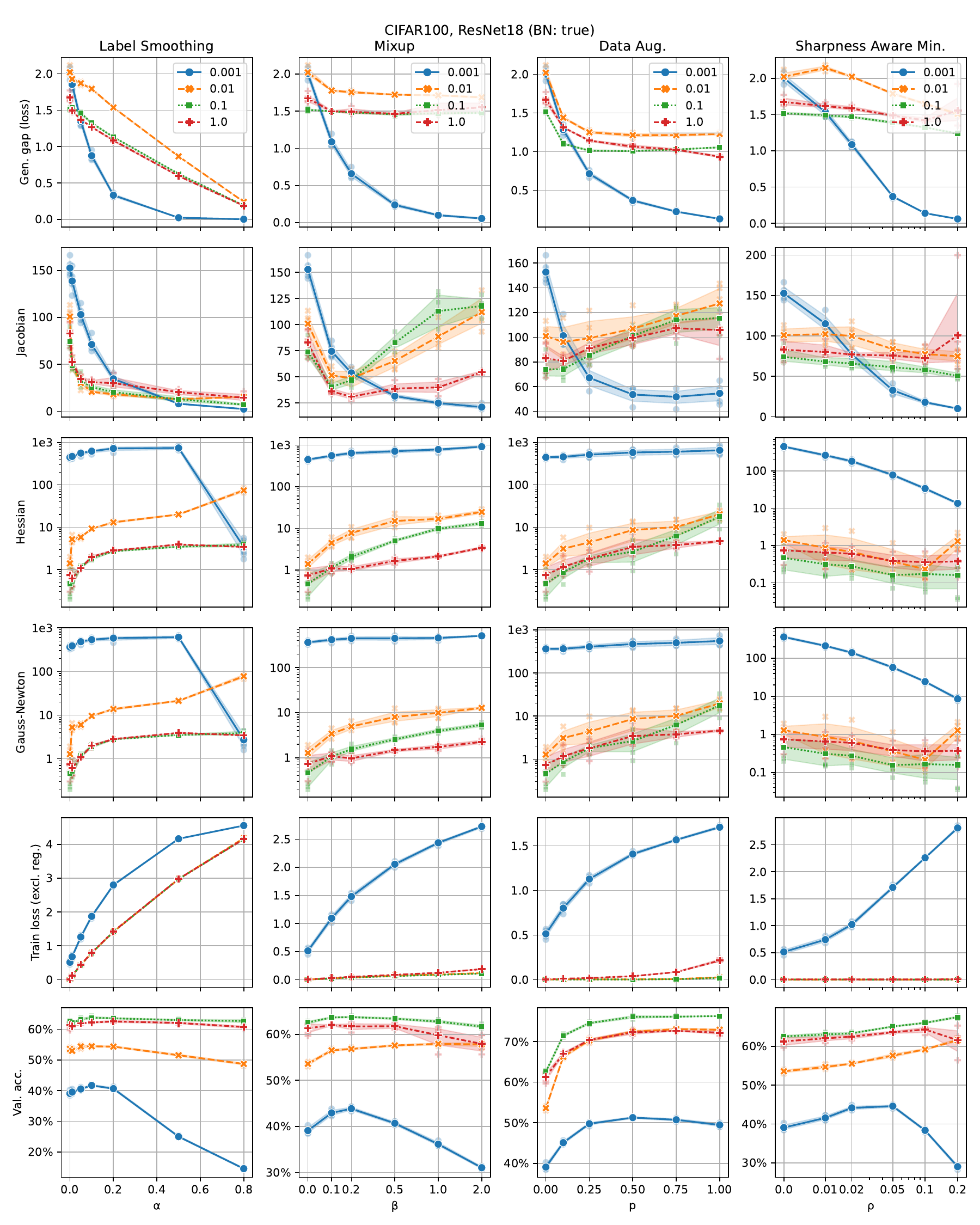}}
\caption{ResNet18 with batch-norm on CIFAR100.}
\end{figure}
\newpage

\section{On mediating factors in the ansatz}\label{app:mediating}

In this section we provide classification and regression experiments to show that the relationship between loss curvature and input-output Jacobians is not always simple.  Recall again Equation \eqref{eq:tangentkernel}:
\begin{equation}
    C\,DF_X\,DF_X^T\,C^T = C\,\bigg(\sum_{l=1}^{L}\bigg(Jf_L\cdots Jf_{l+1}\,Df_l\,Df_l^T\,Jf_{l+1}^T\cdots Jf_L^T\bigg)\bigg)C^T.
\end{equation}
Due to the presence of the parameter derivatives $Df_l$ and the square root $C$ of the Hessian of the cost function, as well as the absence of the first layer Jacobian in Equation \eqref{eq:tangentkernel}, it is not always true that the magnitude of the Hessian and that of the model's input-output Jacobian are correlated.

\subsection{The cost function}\label{sec:vgg_sgd}

We trained a VGG11, again using \url{https://github.com/chengyangfu/pytorch-vgg-cifar10/blob/master/vgg.py} with drouput layers removed and BN layers retained, on CIFAR10 using SGD with a batch size of 128, using differing degrees of label smoothing. Data was standardised so that each RGB channel has zero mean and unit standard deviation over all pixel coordinates and training samples.  In Figure \ref{fig:vgg_sgd1} we plot the sharpness and Jacobian norm every five iterations in the first epoch, observing exactly the same relationship between label smoothing, Jacobian norm and sharpness as predicted in Section \ref{sec:progressivesharpening}. Refer to \texttt{vgg\_sgd.py} in the supplementary material for the code to run these experiments.

\begin{figure}[H]
    \centering
    \setkeys{Gin}{width=\linewidth}
    \begin{subfigure}{0.3\columnwidth}
        \includegraphics{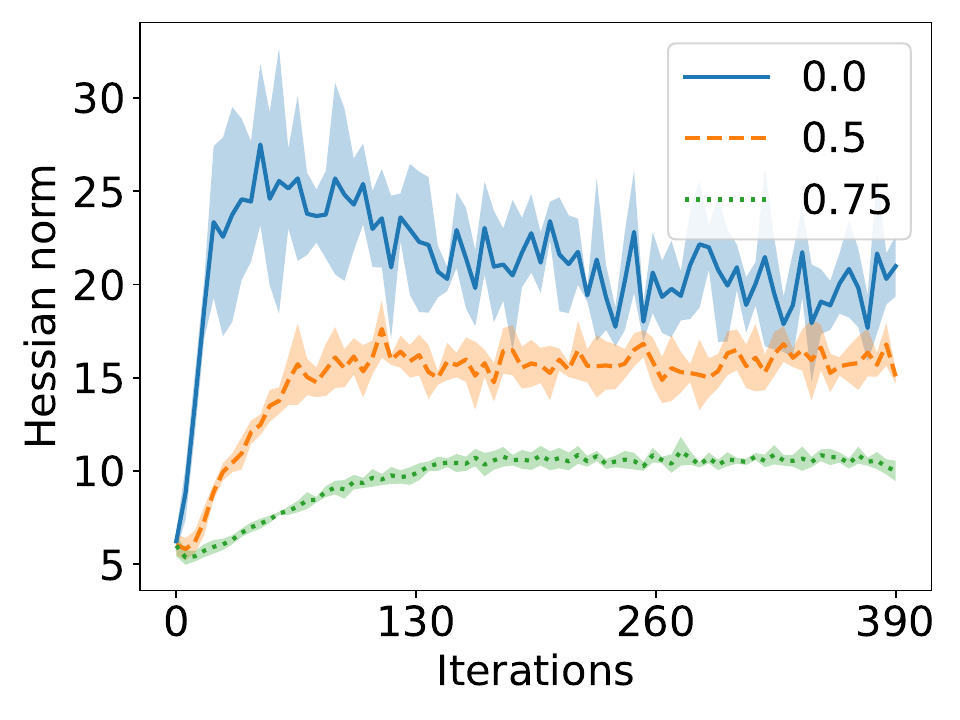}
        \caption{Sharpness}
    \end{subfigure}\hfill
    \begin{subfigure}{0.3\columnwidth}
        \includegraphics{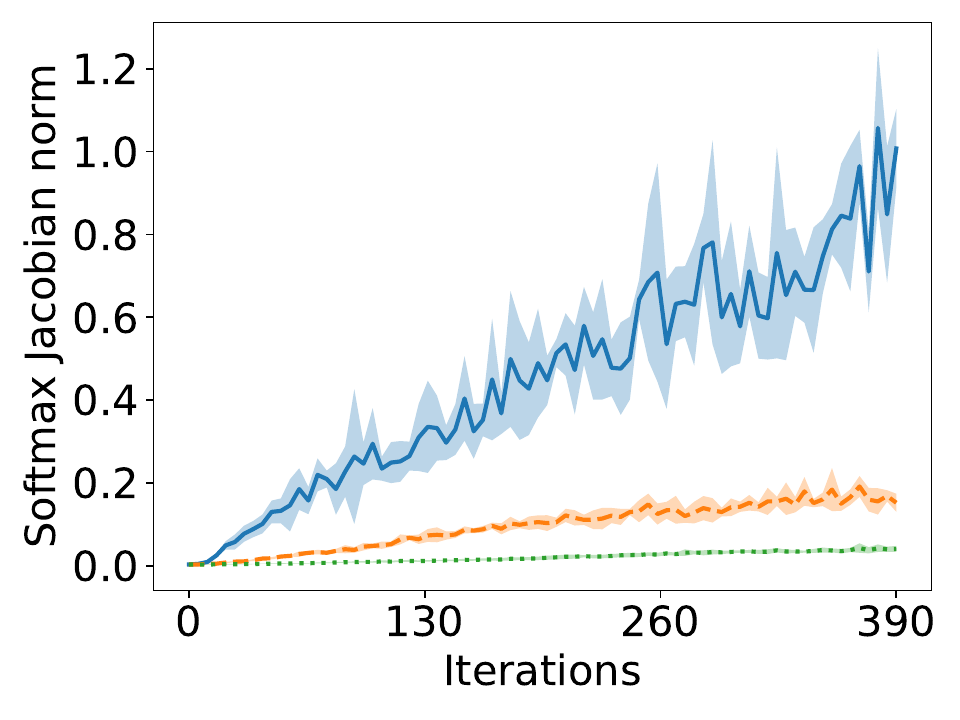}
        \caption{Jacobian norm}
    \end{subfigure}\hfill
    \begin{subfigure}{0.3\columnwidth}
        \includegraphics{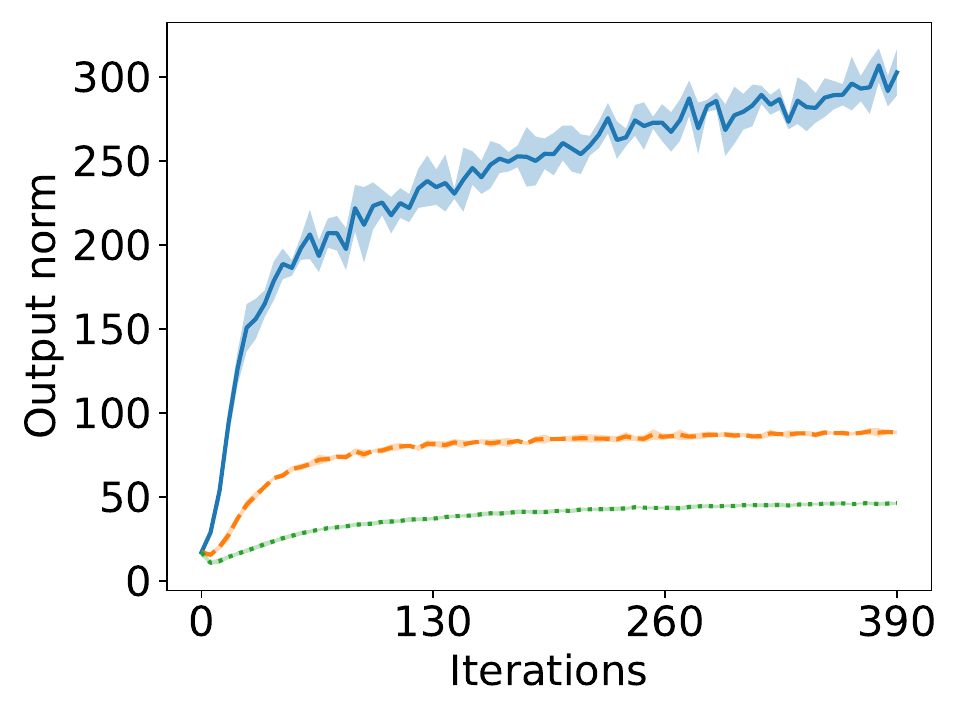}
        \caption{Output norm}
    \end{subfigure}
    \caption{Effect of label smoothing on sharpness, Jacobian norm and output Frobenius norm during the first epoch of SGD. Line style indicates label smoothing. Exactly as in the full batch GD case, the smoother labels are associated with less severe increase of the Jacobian and less severe progressive sharpening.}
    \label{fig:vgg_sgd1}
\end{figure}

Zooming out, however, to the full training run over 90 epochs, Figure \ref{fig:vgg_sgd2} shows that the Hessian ultimately behaves markedly differently.

\begin{figure}[H]
    \centering
    \setkeys{Gin}{width=\linewidth}
    \begin{subfigure}{0.3\columnwidth}
        \includegraphics{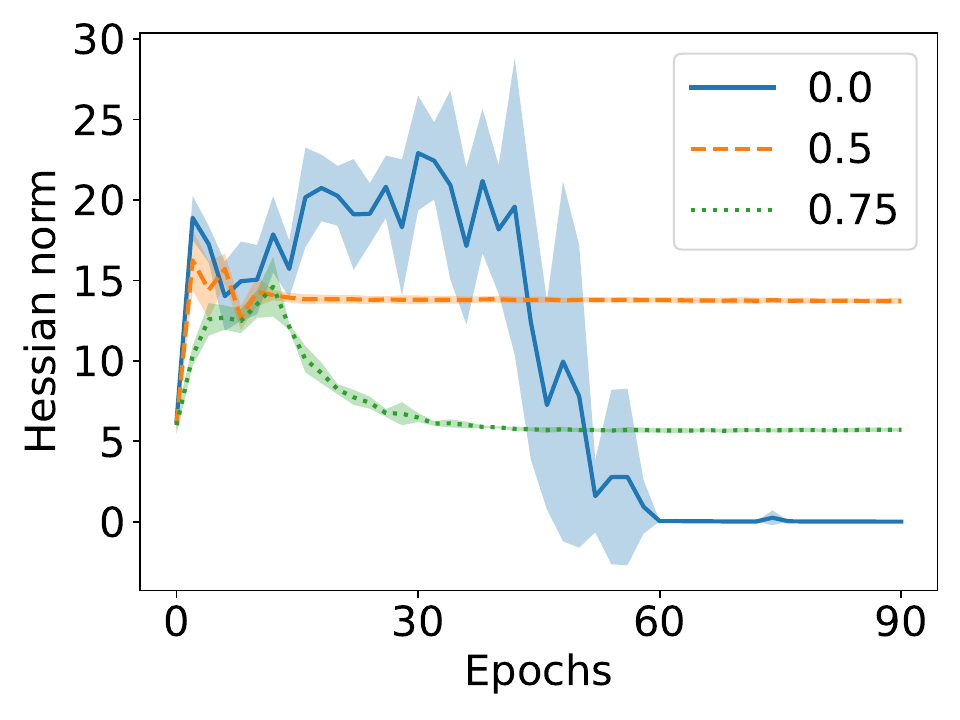}
        \caption{Sharpness}
    \end{subfigure}\hfill
    \begin{subfigure}{0.3\columnwidth}
        \includegraphics{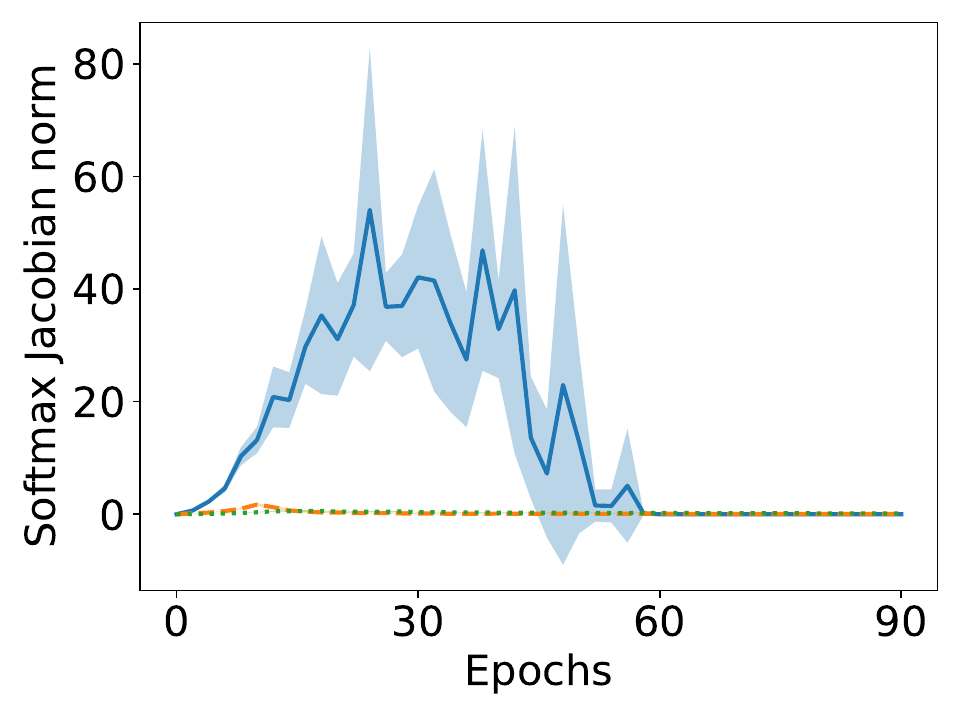}
        \caption{Jacobian norm}
    \end{subfigure}\hfill
    \begin{subfigure}{0.3\columnwidth}
        \includegraphics{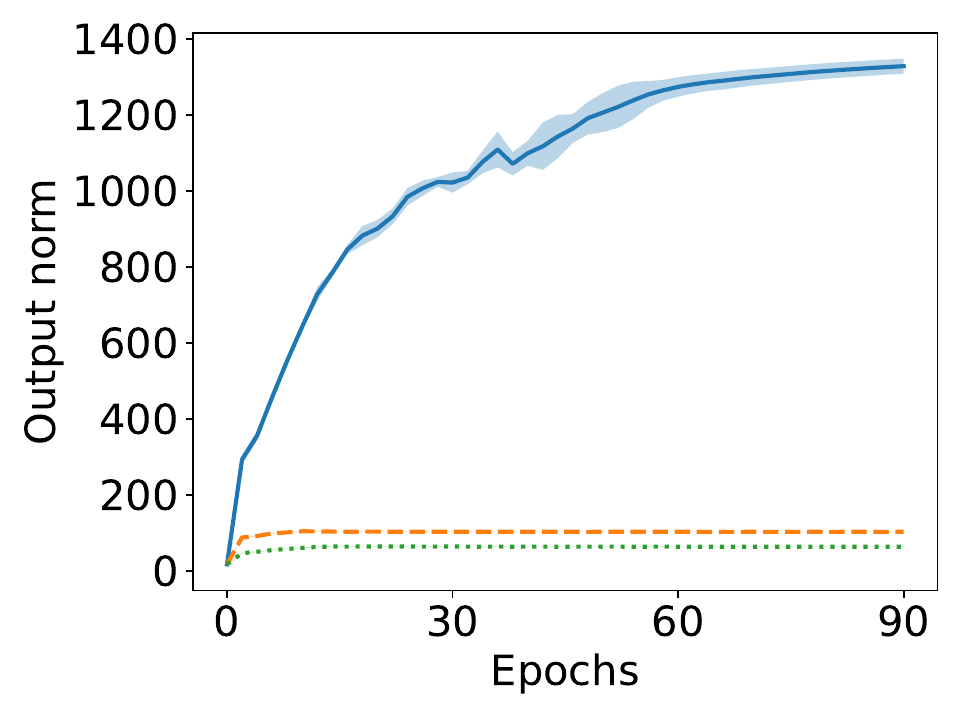}
        \caption{Output norm}
    \end{subfigure}
    \caption{Effect of label smoothing on sharpness, Jacobian norm and output Frobenius norm during 90 epochs of SGD. Line style indicates label smoothing.  When the output norm gets too large, the decay of the $C$ terms in Equation \eqref{eq:tangentkernel} begins to overtake growth in Jacobian norm, ultimately causing the sharpness with unsmoothed labels to collapse to zero, where as sharpness values for smoothed labels plateau at nonzero values.  Note also the the Jacobian norm presented is the Jacobian norm of the softmaxed model: the growth in output norm therefore also drives the Jacobian norm to zero.}
    \label{fig:vgg_sgd2}
\end{figure}
The reason for this is that the norm of the network output grows to infinity in the unsmoothed case (Figure \ref{fig:vgg_sgd2}), and this growth in output corresponds to a vanishing of the $C$ term \cite[Appendix C]{coheneos} corresponding to the cross-entropy cost in Equation \eqref{eq:tangentkernel}.  Note that this growth is also to blame for the eventual collapse of the norm of the Jacobian of the softmaxed model, since the derivative of softmax vanishes at infinity also. This vanishing of the Jacobian should not be interpreted as saying that the \emph{Lipschitz} constant of the model has collapsed (since if this were the case the model would not be able to fit the data), but rather that the maximum Jacobian norm over training samples has ceased to be a good approximation of the Lipschitz constant of the model, due to the Jacobian itself having a large Lipschitz constant (Theorem \ref{thm:jacobianmaximum}).

\subsection{The parameter derivatives}\label{sec:param_deriv}

Recall Theorem \ref{thm:progressivesharpening}. We have already shown that decreasing the distance between labels decreases the severity of Jacobian growth, and, in line with Ansatz \ref{ansatz}, also reduces the severity of progressive sharpening. It seems that one could also manipulate this severity by increasing the distance between input data points, for instance by scaling all data by a constant.
We test this training simple, fully-connected, three layer networks, of width 200, with gradient descent on the first 5000 data points of CIFAR10.  The data is standardised to have componentwise zero mean and unit standard deviation (measured across the whole dataset). The networks are initialised using the uniform distribution on the interval with endpoints $\pm1/\sqrt{in\,features}$ (default PyTorch initialisation) and trained with a learning rate of $0.2$ for 300 iterations using the cross-entropy cost.  Both ReLU and tanh activations are considered. Refer to \texttt{small\_network.py} in the supplementary material for the code we used to run these experiments.

We scale the inputs by factors of 0.5, 1.0 and 1.5, bringing the data closer together at 0.5 and further apart at 1.5.  From Theorem \ref{thm:progressivesharpening}, we anticipate the Jacobian growth to go from most to least severe on these scalings respectively, with sharpness growth behaving similarly according to Ansatz \ref{ansatz}. Surprisingly, we find that while Jacobian growth \emph{is} more severe for the smaller scalings, the sharpness growth is \emph{less}.

The reason for this is the effect of data scaling on the parameter derivatives $Df_l$ in Equation \eqref{eq:tangentkernel}. It is easily computed (cf. \cite{macdonald}) that the parameter derivative $Df_l$ is simply determined by the matrix $f_{l-1}(X)$ of features from the previous layer. In the experimental settings we examined, these matrices are \emph{larger} for the larger scaling values, which pushes the sharpness upwards despite Jacobian norm being smaller. Figures \ref{fig:smallrelu} and \ref{fig:smalltanh} show this phenomenon on ReLU and tanh activated networks respectively.

\begin{figure}[H]
    \centering
    \setkeys{Gin}{width=\linewidth}
    \begin{subfigure}{0.24\columnwidth}
        \includegraphics{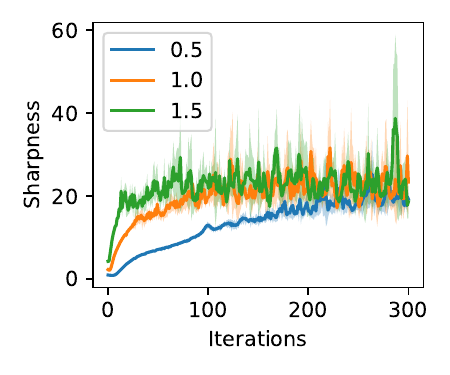}
        \caption{Sharpness}
    \end{subfigure}\hfill
    \begin{subfigure}{0.24\columnwidth}
        \includegraphics{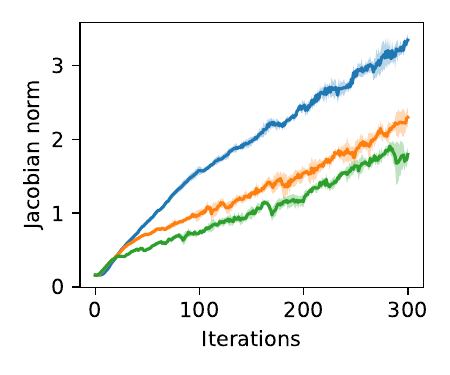}
        \caption{Jacobian norm}
    \end{subfigure}\hfill
    \begin{subfigure}{0.24\columnwidth}
        \includegraphics{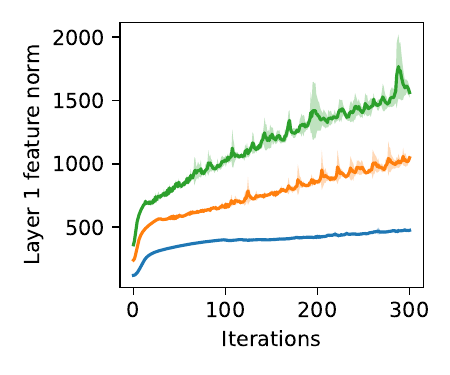}
        \caption{Layer 1 feature norm}
    \end{subfigure}\hfill
    \begin{subfigure}{0.24\columnwidth}
        \includegraphics{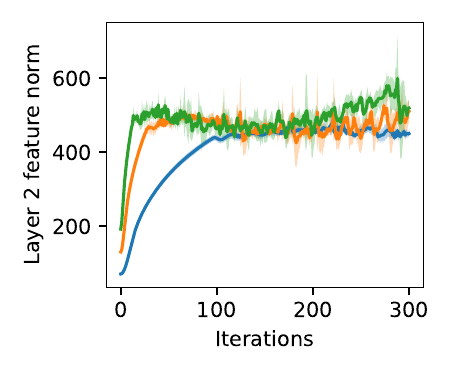}
        \caption{Layer 2 feature norm}
    \end{subfigure}
    \caption{Effect of input scaling on sharpness and Jacobian norm for a ReLU network (5 trials). Line style denotes input scaling factor. Scaling data closer together does cause more severe increase in Jacobian norm, but corresponds to \emph{less} severe increase in sharpness. This is due to the data scaling increasing the spectral norm of the feature maps.}
    \label{fig:smallrelu}
\end{figure}

\begin{figure}[H]
    \centering
    \setkeys{Gin}{width=\linewidth}
    \begin{subfigure}{0.24\columnwidth}
        \includegraphics{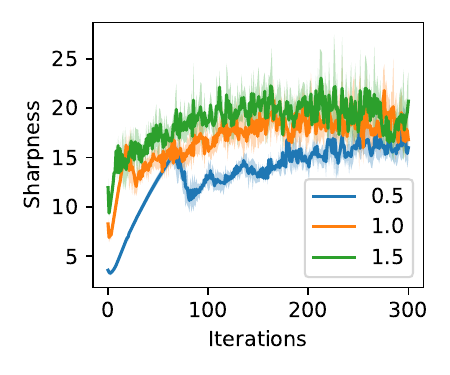}
        \caption{Sharpness}
    \end{subfigure}\hfill
    \begin{subfigure}{0.24\columnwidth}
        \includegraphics{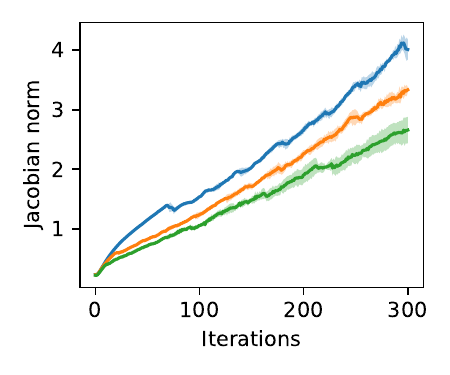}
        \caption{Jacobian norm}
    \end{subfigure}\hfill
    \begin{subfigure}{0.24\columnwidth}
        \includegraphics{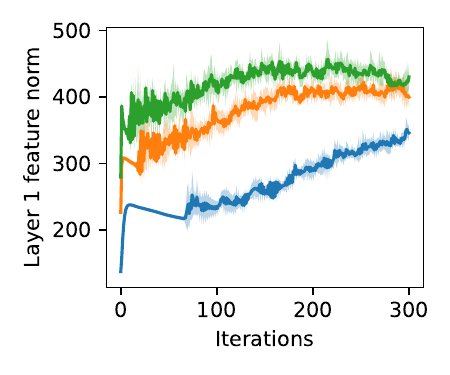}
        \caption{Layer 1 feature norm}
    \end{subfigure}\hfill
    \begin{subfigure}{0.24\columnwidth}
        \includegraphics{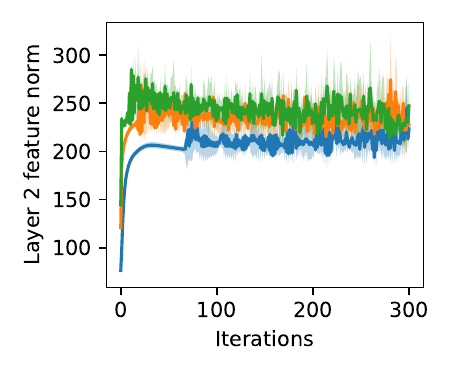}
        \caption{Layer 2 feature norm}
    \end{subfigure}
    \caption{Effect of input scaling on sharpness and Jacobian norm for a tanh network (5 trials). Line style denotes input scaling factor. Scaling data closer together does cause more severe increase in Jacobian norm, but corresponds to \emph{less} severe increase in sharpness. This is due to the data scaling increasing the spectral norm of the feature maps.}
    \label{fig:smalltanh}
\end{figure}

\subsection{The absence of the first layer Jacobian}\label{sec:firstlayerjac}

We replicate the experiments of Section 8 in \cite{sameera}, where it is demonstrated that flatness of minimum is not correlated with model smoothness in a simple regression task, and point to a possible explanation of this within our theory.

In detail, a four layer MLP with either Gaussian or ReLU activations is trained to regress 8 points in $\RB^2$.  Each network is trained from a \emph{high frequency} initialisation (obtained from default PyTorch initialisation on the Gaussian network, and by pretraining on $\sin(6\pi x)$ for the ReLU network) to yield a non-smooth fit of the target data, and a \emph{low frequency} initialisation (a wide Gaussian distribution for the Gaussian-activated network, and the default PyTorch initialisation for the ReLU network) to achieve a smooth fit of the data (see Figure \ref{fig:sameera}). Refer to \texttt{coordinate\_network.py} in the supplementary material for the code we used to run this experiment.

\begin{figure}[H]
    \centering
    \setkeys{Gin}{width=\linewidth}
    \begin{subfigure}{0.24\columnwidth}
        \includegraphics{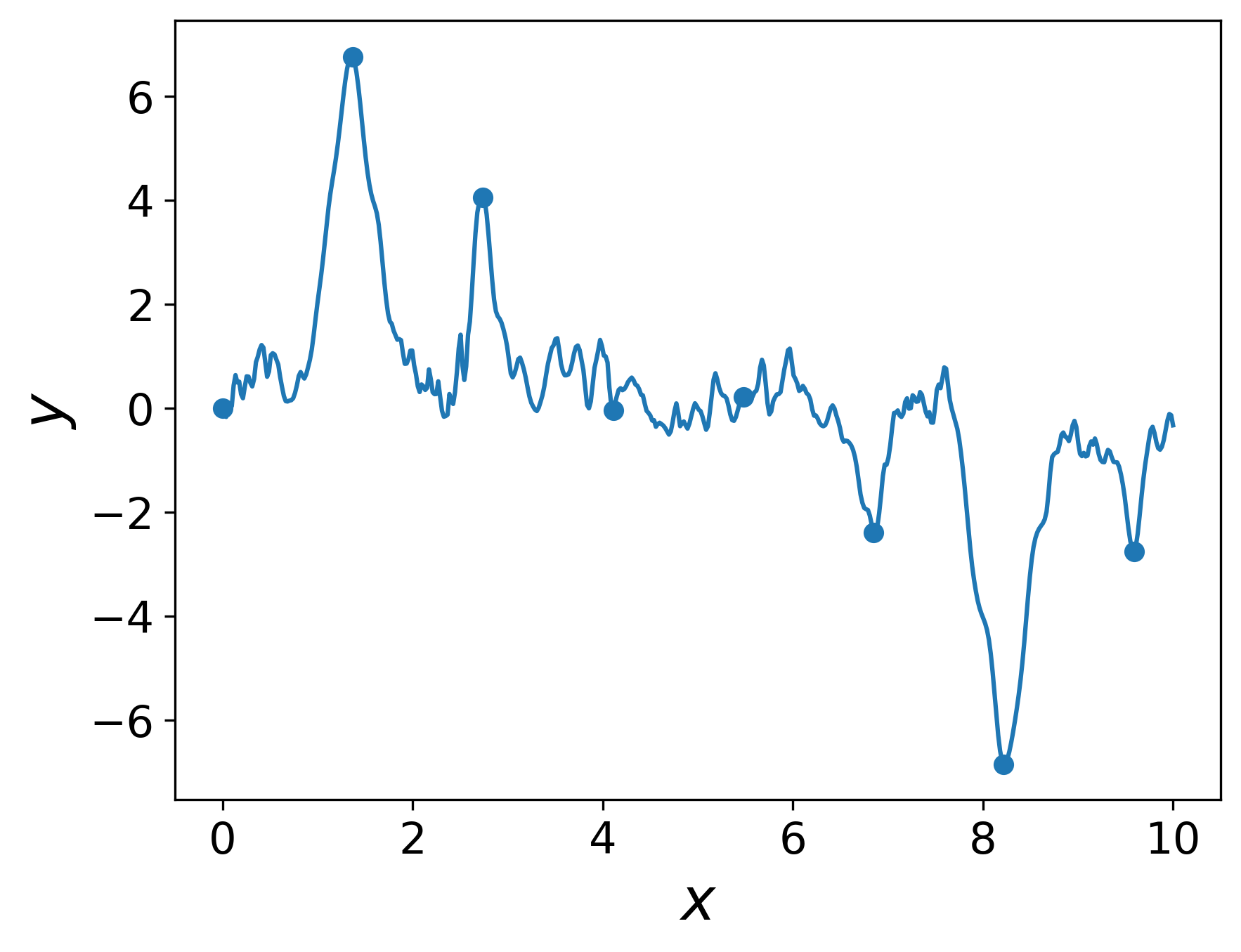}
        \caption{Gaussian, high freq.}
    \end{subfigure}\hfill
    \begin{subfigure}{0.24\columnwidth}
        \includegraphics{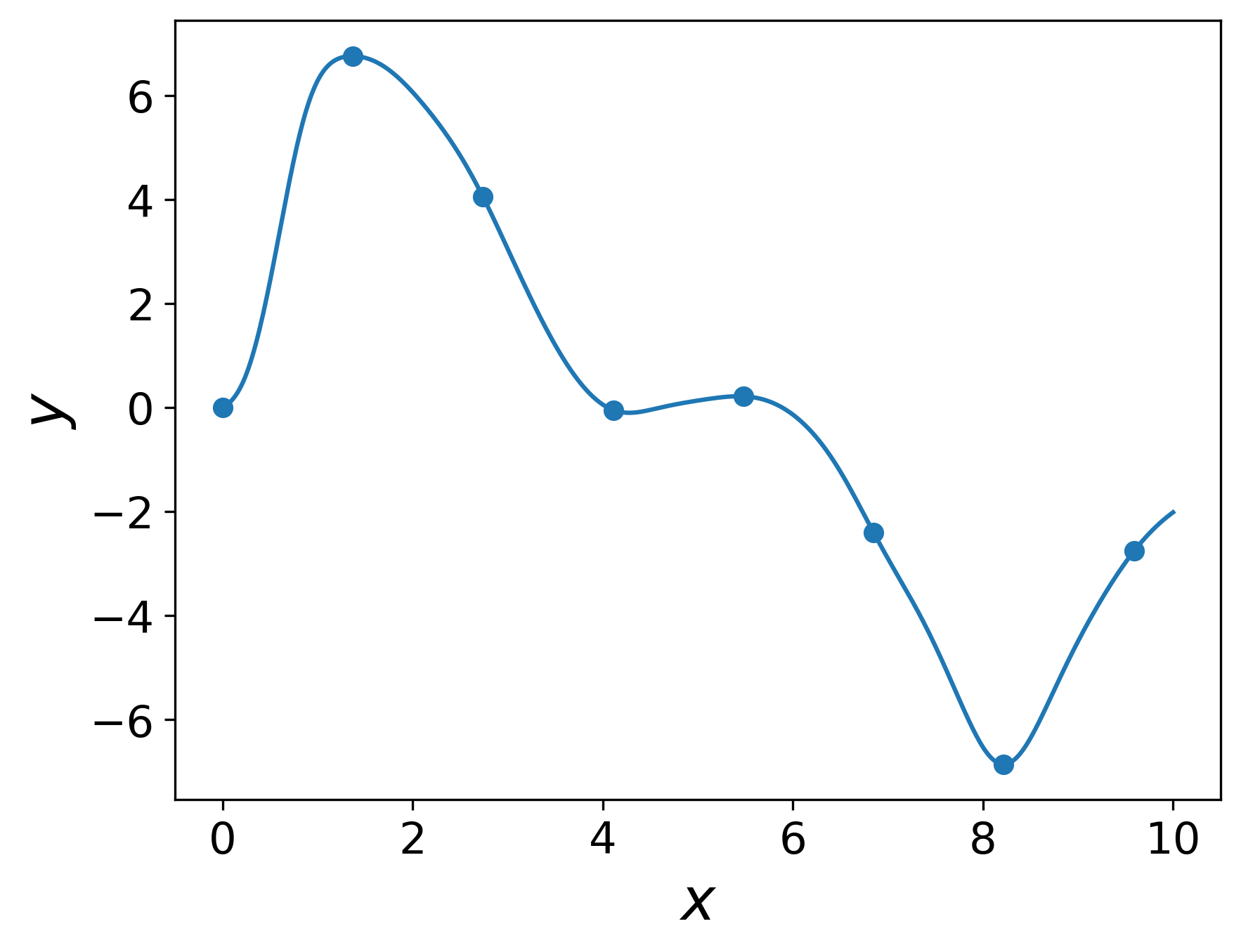}
        \caption{Gaussian, low freq.}
    \end{subfigure}\hfill
    \begin{subfigure}{0.24\columnwidth}
    \includegraphics{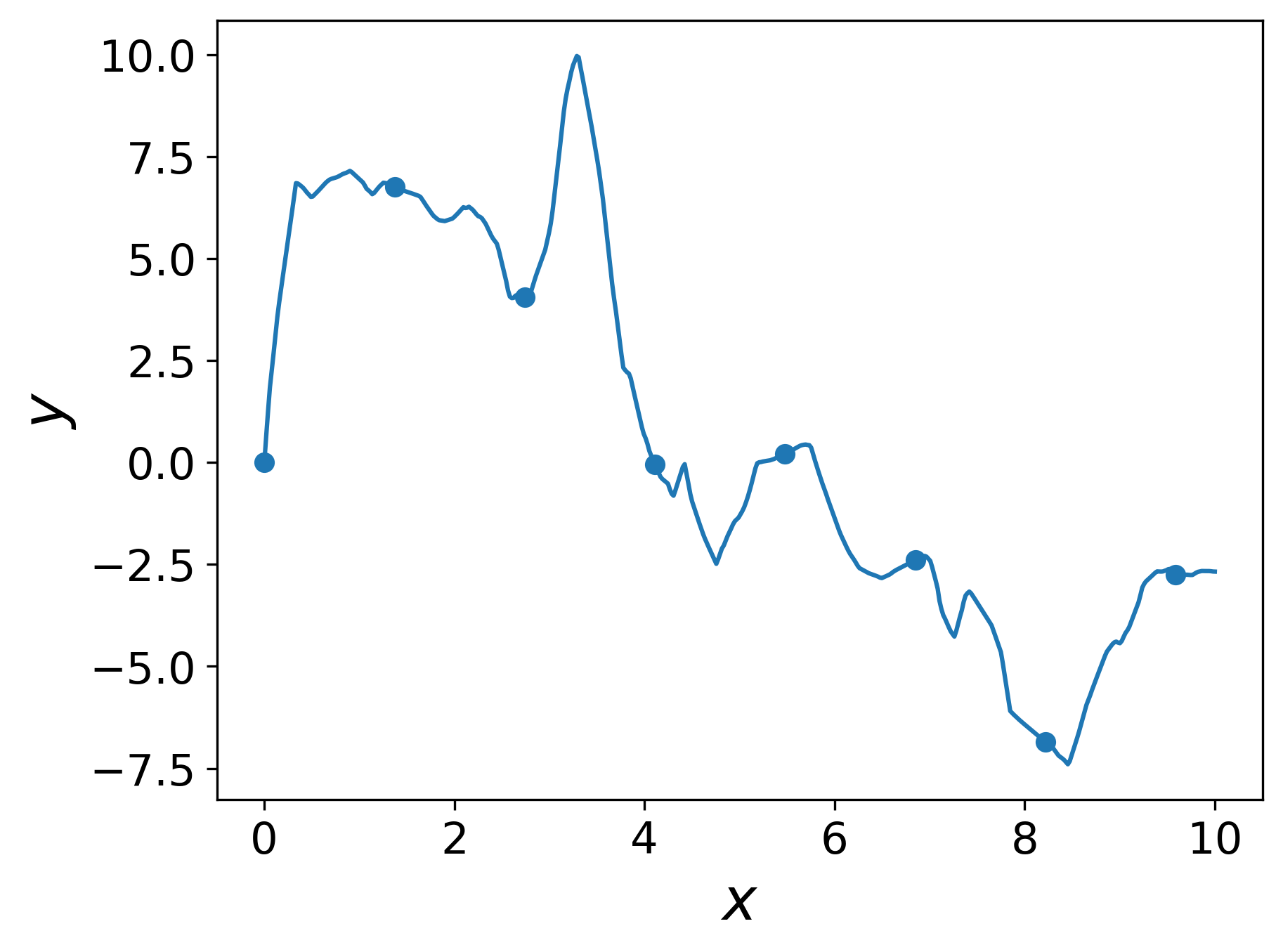}
    \caption{ReLU, high freq.}
    \end{subfigure}
    \begin{subfigure}{0.24\columnwidth}
    \includegraphics{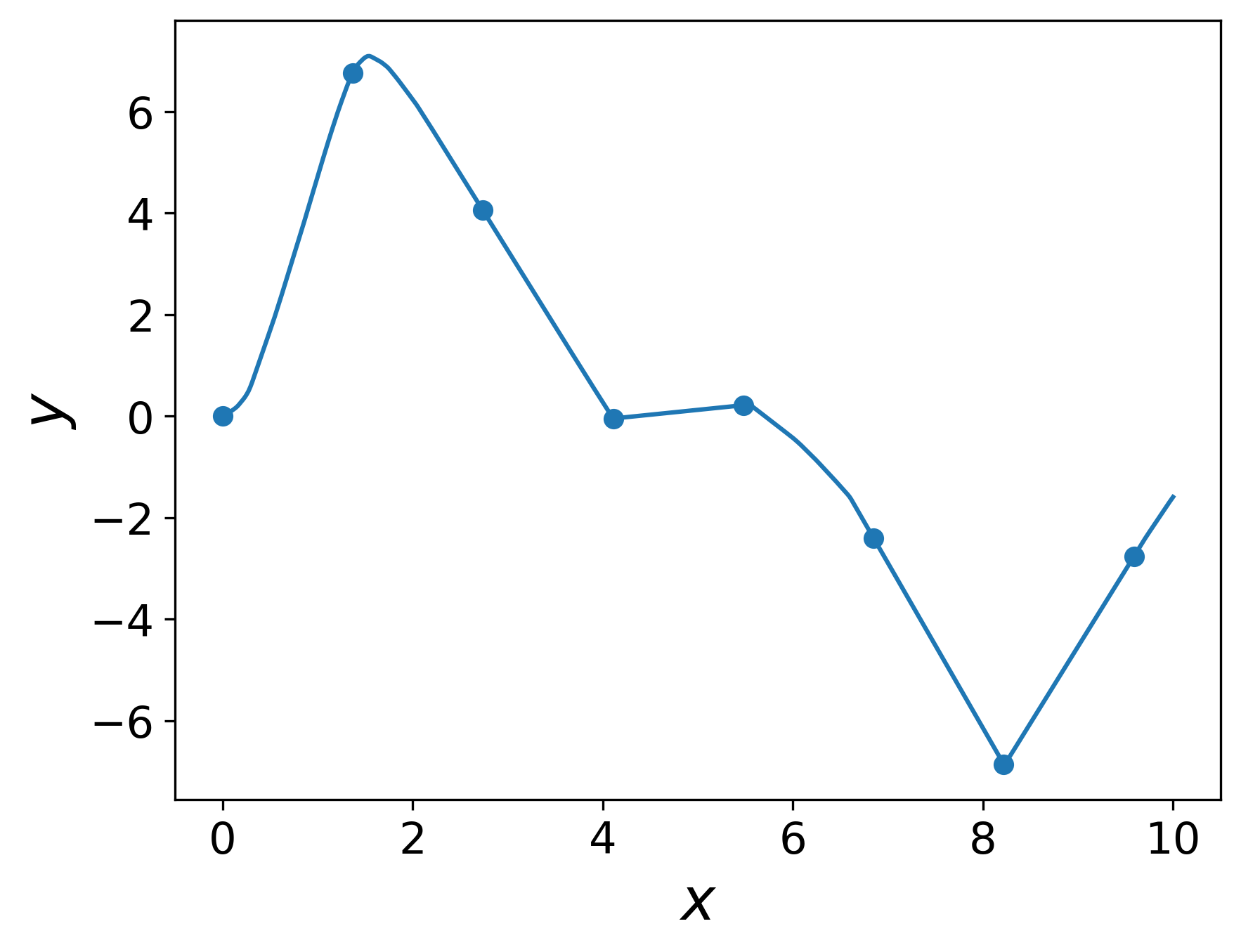}
    \caption{ReLU, low freq.}
    \end{subfigure}
    \caption{Interpolations of 8 points by networks started from high frequency and low frequency initialisations.  The smoother interpolations have lower Jacobian norm over the training data.}
    \label{fig:sameera}
\end{figure}

The models were trained with gradient descent using a learning rate of $1e-4$ and momentum of 0.9, for 10000 epochs for the Gaussian networks and 100000 epochs for the ReLU networks. The pretraining for the high frequency ReLU initialisation was achieved using Adam with a learning rate of $1e-4$ and default PyTorch settings.

\begin{table}[H]
  \caption{Norm values for regression networks, replicating result of \cite{sameera}}
  \label{table:sameera1}
  \centering
  \begin{tabular}{lll}
    \toprule
    & \multicolumn{2}{c}{Gaussian activated network} \\
    Norm   & High freq.     & Low freq. \\
    \midrule
    Jacobian norm            & $326.80\pm182.51$ & $84.07\pm49.30$\\
    sharpness            & $13517.61\pm3871.97$ & $18353.54\pm5751.98$\\
    First layer weight norm            & $9.09\pm0.34$ & $0.56\pm0.02$ \\
    \bottomrule
  \end{tabular}
\end{table}

\begin{table}
  \caption{Norm values for regression networks, replicating result of \cite{sameera}}
  \label{table:sameera2}
  \centering
  \begin{tabular}{lll}
    \toprule
    & \multicolumn{2}{c}{ReLU activated network} \\
    Norm   & High freq.     & Low freq. \\
    \midrule
    Jacobian norm            & $121.59\pm80.79$ & $42.96\pm7.82$\\
    sharpness            & $14211.05\pm4767.26$ & $9922.45\pm3330.37$\\
    First layer weight norm            & $9.61\pm0.28$ & $9.56\pm0.16$ \\
    \bottomrule
  \end{tabular}
\end{table}

Tables \ref{table:sameera1} and \ref{table:sameera2} record the means and standard deviations of loss Hessian and Jacobian norms of the Gaussian and ReLU networks over 10 trials.  As in \cite{sameera}, we find that while the smoothly interpolating ReLU models on average land in flatter minima than the non-smoothly interpolating models, the opposite is true for the Gaussian networks.

Keeping in mind that the high variances in these numbers make it difficult to come to a conclusion about the trend, we propose a possible explanation for why the Gaussian activated networks do not appear to behave according to Ansatz \ref{ansatz}.  From Equation \eqref{eq:tangentkernel}, the Hessian cannot be expected to relate to the Jacobian of the first layer. Note now the discrepancy between the first layer weight norms in the low frequency versus high frequency fits with the Gaussian networks, which does not occur for the ReLU networks. We believe this to be the cause of the larger Hessian for the low-frequency Gaussian models: with such a small weight matrix in the first layer, all higher layers must be relied upon to fit the data, making their Jacobians higher than they would otherwise need to be. These larger Jacobians would then push the Hessian magnitude higher by Equation \eqref{eq:tangentkernel}.

\section{Relationship to the parameter-output Jacobian}\label{app:leeetal}

In \cite{leeetal}, the \emph{parameter-output} Jacobian is advocated as the mediating link between loss sharpness and generalisation. Extensive experiments are provided to support this claim. In particular, it is shown that regularising by the Frobenius norm of the parameter-output Jacobian during training is sufficient to alleviate the poor generalisation of (toy) networks trained using a small learning rate. However, no theoretical reasons are given for why the parameter-output Jacobian should be related to generalisation.

We expect that our Theorem \ref{thm:generalisationbound} could provide the theoretical link missing in \cite{leeetal}. Indeed, the term in brackets in our Equation \eqref{eq:tangentkernel} is precisely the Gram matrix of the parameter-output Jacobian. Thus, our Ansatz \ref{ansatz} states that regularisation of the parameter-output Jacobian will also regularise the input-output Jacobian. If this is true, then we should see similar improvements in network performance when training with a small learning rate using an \emph{input-output} Jacobian regulariser to those seen in \cite{leeetal} when training with the \emph{parameter-output} Jacobian regulariser. Moreover, one should see that these improvements are correlated to input-output Jacobian norm in all cases. Figure \ref{fig:leeetal} demonstrates that this is indeed the case, using the same settings on data and models as were used for the regulariser experiments in Section \ref{sec:generalisation}.

\begin{figure}
\centering
\begin{subfigure}[b]{1.0\textwidth}
\includegraphics[scale = 0.45]{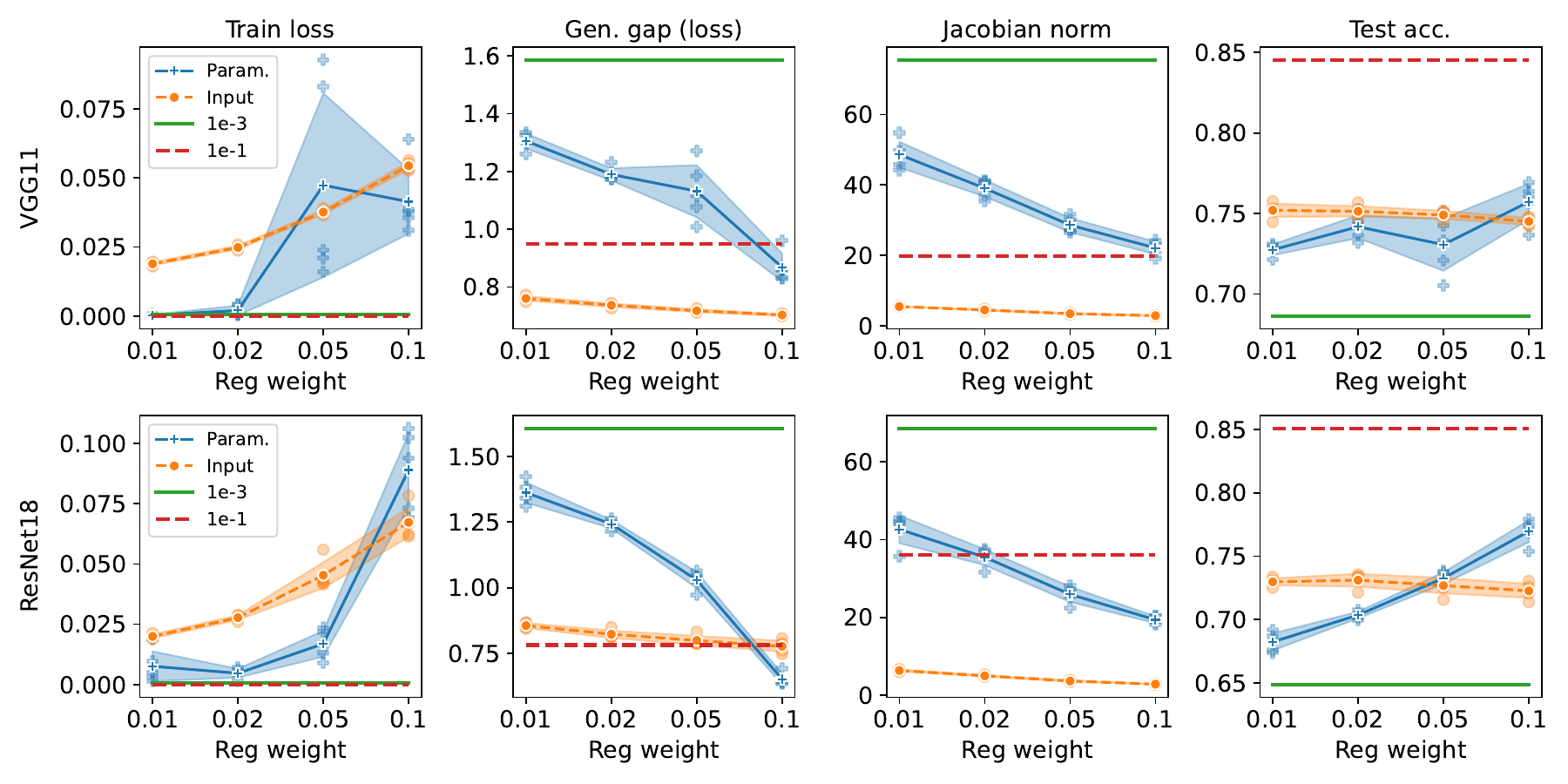}
    \caption{CIFAR10}
\end{subfigure}

\begin{subfigure}[b]{1.0\textwidth}
    \includegraphics[scale = 0.45]{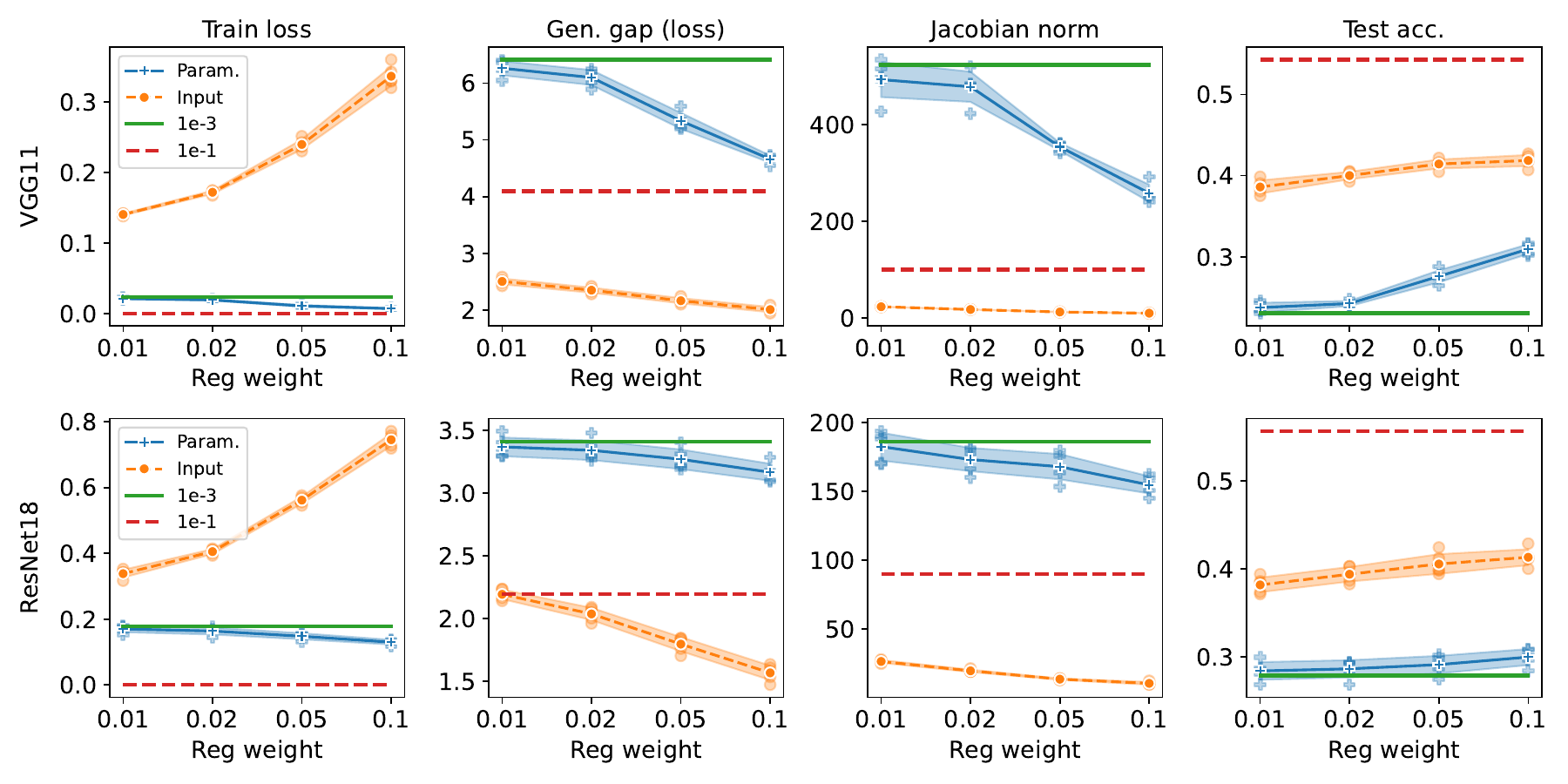}
    \caption{CIFAR100} 
\end{subfigure}
\caption{All models trained for 90 epochs. $x$-axis denotes regularisation coefficient, and ``Jacobian norm" refers to spectral norm. ``ref 1e-1" and ``ref 1e-3'' are means over 5 trials of unregularised networks (regularisation coefficient set to zero) trained at learning rates of 1e-1 and 1e-3 respectively, shown for reference. The unregularised models trained with learning rate 1e-1 had their learning rate decayed by a factor of 10 every 30 epochs, so that their final learning rates were 1e-3. Learning rate scheduling was not used for any other trials.``Param." is trained with the parameter-output regulariser of Lee et al, while ``Input" is trained by regularising the minibatch Jacobian Frobenius norm using a similar approximation as Lee et al, namely $\|J\|_{F}\approx \|u^T J\|_2$ for $u$ sampled uniformly from the $NC - 1$-sphere, where $N$ is the batch size and $C$ is the number of channels. 5 trials shown for the regularised trials, as well as means and 1 standard deviation shaded. Note the consistent downwards trend of both Jacobian norm and generalisation gap in all cases, as is consistent with our theory. Small Jacobian norm with large generalisation gap suggest, according to our theory, that the Jacobian of the model has ceased to be a good approximation of the Lipschitz constant of the model.}\label{fig:leeetal}
\end{figure}

\end{document}